\documentclass{article}

\usepackage[preprint]{neurips_2026}

\usepackage[utf8]{inputenc} % allow utf-8 input
\usepackage[T1]{fontenc}    % use 8-bit T1 fonts
\usepackage{hyperref}       % hyperlinks
\usepackage{url}            % simple URL typesetting
\usepackage{booktabs}       % professional-quality tables
\usepackage{amsfonts}       % blackboard math symbols
\usepackage{nicefrac}       % compact symbols for 1/2, etc.
\usepackage{microtype}      % microtypography
\usepackage{xcolor}         % colors
\usepackage{amsmath}
\usepackage{amssymb}
\usepackage{amsthm}
\usepackage{amsfonts}
\usepackage{mathrsfs}
\usepackage{mathtools}
\usepackage[ruled,vlined]{algorithm2e}
\usepackage{subcaption}
\usepackage[rightcaption]{sidecap}

\numberwithin{equation}{section} 
\newtheorem{theorem}{Theorem}[section]
\newtheorem{definition}{Definition}[section]
\newtheorem{lemma}{Lemma}[section]

\DeclareMathOperator{\Ad}{\mathcal{A}{\it d}}
\DeclareMathOperator{\GL}{GL}

\newcommand{\g}{\mathfrak{g}}
\newcommand{\gl}{\mathfrak{gl}}

\newcommand{\R}{\mathbb{R}}

\newcommand{\Lind}{\mathscr{L}}
\newcommand{\Param}{\mathcal{P}}
\newcommand{\Lmat}{\mathcal{L}}
\title{Planning Neural Dynamics with Lie Group Embedding\\ through Supervised Projective Manifold Learning}
\author{%
  \setcounter{footnote}{2}
  Tianwei Wang\textsuperscript{$\dagger$}\thanks{Bio-inspired Computing and Machine Learning (BCML) Lab} \\
  School of Informatics\\
  University of Edinburgh\\
  \And
  Bryan Chen\\
  School of Mathematics \\
  University of Edinburgh\\
  \And
  Qian Zuo\\
  School of Informatics \\
  University of Edinburgh\\
  \And
  Qiyue Xia\\
  School of Informatics \\
  University of Edinburgh\\
  \And
  Xin Li\\
  School of Computer Science\\
  Beijing Institute of Technology\\
  \And
  \setcounter{footnote}{1}
  Wei Pang\textsuperscript{$\ddagger$}\thanks{Corresponding to \texttt{w.pang@hw.ac.uk} and \texttt{t.wang-110@sms.ed.ac.uk}}\\
  School of MACS\\
  Heriot-Watt University\\
}

\begin{document}
\maketitle
\begin{abstract}
We propose Lie group embedded dynamical neural networks (LieEDNN) and the corresponding learning algorithms based on gradient descent and metric projection on smooth manifold, where we treat Lie group as an intrinsic representation for continuous symmetry of manifold geometry. Thereby we achieve learnable and stable dynamics on the underlying manifold for general Lie group, and we are able to utilize the powerful representation capability of Lie group such as $\mathrm{SO}(3)$ and $\mathrm{SE}(3)$ to solve real world engineering problems in areas such as robotics, graphics, and control. Two core challenges are: (i) General Lie groups are incompatible with addition arithmetic, which is necessary for neural network interactions. (ii) The dynamics evolve in the nonlinear representation space of special algebra rather than the normal Euclidean space, which violates the paradigm of common neural ODEs. To address these two challenges, we firstly introduce adjoint Lie group action on the Lie algebra, which induces a linear mapping and transfer to the block-wise structure of weight matrices, such that addition could operate on the Lie algebra as a vector space. Then we parameterize the Lie algebra and the adjoint action as linear transformation so that the architecture is aligned with neural network perceptrons. Explicitly, this embedding appears as block-wise manifold constraints on weights, and we develop algorithms to learn the equilibrium with stability guarantees of the temporal neural network dynamics. Experiments are implemented on a specific Lie group $\mathrm{SE}(3)$, with the application scenario of telescopic manipulators.
\end{abstract}

\section{Introduction}
Neural dynamical systems formulate computation as the temporal evolution of a continuous-time system, where the desired output is encoded as a stable equilibrium of the dynamics. This viewpoint connects recurrent neural networks (RNN) \cite{RNN}, continuous Hopfield networks (CHNN) \cite{Hopfield:2007}, residual networks (ResNets) \cite{ResNets}, and neural ODEs \cite{neuralODE}, and it is particularly suitable for tasks where stability, convergence, and trajectory generation are central requirements. However, most existing neural dynamical architectures are formulated in Euclidean spaces, while many physical systems are governed by intrinsic geometric structures. For example, robotic motion, rigid-body transformation, camera pose, and articulated mechanisms are more naturally described by Lie groups \cite{LieOrigin,LieTheory} such as \(\mathrm{SO}(3)\) and \(\mathrm{SE}(3)\), rather than by plain Euclidean coordinates \cite{Robot1, Robot2}.

The difficulty of algebraic embedding is that Lie group generally represents a nonlinear manifold and does not admit closed addition required by the weighted summation operation in standard neural network paradigms \cite{lecun2015deep,Deeplearning,BackPropagation}. Directly placing neural states or weights on a Lie group therefore conflicts with the additive interaction in neural networks. LieEDNN addresses this issue by transferring the dynamics from the Lie group to its Lie algebra, where addition is well defined since Lie algebra is a vector space \cite{LieTheory}, and by using the adjoint action of Def. \ref{def:adjoint action} to induce linear representations of Lie group $G$ elements on the Lie algebra $\mathfrak{g}$. Moreover, with retraction mapping $e^X$, where $X\in\mathfrak{g}$, we can learn the desired dynamics on the Lie group manifold \cite{IntroductionToSmoothManifolds, LieTheory}. In this way, Lie group elements appear as structured recurrent weight blocks, while the dynamics of neural states remain compatible with ordinary neural network analysis including matrix multiplication, weighted sum, and learning rules.

In this paper, we develop Lie group embedded dynamical neural networks (LieEDNN) with supervised equilibrium learning and projective manifold constraints for structure preservation. The target state is learned as a stable equilibrium, and weight periodic projection in Alg.~\ref{alg.1} preserves the induced Lie group block-wise structure in the weight matrix during training. The framework extends quaternion-valued Hopfield-structured neural dynamics \cite{qshnn} from a specific hypercomplex non-commutative algebra to a more general class of objects, namely matrix Lie group. We instantiate the method on \(\mathrm{SE}(3)\) and demonstrate its application for telescopic manipulator planning in Sec.~\ref{sec:experiment}, where rotational and translational coupling can be encoded directly through neural connections.

\section{Background}
\label{backgrounds}

\paragraph{Matrix Lie group and manifold}Manifold is a geometric object $\mathcal{M}$ that looks locally like a piece of $\mathbb{R}^n$. A smooth manifold is a manifold $\mathcal{M}$ together with a collection of local coordinates covering $\mathcal{M}$ such that the change of coordinates map between two overlapping coordinate systems is smooth, called a diffeomorphism \cite{IntroductionToSmoothManifolds,DifferentialManifolds}. A Lie group is a smooth manifold equipped with a group structure such that the operations of group multiplication and group inversion are smooth \cite{LieTheory}. As the terminology suggests, a matrix Lie group is a Lie group, e.g. $\mathrm{SU}(2)$, $\mathrm{GL}(n;\mathbb{C})$. In this paper, when we refer to a Lie group, we also mean its underlying smooth manifold. Def. \ref{def:LieGroup} presents definition of Lie group. For matrix Lie group, group product is matrix multiplication.

\begin{definition}[Lie Group \cite{LieTheory,LieOrigin}]A Lie group is a smooth manifold G which also admits group structure and the group product $G\times G\rightarrow G$ and the inverse map $G\rightarrow G$ are smooth.
\label{def:LieGroup}
\end{definition}
There are also several specific examples of matrix Lie group and corresponding Lie algebra we use in this paper defined below in Def. \ref{def:SO(3)andO(3)andso(3)ando(3)} and Def. \ref{def:se(3) and SE(3)}. For the orthogonal group $O(3)$, we have the column vectors are orthogonal to each other. Let $R=[R_1, R_2, R_3]\in O(3)$, where $R\in\mathbb{R}^{3\times 3}, R_{1,2,3}\in\mathbb{R}^3$, then $R_i\cdot R_j=\delta_{ij}$. This condition is equivalent to $R^{\mathrm{T}}=R^{-1}$. Calculate the determinant of both sides, $\mathrm{det}(R)=\mathrm{det}(R^{\mathrm{T}})=\mathrm{det}(R)^{-1}$, then $\mathrm{det}({R})^2=1$. For $\mathrm{det}(R)=1$, we call it special orthogonal matrices, which represent rotation transformation in 3-dimensional space. For $\mathrm{det}(R)=-1$, it represents rotation transformation with a reflection compound, which transforms left hand to right hand, keeping the geometric details the same except for chirality.
\begin{definition}[Orthogonal Group and Special Orthogonal Group \cite{groupsandsymmetry}] We define:
\begin{equation}
    \mathrm{O}(3):=\left\{\left.R\in\mathbb{R}^{3\times 3}\right|R^\mathrm{T}=R^{-1}\right\},\quad
    \mathrm{SO}(3):=\left\{\left.R\in\mathbb{R}^{3\times 3}\right|R\in\mathrm{O}(3),\mathrm{det(R)=1}\right\}
\end{equation}
with matrix multiplication as group product. The Lie algebras of $\mathrm{O}(3)$ and $\mathrm{SO}(3)$ are the same set:
\begin{equation}
    {\mathfrak{o}(3)}=\mathcal{T}_e\mathrm{O}(3)\cong{\mathfrak{so}(3)}=\left\{\left.X\in\mathbb{R}^{3\times 3}\right|X^\mathrm{T}=-X\right\}=\mathcal{T}_e\mathrm{SO}(3)
    \label{eq:so(3)}
\end{equation}
with matrix commutator $[X,Y]:=XY-YX$ defined as Lie bracket operation on the Lie algebra. 
\label{def:SO(3)andO(3)andso(3)ando(3)}
\end{definition}

\begin{definition}[Skew-symmetric Representation \cite{groupsandsymmetry}]A skew-symmetric matrix is defined by $X^\mathrm{T}=-X,\ X\in\mathbb{R}^{3\times 3}$, with parameterization $\boldsymbol{t}\in\mathbb{R}^3$ and notation $X=[\boldsymbol{t}]^\wedge\in\mathbb{R}^{3\times 3}$. 
\begin{equation}
[\mathbf{t}]^\wedge:=
\left[\begin{matrix}
0 & -t_3 & t_2\\
t_3 & 0 & -t_1\\
-t_2 & t_1 & 0
\end{matrix}\right]\in\mathfrak{so}(3),\quad \boldsymbol{t}=\left[\begin{matrix}
t_1  \\
t_2  \\
t_3 
\end{matrix}\right]\in\mathbb{R}^3
\label{eq:ss}
\end{equation}
The explicit form of skew-symmetric representation and parameterization vector is Eq. (\ref{eq:ss}). And the linear mapping of $[\mathbf{t}]^\wedge$ on arbitrary vector $\boldsymbol{v}\in\mathbb{R}^3$ equals to cross product $[\mathbf{t}]^\wedge\boldsymbol{v}=\boldsymbol{t}\times \boldsymbol{v}$.
\label{def:Skew-symmetric Representation}
\end{definition}

\begin{definition}[Special Euclidean group $\mathrm{SE}(3)$ and $\mathfrak{se}(3)$ \cite{groupsandsymmetry}]\hspace{-0.1cm}They are defined by
\begin{equation}
    \mathrm{SE}(3):=\left\{\left.\left[\begin{matrix}
R & \boldsymbol{p}\\
O_{1\times 3} & 1 
\end{matrix}\right]\right|R\in\mathrm{SO}(3),\boldsymbol{p}\in\mathbb{R}^3\right\},\quad
{\mathfrak{se}(3)}=\left\{\left.\left[\begin{matrix}
[\boldsymbol{\omega}]^\wedge & \boldsymbol{v}\\
O_{1\times 3} & 0 
\end{matrix}\right]\right|\boldsymbol{\omega},\boldsymbol{v}\in\mathbb{R}^3\right\}
\end{equation}
\label{def:se(3) and SE(3)}with matrix multiplication as group product and commutator $[X,Y]:=XY-YX$ as Lie bracket operation. $\mathfrak{se}(3)$ expands the tangent space of $\mathrm{SE}(3)$ at identity element $\mathbb{I}_3$ \cite{LieTheory,LieOrigin}.
\end{definition}
Special Euclidean group represents all isometric transformations in 3-dimensional space including rotation and translation \cite{LieTheory}. Its Lie algebra quantifies the angular and translational momentum. It is used to model the motion of joints chain and reference coordinates of telescopic manipulators. Our major aim of this paper is to plan temporal neural dynamics on $\mathrm{SE}(3)$ to embody the framework of general Lie group embedding. Later we deduce another matrix group in Eq. (\ref{eq:L[Ad]}). The following Thm. \ref{thm:closed subgroup theorem} gives how to judge a matrix group is a Lie group, known as the closed subgroup theorem.
\begin{theorem}[Closed Subgroup \cite{ClosedSubgroupTheorem2,ClosedSubgroupTheoremCartan1930}]
    \label{thm:closed subgroup theorem}Suppose $G$ is a Lie group, $H$ is a subgroup of $G$. If $H$ is topologically closed in $G$, then $H$ is also a Lie group with embedding smooth manifold structure. Specifically, this holds for matrix Lie group.
\end{theorem}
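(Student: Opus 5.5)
The plan is the classical Cartan--von Neumann argument, carried out for a closed subgroup $H$ of a matrix Lie group $G\subseteq\GL(n;\R)$. Closed subgroups of a closed subgroup of $\GL(n;\R)$ are closed in $\GL(n;\R)$, so I may take $G=\GL(n;\R)$ and use the matrix exponential $e^X$. The candidate Lie algebra is
\[
\mathfrak{h}:=\{X\in\gl(n;\R)\mid e^{tX}\in H \text{ for all } t\in\R\}.
\]
The whole proof consists of showing that $\mathfrak{h}$ is a linear subspace and that $\exp$ restricted to $\mathfrak{h}$ gives a chart of $H$ near the identity. Left translations then move this chart to every point of $H$, which yields the embedded submanifold structure. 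Multiplication and inversion on $H$ are restrictions of smooth maps on $G$ to an embedded submanifold, so they are smooth.

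\textbf{Step 1: $\mathfrak{h}$ is a subspace.} Closure under scalar multiplication is immediate. For sums I would use the Lie product formula
\[
e^{t(X+Y)}=\lim_{m\to\infty}\bigl(e^{tX/m}e^{tY/m}\bigr)^m .
\]
Each term of this sequence lies in $H$, and $H$ is closed, so the limit lies in $H$. This is the first place closedness is used.

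\textbf{Step 2: key lemma.} Suppose $B_m\in\gl(n;\R)$ with $B_m\to 0$, $B_m\neq 0$, $e^{B_m}\in H$, and $B_m/\|B_m\|\to Y$. Then $Y\in\mathfrak{h}$. To see this, fix $t$ and set $k_m=\lfloor t/\|B_m\|\rfloor$. Then
\[
e^{k_m B_m}=(e^{B_m})^{k_m}\in H,
\qquad k_m B_m\to tY,
\]
so $e^{tY}\in H$ by continuity of $\exp$ and closedness of $H$.

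\textbf{Step 3: local chart.} Choose a complement $\mathfrak{d}$ with $\gl(n;\R)=\mathfrak{h}\oplus\mathfrak{d}$ and define
\[
\Phi(X,D)=e^{X}e^{D}.
\]
Its differential at $0$ is the identity, so by the inverse function theorem $\Phi$ is a local diffeomorphism. The claim is that for a small enough neighbourhood $U$ of $0$,
\[
\exp(U\cap\mathfrak{h})=\exp(U)\cap H.
\]
The inclusion $\subseteq$ is clear. For the converse, suppose it fails. Then there are elements $e^{X_m}e^{D_m}\in H$ with $D_m\neq0$ and $(X_m,D_m)\to 0$. Since $e^{X_m}\in H$, it follows that $e^{D_m}\in H$. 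Passing to a subsequence, $D_m/\|D_m\|\to Y\in\mathfrak{d}$ with $\|Y\|=1$. Step 2 then gives $Y\in\mathfrak{h}\cap\mathfrak{d}=\{0\}$, a contradiction. I expect this step to be the main obstacle, since it is where the compactness and subsequence argument has to be set up carefully.

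With the chart in hand, $\Phi^{-1}$ straightens $H$ near the identity into the slice $\mathfrak{h}\times\{0\}$, and translating by $h\in H$ straightens it near every point. This gives the embedded smooth manifold structure stated in Thm.~\ref{thm:closed subgroup theorem}.
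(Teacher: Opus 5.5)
Your argument is the classical Cartan--von Neumann proof. The paper itself supplies no proof, only the citations. For a closed subgroup of $\GL(n;\R)$ each of your steps is sound: Step 1 via the Lie product formula, the rounding estimate $|t-k_m\|B_m\||\le\|B_m\|$ in Step 2, and the contradiction in Step 3. That contradiction really shows that $\Phi(V)\cap H=\Phi\bigl(V\cap(\mathfrak{h}\times\{0\})\bigr)$ for a small neighbourhood $V$ of $0$ in $\mathfrak{h}\oplus\mathfrak{d}$, which is exactly the slice condition. Phrasing it through $\exp(U)$ instead needs an extra word on injectivity of $\exp$ near $0$ to rule out $D_m=0$.

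The gap is one of scope. The statement is for an arbitrary Lie group $G$, and your opening reduction ``I may take $G=\GL(n;\R)$'' is only available when $G$ is itself a closed subgroup of some $\GL(n;\R)$. Many Lie groups are not: the universal cover of $\mathrm{SL}(2,\R)$ has no faithful finite-dimensional representation at all. So as written you prove only the ``specifically, for matrix Lie groups'' clause. Admittedly, that is the only case the paper ever invokes, for $\Lmat[\Ad_{\mathrm{SE}(3)}]\subset\GL(6,\R)$ and $\Lmat[\Ad_G]\subset\GL(d,\R)$.

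The repair costs little. Run the same argument with the exponential map $\exp:\g\to G$ and any norm on $\g=\mathcal{T}_eG$. Steps 2 and 3 and the left-translation step use only continuity of $\exp$, $\mathrm{d}\exp_0=\mathrm{id}$, and $\exp(kB)=\exp(B)^k$ for $k\in\mathbb{Z}$, all valid in any Lie group. The one matrix-specific ingredient is the Lie product formula in Step 1, and your own Step 2 can replace it. For $X,Y\in\mathfrak{h}$ and large $m$, set $Z_m:=\exp^{-1}\bigl(\exp(X/m)\exp(Y/m)\bigr)$. Then $\exp(Z_m)\in H$, $Z_m\to0$, and $mZ_m\to X+Y$, because $s\mapsto\exp^{-1}\bigl(\exp(sX)\exp(sY)\bigr)$ is smooth near $0$ with derivative $X+Y$ there. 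If $X+Y\neq0$, then $Z_m/\|Z_m\|\to(X+Y)/\|X+Y\|$, so Step 2 puts this unit vector, and hence $X+Y$, in $\mathfrak{h}$.
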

\vspace*{-0.3cm}
\paragraph{Lie group and Lie algebra}A Lie group and its Lie algebra is a correspondence. Consider the smooth manifold of a Lie group $G$, at the identity element $e$, the tangent space $\mathcal{T}_{e}G$ is a vector space. And Lie algebra is defined on this vectors set $\mathfrak{g}$. Besides addition operation, there is another closed operation $[\cdot,\cdot]:\mathfrak{g}\times\mathfrak{g}\rightarrow\mathfrak{g}$ called Lie bracket which satisfying $[X,Y]=-[Y,X]$ and Jacobi identity $[X,[Y,Z]]+[Y,[X,Z]]+[Z,[X,Y]]\equiv0$. For the Lie algebra of matrix Lie group, Lie bracket is defined by the commutator $[X,Y]:=XY-YX$ \cite{LieTheory}. Geometrically, Lie algebra represents the differential of Lie group action on its underlying smooth manifold, and the matrix exponential $\mathrm{Exp}(X)\!:=\!e^X$ retracts the points from $\mathfrak{g}$ to the underlying manifold $G$ \cite{IntroductionToSmoothManifolds}. Take Def. \ref{def:SO(3)andO(3)andso(3)ando(3)} as an example. Consider a curve family in $\mathrm{SO}(3)$, denoted by $\eta(t)$, satisfying $\eta(0)=\mathbb{I}$. Since $\eta(t)\eta(t)^{\mathrm{T}}=\mathbb{I}_3$, by taking time derivative, we have $\dot{\eta}(t)\eta(t)^{\mathrm{T}}+\eta(t)\dot{\eta}(t)^{\mathrm{T}}=O_{3\times 3}$. Let $t=0$, then $\dot{\eta}(0)=X\in\mathfrak{so}(3)$ and $\dot{\eta}(t)+\dot{\eta}(t)^{\mathrm{T}}=O_{3\times 3}$. Thus, we verify that $X^\mathrm{T}=-X$ as in Eq. (\ref{eq:so(3)}). 

\paragraph{Dynamical neural networks} Neural dynamics in this paper refers to the following architecture stated in Eq. (\ref{eq:DNN}), where we treat the network evolution as a differential autonomous dynamical system \cite{neuralODE,DynamicalNN,qshnn}. In vector form, Eq. (\ref{eq:DNN}) is defined as a ODE system: $\frac{\text{d}{}}{\text{d}t}\boldsymbol{x}=-\gamma\boldsymbol{x}+\mu W\phi(\boldsymbol{x})+\mu\boldsymbol{b}$, where connection weights are denoted by $W\in\mathbb{R}^{N\times N}$, neural state variables are denoted by $\boldsymbol{x}\in\mathbb{R}^N\!\times\!\mathcal{T}$, and biases are denoted by $\boldsymbol{b}\in\mathbb{R}^N$. The activation is $\phi(\cdot)$, typically chosen as a hyperbolic tangent function applied entry-wise on vector variables.
\begin{align}
&\frac{\text{d}{}}{\text{d}t}x_j(t)=-\gamma x_j(t)+\mu\sum_{i=1}^Nw_{ji}\phi\left[x_i(t)\right]+\mu b_j,\ \ \ \ \ j=1,2,...n
\label{eq:DNN}
\end{align}
where $\gamma$ and $\mu$ are constants. It can be viewed as a continuous Hopfield neural network (HNN) \cite{Hopfield:2007,hnn} in structure.  Unlike classic HNN, we do not impose symmetric connections, extra constraints on weight matrix are needed to achieve global stability \cite{delay,qshnn,asymmetricHNN}, and the equilibrium can also be controlled by supervised learning \cite{qshnn}. Further, when we simulate the temporal neural system Eq. (\ref{eq:DNN}) with asymptotic stability, it converges to the stationary point under numerical precision with a finite simulation time range. Hence the architecture can also be treated as continuous residual networks (ResNets) \cite{ResNets,neuralODE}, and it is analogous to a deep ResNets with identical layers. Further, it also lies in the category of generalized recurrent neural networks with fully connected topology.

\section{Methodology}\label{sec:methodology}
\subsection{Embedding Representation of Lie Group}
\paragraph{Conjugate mapping and adjoint action}As the network system in Eq. (\ref{eq:DNN}) suggests, we activate a neuron by summing up the weighted stimulation from all other neurons interacting with it. To retain geometric meaning of the matrix Lie group on these interactions, we must first make the interaction compatible with addition. However, Lie group is nonlinear as a manifold, it is impossible to find a general linear representation where addition is well defined. Our strategy is to transfer the objects of neural states explanation from Lie group to its Lie algebra and define the adjoint action on it for the neuron interactions through Lie group weights embedding. Since the adjoint action is linear, there exists a linear representation matrix to calculate the action on Lie algebra parameterized vector, which forms a new group $\mathcal{L}[\mathcal{A}d_G]$ as the practical representation of Lie group action in LieEDNN.
\vspace{0.2cm}
\begin{definition}[Conjugate Mapping on Lie group]For Matrix Lie group $G$ and a fixed element $g\in G$, the conjugate mapping is defined by $C_g\!:\!G\to G,\ C_g(h):=ghg^{-1}$. $C_g$ is a group automorphism, where $C_{g}\circ C_h=C_{gh}$ and $C_g^{-1}=C_{g^{-1}}$. The group product is compatible with matrix multiplication.
\label{def:conjugate mapping}
\end{definition}
\vspace{-0.2cm}
\begin{definition}[Adjoint Action on Lie Algebra]Consider the matrix Lie group action on its Lie algebra, with a fixed $g\in G$, by left action of $g$ and right action of $g^{-1}$, the compound action forms adjoint action as in Eq. (\ref{eq:adjoint action}) and illustrated in Fig. \ref{fig:LieTheory}. Operations $\circ$ are matrix multiplication.
\begin{equation}
    \mathcal{A}d_g(X):=g \circ\! X\!\circ g^{-1},\quad g\in G, \ X\in\mathfrak{g}.
\label{eq:adjoint action}
\end{equation}
\label{def:adjoint action}
\end{definition}
\vspace{-0.5cm}
\begin{figure}[ht]
\hspace*{0.1cm}
\begin{subfigure}[t]{0.58\textwidth}
    \centering
    \includegraphics[width=\textwidth]{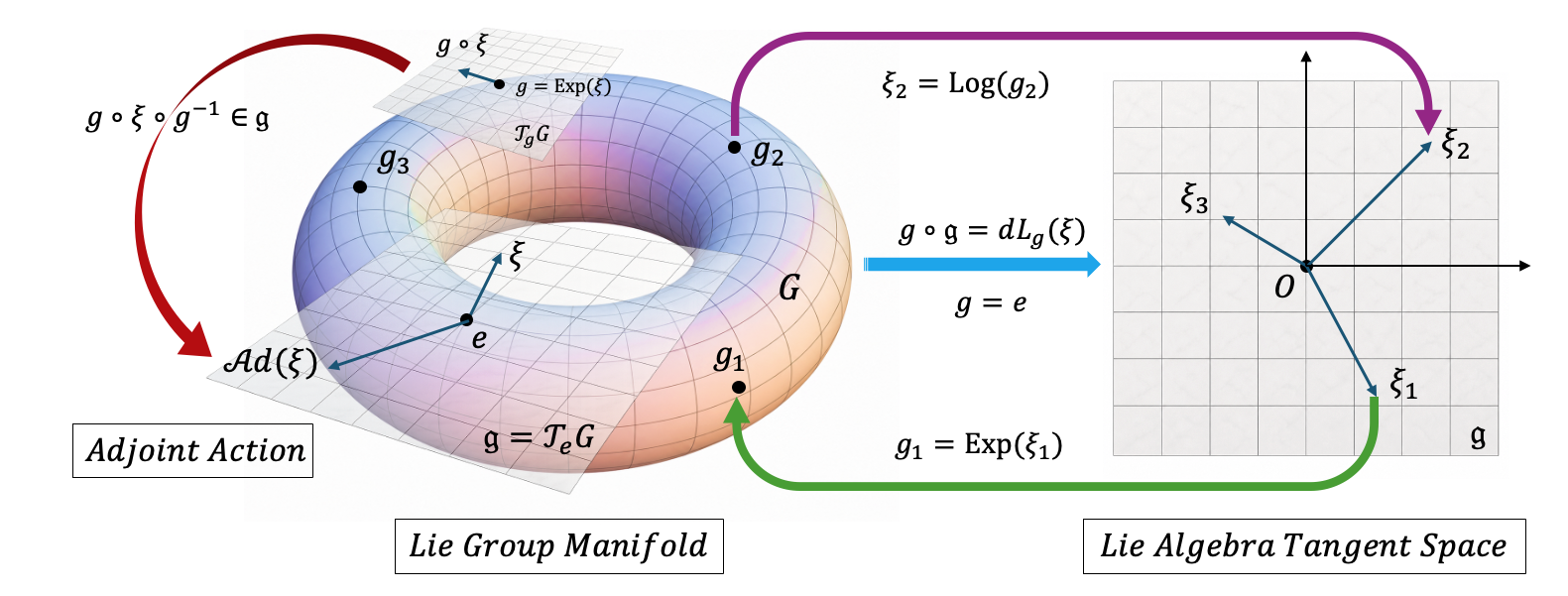}
    \label{fig:LieTheorya}
  \end{subfigure}
  %\hspace*{0.2cm}
  \begin{subfigure}[t]{0.36\textwidth}
    \centering
    \includegraphics[width=\textwidth]{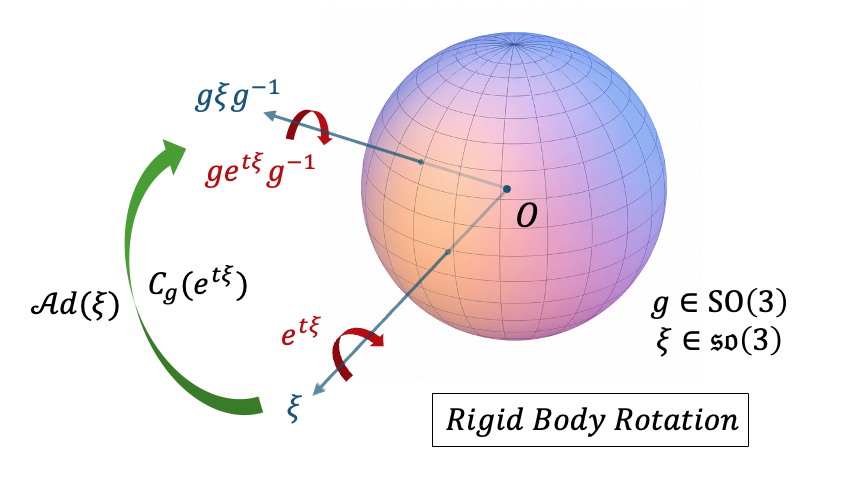}
    \label{fig:rigidBody}
  \end{subfigure}
  \vspace*{-0.3cm}
  \caption{Through conjugate Lie group action $\circ$ with $g$ and $g^{-1}$, we define the adjoint action whereby linearize a Lie group action on Lie algebra. Matrix exponential $\mathrm{Exp}(\cdot)$ and logarithm $\mathrm{Log}(\cdot)$ allow us to retract Lie algebra $\mathfrak{g}$ to Lie group manifold $G$, vice versa. For $\mathrm{SO}(3)$ and $\mathfrak{so}(3)$, adjoint action $\mathcal{A}d(\cdot)$  maps two rigid body rotations which is a linear transformation on $\mathfrak{so}(3)$ by Thm.~\ref{thm:LieAlgebraAutomorphism}.}
  \label{fig:LieTheory}
\end{figure}

\begin{theorem}[Adjoint Action as Automorphism]For a fixed $g\in G$, with the adjoint action defined by Def. 
\ref{def:adjoint action}, $\Ad_g(\cdot)$ is a Lie algebra automorphism on $\g$, i.e. $\Ad_g:\mathfrak{g}\rightarrow\mathfrak{g}$ is isomorphism. 
\label{thm:LieAlgebraAutomorphism}
\end{theorem}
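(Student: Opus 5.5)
We will verify four properties of $\Ad_g$ in order: it maps $\g$ into $\g$, it is linear, it preserves the Lie bracket, and it is bijective. All computations use only matrix multiplication and the group axioms in Def.~\ref{def:conjugate mapping}.

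\emph{Well-definedness.} This is the step we expect to be the main obstacle, because it is the only place where the manifold structure enters. We must show $gXg^{-1}\in\g$ whenever $X\in\g$. We will use the tangent-space description of $\g=\mathcal{T}_eG$ from the background section. Take a smooth curve $\eta(t)\subset G$ with $\eta(0)=e$ and $\dot\eta(0)=X$; for matrix groups one may choose $\eta(t)=e^{tX}$. The conjugate mapping $C_g$ maps $G$ to $G$ and fixes $e$, so $C_g(\eta(t))=g\eta(t)g^{-1}$ is again a smooth curve in $G$ through $e$. Since $g$ is constant, differentiating at $t=0$ gives
\begin{equation*}
\frac{\mathrm{d}}{\mathrm{d}t}\Big|_{t=0} g\,\eta(t)\,g^{-1}=gXg^{-1}.
\end{equation*}
Hence $gXg^{-1}\in\mathcal{T}_eG=\g$. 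Equivalently, $\Ad_g$ is the differential of $C_g$ at $e$.

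\emph{Linearity and bracket preservation.} Linearity follows from distributivity of matrix multiplication:
\begin{equation*}
g(aX+bY)g^{-1}=a\,gXg^{-1}+b\,gYg^{-1}.
\end{equation*}
For the bracket, we insert $g^{-1}g=\mathbb{I}$ between the factors of each product:
\begin{equation*}
\Ad_g(X)\Ad_g(Y)-\Ad_g(Y)\Ad_g(X)=g(XY-YX)g^{-1}=\Ad_g([X,Y]).
\end{equation*}
So $\Ad_g$ is a Lie algebra homomorphism $\g\to\g$.

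\emph{Bijectivity.} Applying the well-definedness step to $g^{-1}\in G$ shows that $\Ad_{g^{-1}}$ also maps $\g$ to $\g$. A direct computation, mirroring $C_g\circ C_h=C_{gh}$, gives
\begin{equation*}
\Ad_g\circ\Ad_{g^{-1}}=\Ad_{g^{-1}}\circ\Ad_g=\mathrm{id}_\g .
\end{equation*}
Thus $\Ad_g$ has a two-sided inverse, and it is a bracket-preserving linear isomorphism of $\g$ onto itself, that is, a Lie algebra automorphism. As an optional check in finite dimensions, injectivity alone already suffices, since $gXg^{-1}=0$ implies $X=0$.
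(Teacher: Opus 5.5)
Your proposal is correct and follows essentially the same route as the paper. Both arguments use the curve $g\,e^{tX}g^{-1}$ through $e$ to show $gXg^{-1}\in\g$, obtain surjectivity from the preimage $g^{-1}Yg=\Ad_{g^{-1}}(Y)$, and verify bracket preservation by the direct commutator computation. You additionally spell out linearity and the fact that $g^{-1}Yg\in\g$ (by applying the well-definedness step to $g^{-1}$), both of which the paper leaves implicit.
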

{\it\textbf{Proof:}} Consider a curve by retracting exponential function \cite{LieTheory,IntroductionToSmoothManifolds}, $\gamma(t)\!:=\!e^{tX}$ satisfying $\gamma(0)\!=\!e$, where $X\in \mathfrak{g}$ . Then, $\gamma(t)$ is a curve evolves in Lie group over $t\in\mathbb{R}$.
\begin{equation}
    \left.\dot{\gamma}(t)\right|_{t=0}=\left.\frac{\text{d}}{\text{d}t} e^{tX}\right|_{t=0}=\left.Xe^{tX}\right|_{t=0}=X\in\mathcal{T}_eG
\end{equation}
which is to say that $X$, the generator of the Lie group curve $e^{tX}$, is tangent to the curve at $t\!=\!0,\ \gamma\!=\!e$. Next, we use conjugate mapping of Def. \ref{def:conjugate mapping} on the entire curve $\gamma(t)$ with an arbitrary fixed element $g\in G$. This mapping defines a new curve $g\gamma(t) g^{-1}$ in Lie group manifold, denoted by $\zeta(t)$. Notice that $\zeta(0)=geg^{-1}=e$, i.e. $\zeta(t)$ crosses the identity $e$. Then we calculate the differential of $\zeta(t)$ at $t\!=\!0$ that should still be in the tangent space $\mathcal{T}_eG=\mathfrak{g}$. This leads to the adjoint action in Def .\ref{def:adjoint action}:
\begin{equation}
    \left.\frac{\text{d}}{\text{d}t}C_g[\gamma(t)]\right|_{t=0}=\left.\dot{\zeta}(t)\right|_{t=0}=\left.\frac{\text{d}}{\text{d}t} ge^{tX}g^{-1}\right|_{t=0}=\left.gXe^{tX}g^{-1}\right|_{t=0}=gXg^{-1}\in\mathfrak{g}
\end{equation}
Thus, the range of adjoint action mapping $\Ad_g(\cdot)=g(\cdot)g^{-1}$ is still $\mathfrak{g}$. Moreover, we know that $\forall Y\in\mathfrak{g}$, the origin exists by $g^{-1}Yg$, thus $\Ad(X)\!=\!Y$. $\Ad_g$ is a bijection. Further we have:
\begin{equation*}
\operatorname{Ad}_g([X,Y])=g(XY-YX)g^{-1}=(gXg^{-1})(gYg^{-1})-(gYg^{-1})(gXg^{-1})=[\operatorname{Ad}_g(X),\operatorname{Ad}_g(Y)]
\end{equation*}
Then adjoint action preserves bracket operation and becomes an automorphism of Lie algebra $\mathfrak{g}$.\hfill\(\square\)

\begin{definition}[Induced Linear Mapping of $\mathrm{SE}(3)$ by Adjoint Action]\label{def:L}As the adjoint action of $\mathrm{SE}(3)$ is linear on Lie algebra $\mathfrak{se}(3)$, it naturally induces a linear homomorphism by $\mathcal{L}$:
\begin{align}
    \mathcal{L}{[}Ad_{\mathrm{SE}(3)}{]}=\left\{\left.\left[\begin{matrix}
    R&O_{3\times 3}\\
    [\boldsymbol{t}]^{\wedge}R&R\end{matrix}\right]\right|R^T=R^{-1},\boldsymbol{t}\in\mathbb{R}^3\right\}\subset\mathbb{R}^{6\times6}
    \label{eq:L[Ad]}
\end{align}
\label{def:Induced Linear Mapping by Adjoint Action}
Adjoint action induced mapping exists for general Lie algebra given by Thm. \ref{thm:General Linear Representation of Adjoint Action} in App. \ref{D}.
\end{definition}

\begin{theorem}[Induced Linear Mapping of $\mathrm{SE}(3)$ Forms Lie Group]$\mathcal{L}{[}Ad_{\mathrm{SE}(3)}{]}$ by Def. \ref{def:L} is topologically closed in $\mathrm{GL}(6,\mathbb{R})$, which forms a Lie group and an underlying manifold defined with matrix representation. Further, the group operation is compatible with matrix multiplication.
\label{thm:Induced Linear Mapping as a Lie Group Manifold}
\end{theorem}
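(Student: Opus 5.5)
Note first a subtlety in Def.~\ref{def:L}: it only requires $R^{\mathrm T}=R^{-1}$, so $R\in\mathrm{O}(3)$ rather than $\mathrm{SO}(3)$. I will work with the set exactly as written, call it $\mathcal{L}$. The plan is to verify that $\mathcal{L}$ is a subgroup of $\GL(6,\R)$ closed under matrix multiplication and inversion, show that it is topologically closed, and then invoke the closed subgroup theorem, Thm.~\ref{thm:closed subgroup theorem}, with $G=\GL(6,\R)$. Since the group operation is inherited from $\GL(6,\R)$, compatibility with matrix multiplication is then automatic.

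\textbf{Algebraic step.} Write $M(R,\boldsymbol t)=\begin{bmatrix}R&O\\ [\boldsymbol t]^\wedge R&R\end{bmatrix}$. Block multiplication gives
\[
M(R_1,\boldsymbol t_1)M(R_2,\boldsymbol t_2)=\begin{bmatrix}R_1R_2&O\\ ([\boldsymbol t_1]^\wedge R_1R_2+R_1[\boldsymbol t_2]^\wedge R_2)&R_1R_2\end{bmatrix}.
\]
The lower-left block can be rewritten as $\bigl([\boldsymbol t_1]^\wedge+R_1[\boldsymbol t_2]^\wedge R_1^{\mathrm T}\bigr)R_1R_2$.

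The key identity is $R[\boldsymbol t]^\wedge R^{\mathrm T}=[R\boldsymbol t]^\wedge$ for $R\in\mathrm{SO}(3)$, and $R[\boldsymbol t]^\wedge R^{\mathrm T}=[\det(R)\,R\boldsymbol t]^\wedge$ for general $R\in\mathrm O(3)$. I will prove it through the cross product in Def.~\ref{def:Skew-symmetric Representation}: for any $\boldsymbol v$,
\[
R(\boldsymbol t\times R^{\mathrm T}\boldsymbol v)=\det(R)\,(R\boldsymbol t)\times\boldsymbol v .
\]
Alternatively, it suffices to note that $R[\boldsymbol t]^\wedge R^{\mathrm T}$ is still skew-symmetric, hence equals $[\boldsymbol s]^\wedge$ for some $\boldsymbol s$. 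This cheaper route avoids the determinant bookkeeping entirely, since skew-symmetry is all that membership in $\mathcal{L}$ needs.

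With this, the product equals $M(R_1R_2,\boldsymbol t_1+\boldsymbol s)\in\mathcal{L}$. The identity is $M(\mathbb I,\boldsymbol 0)$. For the inverse I will verify directly that
\[
M(R,\boldsymbol t)^{-1}=\begin{bmatrix}R^{\mathrm T}&O\\ -R^{\mathrm T}[\boldsymbol t]^\wedge&R^{\mathrm T}\end{bmatrix},
\]
and rewrite $-R^{\mathrm T}[\boldsymbol t]^\wedge=\bigl(-R^{\mathrm T}[\boldsymbol t]^\wedge R\bigr)R^{\mathrm T}=[\boldsymbol s']^\wedge R^{\mathrm T}$, again by skew-symmetry of a conjugated skew matrix. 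Invertibility itself follows from $\det M=\det(R)^2=1$ (block lower-triangular form).

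\textbf{Topological step.} I expect this to be the only mildly delicate point, and I will handle it with sequences. Take $M_k=M(R_k,\boldsymbol t_k)\to A$ in $\GL(6,\R)$. The diagonal blocks converge, so $R_k\to R$ with $R^{\mathrm T}R=\mathbb I$ by continuity; hence $R\in\mathrm O(3)$, which is closed.

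The lower-left block $[\boldsymbol t_k]^\wedge R_k$ converges to some $B$. Then $[\boldsymbol t_k]^\wedge=([\boldsymbol t_k]^\wedge R_k)R_k^{\mathrm T}\to BR^{\mathrm T}$. The space $\mathfrak{so}(3)$ is a closed linear subspace and $\wedge$ is a linear isomorphism, so $\boldsymbol t_k\to\boldsymbol t$ with $BR^{\mathrm T}=[\boldsymbol t]^\wedge$. Therefore $A=M(R,\boldsymbol t)\in\mathcal{L}$.

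An equivalent route is to describe $\mathcal{L}$ as the zero set of continuous polynomial conditions:
\begin{itemize}
\item the upper-right block vanishes;
\item the two diagonal blocks are equal and orthogonal;
\item the lower-left block times the transpose of a diagonal block is skew-symmetric.
\end{itemize}
This makes closedness immediate. Thm.~\ref{thm:closed subgroup theorem} then gives that $\mathcal{L}$ is an embedded Lie subgroup of $\GL(6,\R)$, with matrix multiplication as its group product.
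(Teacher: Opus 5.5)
Your proposal is correct and follows essentially the same route as the paper's proof: closure under products via block multiplication and conjugation of a skew-symmetric block, an explicit inverse, sequential closedness by extracting $R_k\to R$ and $[\boldsymbol t_k]^\wedge=([\boldsymbol t_k]^\wedge R_k)R_k^{\mathrm T}\to BR^{\mathrm T}$, and then the closed subgroup theorem. Your one refinement is to use only that $R[\boldsymbol t]^\wedge R^{\mathrm T}$ is skew-symmetric. The paper instead uses the identity $[R\boldsymbol u]^\wedge=R[\boldsymbol u]^\wedge R^{\mathrm T}$, which holds only when $\det R=1$, and so tacitly takes $R\in\mathrm{SO}(3)$; your version therefore covers the set exactly as written in Def.~\ref{def:L}, where only $R\in\mathrm{O}(3)$ is required.
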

{\it\textbf{Proof:}} Thm. \ref{thm:Induced Linear Mapping as a Lie Group Manifold} is proved in App. \ref{B}. Theorem for general Lie group is presented in App. \ref{E}. \hfill$\square$

\begin{theorem}[Linear Representation of Adjoint Action on $\mathfrak{se}(3)$]For Lie algebra $\mathfrak{se}(3)$ defined in Def. \ref{def:se(3) and SE(3)}, each element is parameterized by $\boldsymbol{\omega}$ and $\boldsymbol{v}$, we denote the vector parameterization of $X\in\mathfrak{se}(3)$ by $P(X):=[\boldsymbol{\boldsymbol{\omega}}\ \boldsymbol{v} ]^{\mathrm{T}}\in\mathbb{R}^6$, then there exists a linear mapping $\mathcal{L}[{\mathcal{A}d_{\mathrm{SE}(3)}}]\in\mathcal{L}(\mathbb{R}^6)$ induced by the adjoint action of $\mathrm{SE}(3)$ on $\mathfrak{se}(3)$ of the form in Def. \ref{def:Induced Linear Mapping by Adjoint Action}. Further, we have
\begin{equation}
    P(\mathcal{A}d_g(X))=\mathcal{L}[\mathcal{A}d_g]\circ P(X)
    \label{eq: Linear Representation of Adjoint Action}
\end{equation}
i.e. the linear operator on parameterized $\mathfrak{se}(3)$ vector is equivalent to parameterizing the adjoint action $\mathcal{A}d_g(\cdot)$ of a fixed $g\in \mathrm{SE}(3)$ on $X\in\mathfrak{se}(3)$.
\label{thm:Linear Representation of Adjoing Action}
\end{theorem}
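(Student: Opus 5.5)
The plan is to compute the adjoint action explicitly in block form and then read off the matrix acting on the parameter vector $P(X)$. Write a fixed $g\in\mathrm{SE}(3)$ as $g=\left[\begin{matrix}R&\boldsymbol{p}\\O_{1\times3}&1\end{matrix}\right]$ with $R\in\mathrm{SO}(3)$. Its inverse is $g^{-1}=\left[\begin{matrix}R^{\mathrm{T}}&-R^{\mathrm{T}}\boldsymbol{p}\\O_{1\times3}&1\end{matrix}\right]$, which can be checked by direct multiplication. Write a general $X\in\mathfrak{se}(3)$ as $X=\left[\begin{matrix}[\boldsymbol{\omega}]^\wedge&\boldsymbol{v}\\O_{1\times3}&0\end{matrix}\right]$.

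By Thm.~\ref{thm:LieAlgebraAutomorphism}, $\Ad_g(X)=gXg^{-1}$ lies again in $\mathfrak{se}(3)$. Moreover $\Ad_g$ is linear in $X$, since $X\mapsto gXg^{-1}$ is a composition of matrix multiplications. Hence $P\circ\Ad_g\circ P^{-1}$ is a linear map on $\mathbb{R}^6$, and it only remains to identify its matrix.

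\textbf{Step 1: block multiplication.} Multiplying out gives
\begin{equation*}
gX=\left[\begin{matrix}R[\boldsymbol{\omega}]^\wedge&R\boldsymbol{v}\\O_{1\times3}&0\end{matrix}\right],\qquad
gXg^{-1}=\left[\begin{matrix}R[\boldsymbol{\omega}]^\wedge R^{\mathrm{T}}&-R[\boldsymbol{\omega}]^\wedge R^{\mathrm{T}}\boldsymbol{p}+R\boldsymbol{v}\\O_{1\times3}&0\end{matrix}\right].
\end{equation*}

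\textbf{Step 2: the conjugation identity.} We need $R[\boldsymbol{\omega}]^\wedge R^{\mathrm{T}}=[R\boldsymbol{\omega}]^\wedge$ for $R\in\mathrm{SO}(3)$. Using Def.~\ref{def:Skew-symmetric Representation}, apply the left side to an arbitrary $\boldsymbol{u}$; it equals $R(\boldsymbol{\omega}\times R^{\mathrm{T}}\boldsymbol{u})$. Rotations satisfy $R(\boldsymbol{a}\times\boldsymbol{b})=(R\boldsymbol{a})\times(R\boldsymbol{b})$, which follows from $\det R=1$ via the triple-product/determinant characterization of the cross product. Hence the expression equals $(R\boldsymbol{\omega})\times\boldsymbol{u}=[R\boldsymbol{\omega}]^\wedge\boldsymbol{u}$.

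This is the one step I expect to need care. For reflections ($\det R=-1$) a sign $\det R$ appears, so the argument genuinely uses that $g\in\mathrm{SE}(3)$ rather than merely $R^{\mathrm{T}}=R^{-1}$ as written in Eq.~(\ref{eq:L[Ad]}). I would remark that the image of $\mathrm{SE}(3)$ lands in the $\det R=1$ part of that set.

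\textbf{Step 3: the translational component.} By anticommutativity of the cross product,
\begin{equation*}
-[R\boldsymbol{\omega}]^\wedge\boldsymbol{p}=-(R\boldsymbol{\omega})\times\boldsymbol{p}=\boldsymbol{p}\times R\boldsymbol{\omega}=[\boldsymbol{p}]^\wedge R\boldsymbol{\omega}.
\end{equation*}
So $\Ad_g(X)$ is parameterized by $\boldsymbol{\omega}'=R\boldsymbol{\omega}$ and $\boldsymbol{v}'=[\boldsymbol{p}]^\wedge R\boldsymbol{\omega}+R\boldsymbol{v}$. In block form this reads
\begin{equation*}
P(\Ad_g(X))=\left[\begin{matrix}R&O_{3\times3}\\ [\boldsymbol{p}]^\wedge R&R\end{matrix}\right]P(X).
\end{equation*}
This is exactly an element of $\mathcal{L}[Ad_{\mathrm{SE}(3)}]$ in Def.~\ref{def:L}, with $\boldsymbol{t}=\boldsymbol{p}$, and it establishes Eq.~(\ref{eq: Linear Representation of Adjoint Action}).

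\textbf{Step 4: closing remark.} Optionally, the assignment $g\mapsto\mathcal{L}[\Ad_g]$ respects products, since $\Ad_{gh}=\Ad_g\circ\Ad_h$ by Def.~\ref{def:conjugate mapping}. This is consistent with the group structure asserted in Thm.~\ref{thm:Induced Linear Mapping as a Lie Group Manifold}.
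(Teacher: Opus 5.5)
Your proposal is correct and follows the paper's proof in App.~\ref{C} essentially line for line. It uses the same block computation of $gXg^{-1}$, the same two identities $R[\boldsymbol{\omega}]^\wedge R^{\mathrm T}=[R\boldsymbol{\omega}]^\wedge$ and $-[R\boldsymbol{\omega}]^\wedge\boldsymbol{p}=[\boldsymbol{p}]^\wedge R\boldsymbol{\omega}$, and the same read-off of the $6\times 6$ block matrix. The only differences are that you justify these identities, which the paper merely asserts, and you correctly note that the first one needs $\det R=1$, so the image of $\mathrm{SE}(3)$ lies in the $\det R=1$ part of the set in Eq.~(\ref{eq:L[Ad]}).
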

\textit{\textbf{Proof:}} Thm. \ref{thm:Linear Representation of Adjoing Action} is proved in App. \ref{C}. Theorem for general Lie group is in App. \ref{D}\hfill$\square$

\subsection{Network Architecture for LieEDNN}\label{sec:architecture}
\paragraph{Lie group embedded dynamical neural networks}To construct a straight embedding of Lie group within network Eq. \ref{eq:DNN}, we utilize Lie algebra and the linearity of adjoint action of Def. \ref{def:adjoint action}. The dynamics is defined on Lie algebra, with variables $X_i(t)\in\mathfrak{g}$ over time $t\in\mathbb{R}$. And the neural interactions are through adjoint action, which is represented by weights in Lie group $\theta_{ij}\in G$. The bias should also be Lie algebra elements $B_i\in\mathfrak{g}$ to have a consistent addition to the system. With notations upon, the system equation appears as Eq. (\ref{eq:LieEDNN}) below:
\begin{equation}
    \frac{\text{d}{}}{\text{d}t}{X}_i=-\gamma X_i+\mu\sum_j \alpha_{ij}\mathcal{A}d_{\theta_{ij}}[\phi(X_j)]+\mu B_i,\quad i=1,2,...,N
    \label{eq:LieEDNN}
\end{equation}
where $\gamma$ is a constant factor controlling resonance decay, and $\mu$ is constant to scale interaction level. Function $\phi$ is chosen to be a hyperbolic tangent $\phi(x)=\frac{e^x-e^{-x}}{e^x+e^{-x}}$. There is an extra difference of Eq. \ref{eq:LieEDNN} from the classic architecture Eq. \ref{eq:DNN}. We add a connection strength factor $\alpha_{ij}\in\mathbb{R}$ on representative interactions. These are treated as learnable parameters to solve the non-convergence problem of learning Lie group weights. This is discussed later with more details on network learning.

\begin{figure}[ht]
\begin{subfigure}[t]{0.6\textwidth}
    \centering
    \includegraphics[width=\textwidth]{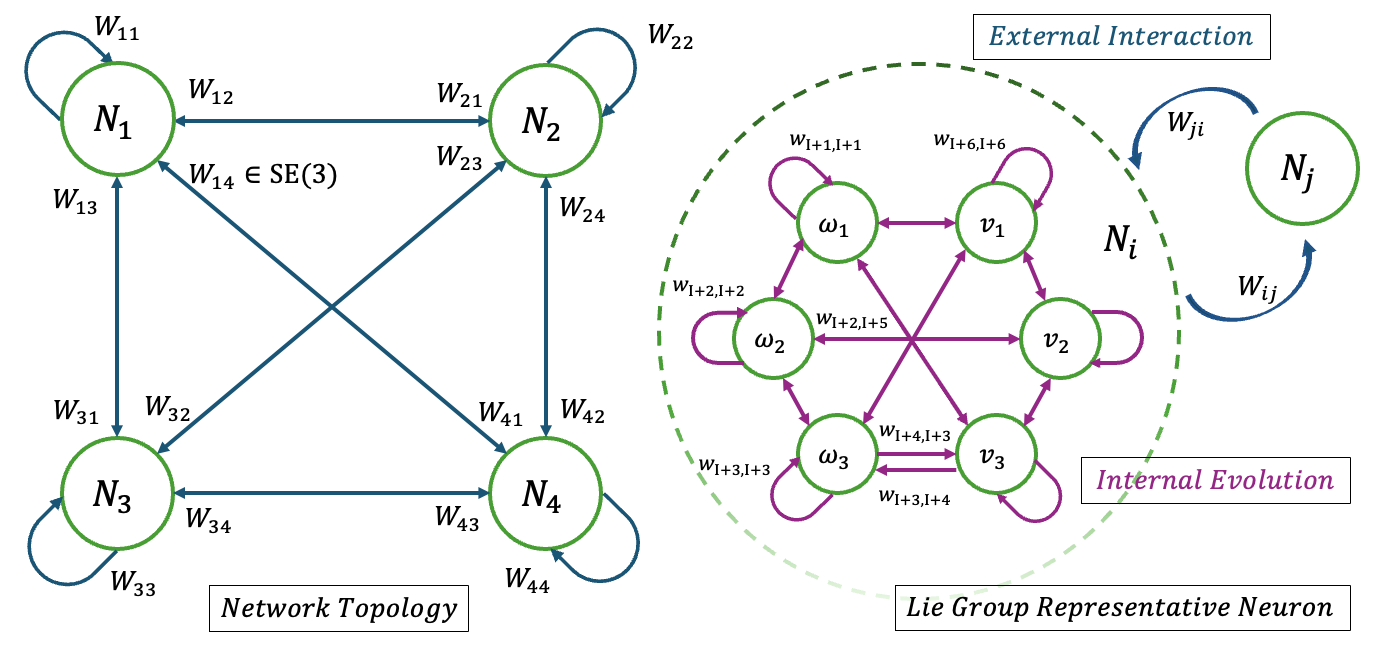}
    \caption{Topology and Block Structure of Lie Group Representation}
    \label{fig:network}
  \end{subfigure}
  \hspace*{0.2cm}
  \begin{subfigure}[t]{0.32\textwidth}
    \centering
    \includegraphics[width=\textwidth]{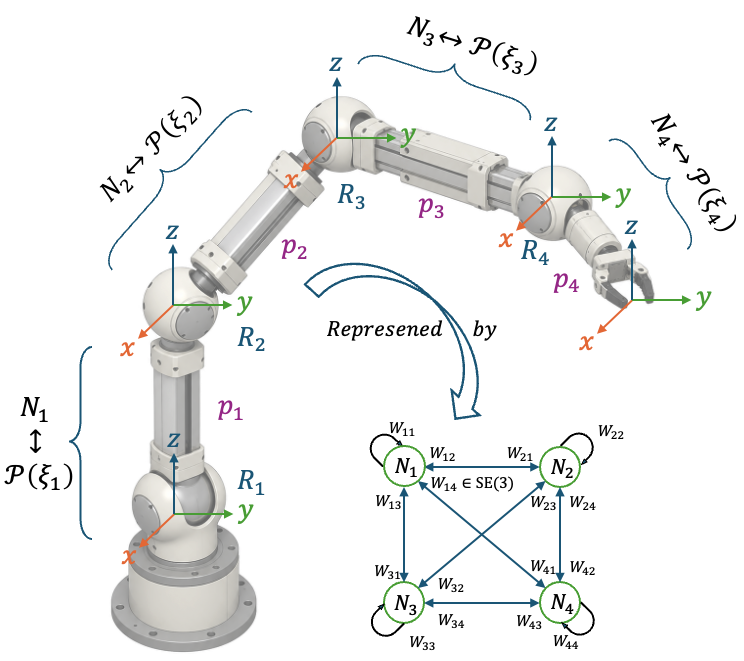}
    \caption{Robotic Application Scenario}
    \label{fig:arm}
  \end{subfigure}
  \caption{(a). Each Lie group neuron is integrated by several $\mathbb{R}$ neurons with number same as the dimension of Lie group underlying manifold. (b). Manipulator with translational and rotational joints. A telescopic robotic manipulator with four compound joints can be represented by LieEDNN with $\mathrm{SE}(3)$ and $\mathfrak{se}(3)$ in Eq. (\ref{eq:gradient2}) and Eq. (\ref{eq:LieEDNN3}). The neural states encode the posture of the manipulator, and the neural dynamics is decoded as motion trajectories by App. \ref{L}. In Sec. \ref{sec:experiment}, we implement a prototype for this application module. Notations in the figure is aligned with the following paragraph.}
  \label{fig:Lienetwork}
\end{figure}

\paragraph{Representative dynamics}Note that the network equation we propose so far in Eq. (\ref{eq:LieEDNN}) is not a normal ODE in Euclidean space, as the neural states are matrix representations of Lie algebra and the dynamics also evolves on Lie algebra. This is inconvenient for us to analyze equilibrium, stability, and smoothness of trajectory, etc, with the theory of dynamical systems. As a vector space, $\mathfrak{g}$ allows us to define a parameterization by vector space $\mathbb{R}^d$, where $d$ is the dimension of Lie group manifold and the Lie algebra. Here is how we implement this transformation with parameterization mapping $\mathcal{P}$: Take $\mathrm{SE}(3)$ and $\mathfrak{se}(3)$ as example. By $\boldsymbol{\xi}_i:=\mathcal{P}(X_i)\in\mathbb{R}^6,\ \boldsymbol{b}_i:=\mathcal{P}(B_i)\in\mathbb{R}^6,\ X_i,B_i\in\mathfrak{se}(3)$, system equation Eq. (\ref{eq:LieEDNN}) is equivalent to Eq. \ref{eq:LieEDNN2} by applying identical parameterizations:
\begin{equation}
    \frac{\text{d}{}}{\text{d}t}{\boldsymbol{\xi}}_i=-\gamma \boldsymbol{\xi}_i+\mu\sum_j \alpha_{ij}\mathcal{L}[\mathcal{A}d_{\theta_{ij}}]\phi(\boldsymbol{\xi_j})+\mu \boldsymbol{b_i},\quad\boldsymbol{\xi}_i\in\mathbb{R}^6,\quad i=1,2,...,N
    \label{eq:LieEDNN2}
\end{equation}
In the form as ODEs on $\mathbb{R}^{6N}$, where $W\!:=\!\{w_{ij}\}\in\mathbb{R}^{6N\times 6N}$, satisfying $W_{ij}\!=\!\alpha_{ij}\mathcal{L}[\mathcal{A}d_{\theta_{ij}}]\in\mathbb{R}^{6\times 6}$, and $\ \alpha_{ij}\in\mathbb{R}$, the system equation equivalent to Eq. (\ref{eq:LieEDNN2}) is $\frac{\text{d}{}}{\text{d}t}\boldsymbol{\xi}=-\gamma\boldsymbol{\xi}+\mu W\phi(\boldsymbol{\xi})+\mu\boldsymbol{b}$, i.e.
\begin{equation}
\frac{\text{d}{}}{\text{d}t}{\xi}_i=-\gamma{\xi}_i+\mu \sum_jw_{ij}\phi({\xi_j})+\mu{b}_i,\quad \xi_i\in\mathbb{R},\quad i=1,2,\dots6N
\label{eq:LieEDNN3}
\end{equation}
where $ \boldsymbol{\xi}:=[\xi_1,\xi_2,\dots,\xi_{6N}]^{\mathrm{T}}$. The index of the ODE dynamics is $1\le i\le6N$ since the dimension of $\mathrm{SE}(3)$ is $6$. The index of the representative dynamics on Lie algebra is $1\le i\le N$ where $N$ is the number of neurons. Every $6$ entries of $\boldsymbol{\xi}$ are combined as one $\boldsymbol{\xi}_i$, and the index transformation is $\boldsymbol{\xi}_i=[\xi_{I+1},\xi_{I+2},\dots\xi_{I+6}]^\mathrm{T}$, where $I:=6(i-1)$. We call $i,j$ block indices, $I,J$ plain indices since the representations of Lie algebra in Eq. (\ref{eq:LieEDNN2}) appear as vector blocks in Eq. (\ref{eq:LieEDNN3}) with dimension $6$ for the case of $\mathfrak{se}(3)$. Fig. \ref{fig:network} demonstrates the network topology with respect to neural dynamics in Eq. (\ref{eq:LieEDNN2}) and Eq. (\ref{eq:LieEDNN3}). Representative embedding blocks of the explicit form in equations can be found in App. \ref{K}. Properties of the system Eq. (\ref{eq:LieEDNN3}) are provided by the following theorem.

\begin{theorem}[Equilibrium Condition and Asymptotic Stability]\label{Thm:EandU}For the differential dynamical system of LieEDNN in Eq. (\ref{eq:LieEDNN3}), the existence and uniqueness of equilibrium are sufficient to be proven by following inequality on weight parameters. $L_i$ is the Lipschitz constant of activation $\phi_i(\cdot)$.
\begin{equation}
\sum_{i=1}^{6N}L_i|w_{ij}|<\frac{\gamma}{\mu}
\label{eq:weight constraint 1}
\end{equation}
The equilibrium that is asymptotically stable can be established by the inequality Eq. (\ref{eq:weight constraint 2}). $\mu$ and $\gamma$ are constants consistent with former notations. Eq. (\ref{eq:weight constraint 1}) and Eq. (\ref{eq:weight constraint 2}) can be satisfied by constraint on norm $\|W\|_1<\gamma/\mu$ and $\|W\|_\infty<\gamma/\mu$ at the normalization process at learning stage.
\begin{equation}
    \sum_{i=1}^{6N}(|w_{ji}|+|w_{ij}|)<\frac{2\gamma}{\mu}
    \label{eq:weight constraint 2}
\end{equation}
\end{theorem}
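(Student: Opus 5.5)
The plan is to treat Eq.~(\ref{eq:LieEDNN3}) as an ordinary Hopfield-type system $\dot{\boldsymbol{\xi}}=-\gamma\boldsymbol{\xi}+\mu W\phi(\boldsymbol{\xi})+\mu\boldsymbol{b}$ on $\R^{6N}$. For this system the block structure $W_{ij}=\alpha_{ij}\mathcal{L}[\mathcal{A}d_{\theta_{ij}}]$ plays no role: Thm.~\ref{thm:Linear Representation of Adjoing Action} has already reduced Eq.~(\ref{eq:LieEDNN}) to it. The two claims are handled separately, existence and uniqueness by a contraction argument in the $\ell_1$ norm, and asymptotic stability by a quadratic Lyapunov function.

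\emph{Existence and uniqueness.} Equilibria are exactly the fixed points of
\[
T(\boldsymbol{\xi}):=\tfrac{\mu}{\gamma}\bigl(W\phi(\boldsymbol{\xi})+\boldsymbol{b}\bigr).
\]
For $\boldsymbol{\xi},\boldsymbol{\eta}\in\R^{6N}$ we have
\[
\|T(\boldsymbol{\xi})-T(\boldsymbol{\eta})\|_1\le\tfrac{\mu}{\gamma}\sum_i\sum_j|w_{ij}|L_j|\xi_j-\eta_j| .
\]
Reading Eq.~(\ref{eq:weight constraint 1}) as a column-sum condition, with $L_j$ attached to column $j$ (I would fix the indexing so that $L$ multiplies the column index; for $\tanh$ all $L_i=1$, so the order does not matter), this is at most $\kappa\|\boldsymbol{\xi}-\boldsymbol{\eta}\|_1$ with $\kappa:=\max_j\frac{\mu}{\gamma}L_j\sum_i|w_{ij}|<1$. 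The Banach fixed-point theorem on the complete space $(\R^{6N},\|\cdot\|_1)$ then gives a unique fixed point $\boldsymbol{\xi}^*$, hence a unique equilibrium. Since $\phi$ is bounded, one could alternatively get existence from Brouwer on a large ball, but the contraction gives uniqueness for free.

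\emph{Asymptotic stability.} Set $\boldsymbol{e}=\boldsymbol{\xi}-\boldsymbol{\xi}^*$ and $\psi(\boldsymbol{e})=\phi(\boldsymbol{e}+\boldsymbol{\xi}^*)-\phi(\boldsymbol{\xi}^*)$, so that $\dot{\boldsymbol{e}}=-\gamma\boldsymbol{e}+\mu W\psi(\boldsymbol{e})$ with $|\psi_j(e_j)|\le L_j|e_j|\le|e_j|$ (using $L_j\le1$ for $\tanh$). Take $V=\tfrac12\|\boldsymbol{e}\|_2^2$. Then
\[
\dot V=-\gamma\|\boldsymbol{e}\|^2+\mu\sum_{i,j}e_iw_{ij}\psi_j\le-\gamma\|\boldsymbol{e}\|^2+\mu\sum_{i,j}|w_{ij}|\,|e_i|\,|e_j| .
\]
Applying $|e_i||e_j|\le\tfrac12(e_i^2+e_j^2)$ and collecting the coefficient of $e_j^2$ gives
\[
\dot V\le-\sum_j\Bigl(\gamma-\tfrac{\mu}{2}\sum_i(|w_{ji}|+|w_{ij}|)\Bigr)e_j^2 .
\]
By Eq.~(\ref{eq:weight constraint 2}) every bracket is positive, so $\dot V\le-c\|\boldsymbol{e}\|^2$ with $c>0$. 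This gives exponential, hence global asymptotic, stability by Lyapunov's theorem.

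The final remark follows because $\|W\|_1<\gamma/\mu$ and $\|W\|_\infty<\gamma/\mu$ bound the column and row sums respectively. Together they imply both Eq.~(\ref{eq:weight constraint 1}) (with $L_i\le1$) and Eq.~(\ref{eq:weight constraint 2}). The main obstacle is bookkeeping rather than analysis. The stated Eq.~(\ref{eq:weight constraint 1}) has free index $j$ and an ambiguous placement of $L_i$, so the contraction estimate has to be matched to the correct norm (column sums give $\ell_1$). In the stability step, where the Lipschitz constants are nontrivial, I would either assume $L_i\le1$ or carry $L_j$ into the row/column terms of the Young inequality.
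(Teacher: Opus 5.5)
Your proof is correct and follows essentially the paper's own argument in App.~\ref{I}: an $\ell_1$ contraction with Banach's fixed-point theorem gives existence and uniqueness, and $V=\tfrac12\|\boldsymbol{e}\|_2^2$ on the error dynamics gives stability, where your coefficient-wise Young bound $|e_i||e_j|\le\tfrac12(e_i^2+e_j^2)$ is the step used in App.~\ref{G} and is equivalent to the paper's symmetrization $|\boldsymbol{e}|^{\mathrm T}\tfrac{|W|+|W|^{\mathrm T}}{2}|\boldsymbol{e}|$. On the indexing you flag, no re-indexing is needed: with $L_i$ on the row index as stated, the same estimate makes $T$ a contraction in the weighted norm $\sum_i L_i|x_i|$ (you may take every $L_i>0$ at no cost because the inequalities are strict), so the literal condition Eq.~(\ref{eq:weight constraint 1}) already suffices for general Lipschitz constants.
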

{\it\textbf{Proof:}} Thm. \ref{Thm:EandU} is proved in App.~\ref{G} and App.~\ref{I}. It also applies for architecture stated in Eq. (\ref{eq:DNN}).\hfill\(\square\)

With constraints on weights, we establish the existence and uniqueness of global equilibrium, as well as the asymptotic stability of the equilibrium without which a stable stationary output is not guaranteed for the network. This distinguishes LieEDNN from traditional energy-based model in two aspects. First, the equilibrium is global and learnable with algorithm in Sec. \ref{sec:experiment}. Second, the connection weights are not symmetric like Hopfield neural network, instead, we apply specific constraints on the norm of weight matrix as in Thm. \ref{Thm:EandU} by Eq. (\ref{eq:weight constraint 1}) and Eq. (\ref{eq:weight constraint 2}). Analogous conditions are explored in \cite{Asymmetic,asymmetric2,qshnn,asymmetricHNN}.

\subsection{Learning Rules on Synaptic Weights and Strength}\label{sec:learning}
\paragraph{Gradient direction for desired equilibrium}The optimization target of learning by square root error is $E:=\|\boldsymbol{\xi}^*-\boldsymbol{\xi}_d \|_2$. We try to learn the equilibrium of dynamical system Eq. (\ref{eq:LieEDNN3}). At current stage, we aim to calculate the Euclidean gradient, which means the structure on geometric representation of Lie group on the weight matrix is not taken into account. And the equilibrium of dynamical system Eq. (\ref{eq:LieEDNN3}), denoted by $\boldsymbol{\xi}^*$, satisfies the sensitivity equation:
\begin{equation}
    \gamma\boldsymbol{\xi}^\ast=\mu W\phi(\boldsymbol{\xi}^*)+\mu\boldsymbol{b}
    \label{eq:sensitivity equation}
\end{equation}
Take the derivative of $w_{ij}$ on Eq. (\ref{eq:sensitivity equation}), for the left hand side, ${\partial_{w _{i j}}}\mathrm{LHS}=\gamma{\partial_{w _{i j}}}\boldsymbol{\xi}^*$. On the right hand side, we adopt tensor convention for matrix as $(1,1)$ tensor, and vector as $(1,0)$ tensor. By chain rule on derivative and compound rule, with Einstein summation convention, the $k$th entry of $\mathrm{RHS}$ is $\frac{\partial}{\partial w _{i j}} w _{l}^k \phi({\boldsymbol{\xi}}^{*})^l= w _{l}^k \frac{\partial }{\partial w _{i j}} \phi ( \boldsymbol{\xi} ^{*})^l +  \delta _{i}^k \delta _{j l} \phi (\boldsymbol{\xi}^*)^l=w _{l}^k \phi '\! \left( {\xi}^*_l \right)\frac{\partial {\xi}^*_l}{\partial w _{ij}} + \delta _{i}^k \phi\!\left({\xi}^*_j \right)$. Therefore,
\begin{equation}
    \frac{\partial}{\partial w _{i j}}\mathrm{RHS}=\frac{\partial}{\partial w _{i j}}\mu W\phi(\boldsymbol{\xi}^*)+\frac{\partial}{\partial w _{i j}}\mu\boldsymbol{b}=\mu W\cdot\frac{\partial \phi(\boldsymbol{\xi}^*)}{\partial \boldsymbol{\xi}^*}\cdot\frac{\partial \boldsymbol{\xi}^*}{\partial w _{i j}}+\mu\boldsymbol{e}_i\phi({\xi}^*_j)
\end{equation}
We denote $\boldsymbol{e}_i$ the standard basis of $\mathbb{R}^{6N}$, with $i$th entry to be one and any others to be zero. Combining both sides of the derivatives for Eq. (\ref{eq:sensitivity equation}), we have $\frac{\gamma}{\mu}\frac{\partial \boldsymbol{\xi}^*}{\partial w _{i j}}=W\cdot\frac{\partial \phi(\boldsymbol{\xi}^*)}{\partial \boldsymbol{\xi}^*}\cdot\frac{\partial \boldsymbol{\xi}^*}{\partial w _{i j}}+\boldsymbol{e}_i\phi({\xi}^*_j)$.
Thus, the gradient of equilibrium about weights for the sensitivity equation Eq. (\ref{eq:sensitivity equation}) is
\begin{equation}
    \frac{\partial \boldsymbol{\xi}^*}{\partial w _{i j}}=\frac{\mu}{\gamma}\cdot\left[\mathbb{I}_{6N}-\frac{\mu}{\gamma}W\cdot J_{\boldsymbol{\phi}}(\boldsymbol{\xi}^*)\right]^{-1}\!\!\cdot\phi(\xi_j^*)\,\boldsymbol{e}_i:=\frac{\mu\phi(\xi_j^*)}{\gamma}S^{-1}\boldsymbol{e}_i
    \label{eq:derivative}
\end{equation}
We define the difference $\boldsymbol{\delta}:=\boldsymbol{\xi}^*-\boldsymbol{\xi}_d$ and sensitivity matrix $S:=\mathbb{I}_{6N}-\frac{\mu}{\gamma}W\cdot J_{\boldsymbol{\phi}}(\boldsymbol{\xi}^*)$ and substitute it in Eq. (\ref{eq:derivative}). By chain rule, the gradient direction of $E$ about network weights for Eq. (\ref{eq:sensitivity equation}) is
\begin{align}
\frac{\partial E}{\partial w_{ij}}=\frac{\partial E}{\partial(\boldsymbol{\xi}^*-\boldsymbol{\xi}_d)}\cdot\frac{\partial(\boldsymbol{\xi}^*-\boldsymbol{\xi}_d)}{\partial\boldsymbol{\xi}^*}\cdot\frac{\partial \boldsymbol{\xi}^*}{\partial w_{ij}}=\frac{(\boldsymbol{\xi}^*-\boldsymbol{\xi}_d)}{E}\cdot\frac{\partial \boldsymbol{\xi}^*}{\partial w_{ij}}=\frac{\mu\phi(\xi_j)}{\gamma E}\boldsymbol{\delta}^{\mathrm{T}}S^{-1}\boldsymbol{e}_i
\label{eq:gradient}
\end{align}

%%%%%%%%%%%%%%%%%%%%%%%%%%%%%%%%%%%%%%%%%%%%%%%%%%%%%%%%%%%%%%%%%%%%%%%%%%%
\paragraph{Gradient direction of synaptic strength $\boldsymbol{\alpha}$}We treat connection strength $\boldsymbol{\alpha}$, introduced in Eq. (\ref{eq:LieEDNN}), to be learnable parameters. With the same optimization target $E$. Recall that each block of \(W\) is of the form $W_{ij}=\alpha_{ij}\mathcal{L}[\mathcal{A}d_{\theta_{ij}}]$, where \(W_{ij}\in\mathbb{R}^{6\times 6}\) is the \((i,j)\)th block of the weight matrix \(W\in\mathbb{R}^{6N\times 6N}\). For \(a,b=1,2,\dots,6\), the corresponding entry of row $a$, column $b$ of the block $W_{ij}$ is denoted by $(W_{ij})_{ab}=\alpha_{ij}(\mathcal{L}[\mathcal{A}d_{\theta_{ij}}])_{ab}$. By the chain rule of tensor differential, we can calculate:
\begin{align}
    \frac{\partial \boldsymbol{\xi}^*}{\partial \alpha_{ij}}&=
    \frac{\partial \boldsymbol{\xi}^*}{\partial W_{ij}}:
    \frac{\partial W_{ij}}{\partial \alpha_{ij}}
    =\sum_{a,b=1}^6\frac{\partial \boldsymbol{\xi}^*}{\partial \bigl(W_{ij}\bigr)_{ab}}\
    \frac{\partial \bigl(W_{ij}\bigr)_{ab}}{\partial \alpha_{ij}}=\sum_{a,b=1}^6\frac{\partial \boldsymbol{\xi}^*}{\partial \bigl(W_{ij}\bigr)_{ab}}\mathcal{L}[\mathcal{A}d_{\theta_{ij}}]_{ab}
    \label{eq:middle gradient of alpha}
\end{align}
where operation $:$ is double contraction of tensors. We define the block indices as the convention used in Eq. (\ref{eq:LieEDNN3}) by $I=:6(i-1),\ J:=6(j-1)$. Using the formula of the equilibrium gradient with respect to weights $w_{ij}$ from Eq. (\ref{eq:derivative}), we have $\frac{\partial \boldsymbol{\xi}^*}{\partial (W_{ij})_{ab}}=\frac{\mu}{\gamma}S^{-1}\phi(\xi^{*}_{J+b})\boldsymbol{e}_{I+a}$, where \(\boldsymbol{e}_{I+a}\) denotes the basis vector corresponding to the \(a\)th entry of the \(i\)th block, i.e. entry with index $6(i-1)+a$. Moreover, as $\alpha_{ij}$ only affects block $W_{ij}$ by $\frac{\partial (W_{ij})_{ab}}{\partial \alpha_{ij}}=\mathcal{L}[\mathcal{A}d_{\theta_{ij}}]_{ab}=\frac{w_{I+a,J+b}}{\alpha_{ij}}$, by the chain rule of $\alpha_{ij}\in\mathbb{R}$, the gradient direction of the equilibrium error with respect to \(\alpha_{ij}\) is calculated:
\begin{align}
\frac{\partial E}{\partial \alpha_{ij}}
&=
\frac{\partial E}{\partial(\boldsymbol{\xi}^*-\boldsymbol{\xi}_d)}
\cdot
\frac{\partial(\boldsymbol{\xi}^*-\boldsymbol{\xi}_d)}{\partial\boldsymbol{\xi}^*}
\cdot
\frac{\partial \boldsymbol{\xi}^*}{\partial \alpha_{ij}}
=\sum_{a,b=1}^{6}\frac{\mu\phi(\xi^{*}_{J+b})}{\gamma E}\,
\boldsymbol{\delta}^{\mathrm T}S^{-1}
\mathcal{L}[\mathcal{A}d_{\theta_{ij}}]_{ab}\,
\boldsymbol{e}_{I+a}
\label{eq:gradient2}
\end{align}
As we already obtained the gradient on $w_{ij}$ from Eq. (\ref{eq:gradient}), substituting Eq. (\ref{eq:middle gradient of alpha}) to Eq. (\ref{eq:gradient2}), it is reduced to the following to calculate the gradient of connection strength $\alpha_{ij}$:
\begin{equation}
    \frac{\partial E}{\partial \alpha_{ij}}
    =\sum_{a,b=1}^{6}
    \frac{\partial E}{\partial \bigl(W_{ij}\bigr)_{ab}}
    \frac{\partial (W_{ij})_{ab}}{\partial \alpha_{ij}}=\sum_{a,b=1}^{6}\frac{\partial E}{\partial w_{I+a,J+b}}\frac{w_{I+a,J+b}}{\alpha_{ij}}
    \label{eq:gradient3}
\end{equation}
We use Eq. (\ref{eq:gradient}) with manifold projection and Eq. (\ref{eq:gradient3}) directly for the algorithm of network learning. Since geometric constraint is imposed on $W$, for $\mathrm{SE}(3)$, we develop approaches as follows.
\paragraph{Retraction to representation manifold}To apply Lie group embedding, as we develop the representative dynamics in Eq. (\ref{eq:LieEDNN2}), block-wise structure of $\mathcal{L}[\Ad_g]$ in Eq. (\ref{eq:L[Ad]}) should be imposed on weight matrix $W\in\mathbb{R}^{6N\times 6N}$ of Eq. (\ref{eq:LieEDNN3}). However, the gradient flow in Euclidean space does not preserve such manifold structure automatically. Our strategy is to conduct metric projection from the weights to the representative manifold periodically with a certain number of gradient descent iterations. This approach is essentially a first-order approximation of Riemann gradient \cite{Riemann1,Riemann2}, and its effectiveness is supported by empirical results in Sec. \ref{sec:experiment}. 

\begin{lemma}[Metric Projection to \(\mathrm{SO}(3)\)]
Given an arbitrary matrix \(\Pi\in\mathbb{R}^{3\times 3}\), the metric projection from \(\Pi\) to \(\mathrm{SO}(3)\) under the Frobenius norm is as follows:
\begin{equation}
\mathrm{Proj}_{\mathrm{SO}(3)}(\Pi)=U\,\mathrm{diag}\bigl(1,1,\det(UV^{\mathrm T})\bigr)V^{\mathrm T}
\end{equation}
where \(\Pi=U\Sigma V^{\mathrm T}\) is the singular value decomposition (SVD) of \(\Pi\), with \(\Sigma=\mathrm{diag}(\sigma_1,\sigma_2,\sigma_3)\) and singular values \(\sigma_1\ge\sigma_2\ge\sigma_3\ge0\). Equivalently, $\mathrm{Proj}_{\mathrm{SO}(3)}(\Pi)=\operatorname*{arg\,min}_{R\in\mathrm{SO}(3)}\|\Pi-R\|_F$.
\label{lem:metric projection}
\end{lemma}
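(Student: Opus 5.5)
Since every $R\in\mathrm{SO}(3)$ satisfies $\|R\|_F^2=\mathrm{tr}(R^{\mathrm T}R)=3$, we have $\|\Pi-R\|_F^2=\|\Pi\|_F^2+3-2\,\mathrm{tr}(R^{\mathrm T}\Pi)$. Minimizing the distance is therefore equivalent to maximizing the linear functional $\mathrm{tr}(R^{\mathrm T}\Pi)$ over $\mathrm{SO}(3)$; this is the orthogonal Procrustes (Wahba) problem. Substituting the SVD $\Pi=U\Sigma V^{\mathrm T}$ and using the cyclic property of the trace gives $\mathrm{tr}(R^{\mathrm T}U\Sigma V^{\mathrm T})=\mathrm{tr}(Z\Sigma)$ with $Z:=V^{\mathrm T}R^{\mathrm T}U$. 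The map $R\mapsto Z$ is a bijection of $\mathrm{SO}(3)$ onto $\{Z\in\mathrm{O}(3):\det Z=\det(UV^{\mathrm T})\}$, since $\det Z=\det(U)\det(V)=\det(UV^{\mathrm T})$. The problem thus reduces to maximizing $\sum_k\sigma_k Z_{kk}$ over orthogonal $Z$ with prescribed determinant $s:=\det(UV^{\mathrm T})\in\{\pm1\}$.

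\textbf{Case $s=1$.} Each $|Z_{kk}|\le1$ because the columns of $Z$ are unit vectors. Hence $\sum_k\sigma_kZ_{kk}\le\sigma_1+\sigma_2+\sigma_3$, with equality at $Z=\mathbb{I}_3$, which lies in $\mathrm{SO}(3)$.

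\textbf{Case $s=-1$.} This is the main obstacle: $Z=\mathbb{I}_3$ is now excluded, and we must show the maximum is $\sigma_1+\sigma_2-\sigma_3$, attained at $Z=\mathrm{diag}(1,1,-1)$. I would use the standard fact that the diagonal of a matrix in $\mathrm{SO}(3)$ lies in the convex hull of $(1,1,1),(1,-1,-1),(-1,1,-1),(-1,-1,1)$. For $\det Z=-1$, applying this to $-Z\in\mathrm{SO}(3)$ shows that the diagonal lies in the hull of the negated vectors. The linear objective $\langle\sigma,\mathrm{diag}\,Z\rangle$ is maximized at a vertex, and since $\sigma_1\ge\sigma_2\ge\sigma_3\ge0$, the best vertex is $(1,1,-1)$. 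If I would rather not cite the convex-hull fact, I would first try a direct inequality instead. For orthogonal $Z$ with $\det Z=-1$, the claim is $Z_{11}+Z_{22}-Z_{33}\le1+1+1$ together with the finer bound $\sum\sigma_kZ_{kk}\le\sigma_1+\sigma_2-\sigma_3$. The route is to write $-Z=e^{[\boldsymbol t]^\wedge}$ via Rodrigues' formula, which makes $\mathrm{diag}(-Z)=\cos\theta\,\mathbf{1}+(1-\cos\theta)(n_1^2,n_2^2,n_3^2)$ explicit, and then optimize over $\theta$ and the unit axis $n$. Here the objective is affine in $n_k^2$ and in $\cos\theta$, so the extremes occur at $n=e_3$, $\theta=\pi$, which gives the vertex above.

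Finally, I would translate back. Combining the two cases, $Z=\mathrm{diag}(1,1,s)$, so $R^{\mathrm T}=VZU^{\mathrm T}$ and $R=U\,\mathrm{diag}(1,1,\det(UV^{\mathrm T}))V^{\mathrm T}$, which is the claimed formula. I would also remark that this $R$ is independent of the choice of SVD, and that it is unique whenever $\sigma_2>\sigma_3$ (or $\sigma_3>0$ when $s=1$). In degenerate cases the formula still returns a minimizer, which suffices for the $\arg\min$ statement read as "a minimizer".
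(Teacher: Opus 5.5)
Your proof is correct. Its skeleton matches the paper's: the same reduction to maximizing $\operatorname{tr}(\Sigma Q)$ over orthogonal $Q$ with $\det Q=\det(UV^{\mathrm T})$ (your $Z$ is the paper's $Q^{\mathrm T}$), and the same one-line bound when that determinant is $+1$. The routes differ only in the case $s=-1$.

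The paper needs no description of the diagonal of a rotation there. It regroups the objective as $\operatorname{tr}(\Sigma Q)=(\sigma_1-\sigma_3)Q_{11}+(\sigma_2-\sigma_3)Q_{22}+\sigma_3\operatorname{tr}(Q)$. The ordering $\sigma_1\ge\sigma_2\ge\sigma_3\ge0$ makes every coefficient nonnegative. It then uses only $Q_{kk}\le 1$ and $\operatorname{tr}(Q)\le 1$, which follows from the eigenvalues $-1,e^{\pm i\theta}$.

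Your convex-hull route invokes Horn's theorem on diagonals of $\mathrm{SO}(3)$, which is much heavier than necessary. Your Rodrigues route is self-contained and works. However, with $c=\cos\theta$ the objective $-c\sum_k\sigma_k-(1-c)\sum_k\sigma_k n_k^2$ is bilinear, not jointly affine, so optimize in two stages. First, for fixed $c$, use $1-c\ge0$ to minimize $\sum_k\sigma_k n_k^2$, which gives $\sigma_3$ at $n=e_3$. The result $-c(\sigma_1+\sigma_2)-\sigma_3$ is then maximized at $c=-1$.

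The paper's argument is shorter. Yours gives more information, namely the full set of attainable diagonals. It also handles uniqueness correctly, and your uniqueness conditions are right. The paper's claims that equality holds \emph{if and only if} $Q=\mathbb{I}_3$, respectively $Q=\mathrm{diag}(1,1,-1)$, fail when $\sigma_2=0$ in the first case or $\sigma_2=\sigma_3$ in the second. As you note, only attainment of the bound is needed for the $\arg\min$ statement.

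One correction to your closing remark: the formula is independent of the choice of SVD only when the minimizer is unique. For $\Pi=0$, any $U,V\in\mathrm{O}(3)$ give an SVD, and the formula can return any rotation. Finally, the intermediate inequality $Z_{11}+Z_{22}-Z_{33}\le 3$ is trivially true and plays no role, so you can drop it.
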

{\it\textbf{Proof:}} Lem. \ref{lem:metric projection} is proved in App. \ref{A}.\hfill$\square$

\begin{theorem}[Metric Projection to Manifold $\mathscr{L}:=\mathcal{L}{[}Ad_{\mathrm{SE}(3)}{]}$ Block-wisely] Given an arbitrary matrix $\Theta\in\mathbb{R}^{6\times 6}$ with $3\times 3$ block division $\mathcal{A},\mathcal{B},\mathcal{C},\mathcal{D}\in\mathbb{R}^{3\times 3}$, the block-wise projection from $\Theta$ to $\mathscr{L}$ under the metric of Frobenius norm is calculated by the following formula:
\begin{align}
    \mathrm{Proj}_\mathscr{L}(\Theta)=\left[\begin{matrix}
    \mathrm{Proj}_{\mathrm{SO}(3)}(\mathcal{M})&O_{3\times 3}\\[0.2cm]
    [\boldsymbol{t_\Theta}]^\wedge\mathrm{Proj}_{\mathrm{SO}(3)}(\mathcal{M})&\mathrm{Proj}_{\mathrm{SO}(3)}(\mathcal{M})
    \end{matrix}\right]\in\mathscr{L}
    \label{eq:metric projection}
\end{align}
where we denote the arithmetic average of $\mathcal{A},\ \mathcal{B}$ and the skew-symmetry part of $\mathcal{C}R^\mathrm{T}$ as follows:
\begin{equation}
    \Theta:=\left[\begin{matrix}
    \mathcal{A}&\mathcal{B}\\[0.1cm]
    \mathcal{C}&\mathcal{D}
    \end{matrix}\right],\quad \mathcal{M}:=\frac{\mathcal{A}+\mathcal{D}}{2},\quad [\boldsymbol{t}_\Theta]^\wedge:=\frac{\mathcal{C}R^\mathrm{T}-R\ \!\mathcal{C^\mathrm{T}}}{2},\quad R:=\mathrm{Proj}_{\mathrm{SO}(3)}(\mathcal{M})
    \label{eq:metric projection2}
\end{equation}
\label{thm:Metric Projection to Manifold L}
\end{theorem}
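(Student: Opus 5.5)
We minimize $F(R,\boldsymbol{t}):=\|\Theta-\Lambda(R,\boldsymbol{t})\|_F^2$ over $(R,\boldsymbol{t})\in\mathrm{SO}(3)\times\mathbb{R}^3$, where $\Lambda(R,\boldsymbol{t})$ is the generic element of $\mathscr{L}$ from Eq. (\ref{eq:L[Ad]}). Since the Frobenius norm splits over the four $3\times3$ blocks,
\begin{equation*}
F(R,\boldsymbol{t})=\|\mathcal{A}-R\|_F^2+\|\mathcal{B}\|_F^2+\|\mathcal{C}-[\boldsymbol{t}]^\wedge R\|_F^2+\|\mathcal{D}-R\|_F^2 .
\end{equation*}
The term $\|\mathcal{B}\|_F^2$ is constant, because the upper-right block of every element of $\mathscr{L}$ vanishes. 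We will therefore solve for $\boldsymbol{t}$ with $R$ fixed, and then for $R$.

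\textbf{Step 1 (the $\boldsymbol{t}$-subproblem).} Right multiplication by an orthogonal $R$ is a Frobenius isometry, so
\begin{equation*}
\|\mathcal{C}-[\boldsymbol{t}]^\wedge R\|_F=\|\mathcal{C}R^{\mathrm T}-[\boldsymbol{t}]^\wedge\|_F .
\end{equation*}
Symmetric and skew-symmetric matrices are Frobenius-orthogonal. Hence the nearest skew-symmetric matrix to $\mathcal{C}R^{\mathrm T}$ is its skew part $\tfrac12(\mathcal{C}R^{\mathrm T}-R\,\mathcal{C}^{\mathrm T})$, which gives $[\boldsymbol{t}_\Theta]^\wedge$ in Eq. (\ref{eq:metric projection2}). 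The minimal value of this term is $\|\mathrm{sym}(\mathcal{C}R^{\mathrm T})\|_F^2$.

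\textbf{Step 2 (the $R$-subproblem).} Expanding the diagonal-block terms gives
\begin{equation*}
\|\mathcal{A}-R\|_F^2+\|\mathcal{D}-R\|_F^2=2\|\mathcal{M}-R\|_F^2+\mathrm{const},
\end{equation*}
with $\mathcal{M}=\tfrac12(\mathcal{A}+\mathcal{D})$; this uses $\|R\|_F^2=3$. Lemma \ref{lem:metric projection} then gives the minimizer $R=\mathrm{Proj}_{\mathrm{SO}(3)}(\mathcal{M})$.

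\textbf{Main obstacle.} The $\mathcal{C}$-residual $\|\mathrm{sym}(\mathcal{C}R^{\mathrm T})\|_F^2$ left over from Step 1 also depends on $R$, so jointly minimizing over $R$ does not in general reduce to projecting $\mathcal{M}$. The statement says explicitly that the projection is computed \emph{block-wise}. I will read it as the two-stage procedure above: first choose $R$ from the rotation-carrying diagonal blocks (Lemma \ref{lem:metric projection}), then choose $\boldsymbol{t}$ optimally given $R$ (Step 1). Under that reading, both steps are exact minimizations of their respective block terms. I will also note the sufficient conditions under which the two-stage answer coincides with the exact joint projection: $\mathcal{C}=O$, or $\mathcal{C}R^{\mathrm T}$ already skew-symmetric, which includes the case $\Theta\in\mathscr{L}$. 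In these cases the residual vanishes for the chosen $R$, so the result is a genuine global minimizer.

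If a stronger claim is intended, I would first try to show that the cross term is second-order near $\mathscr{L}$, so that the two-stage answer is a first-order approximation of the true metric projection. This is consistent with the paper's framing of projection as a first-order Riemannian approximation.

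Finally, membership in $\mathscr{L}$ is immediate. The constructed matrix has the exact form of Eq. (\ref{eq:L[Ad]}), with $R\in\mathrm{SO}(3)\subset\mathrm{O}(3)$ and $\boldsymbol{t}_\Theta\in\mathbb{R}^3$.
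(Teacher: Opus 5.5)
Your proposal is correct under the block-wise reading, and the paper's own proof treats the statement the same way. It likewise projects $\mathcal{M}$ via Lem.~\ref{lem:metric projection}, sets $\mathcal{B}$ to $O_{3\times 3}$, and takes the skew part of $\mathcal{C}R^{\mathrm T}$, openly conceding that this is not the strict joint metric projection. Your parallelogram identity for the diagonal blocks and your isometry-plus-orthogonality argument for $\mathcal{C}$ supply the stage-wise optimality that the paper only asserts.

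One caution about your speculative aside: that route would fail. Work near a point $\Lambda(R_0,\boldsymbol{t}_0)\in\mathscr{L}$ and write $R=e^{\Omega}R_0$. The $\mathcal{C}$-residual then contains $\mathrm{sym}([\boldsymbol{t}_0]^\wedge\Omega)$, so it is small, but it is of the same quadratic order as the diagonal cost it competes with. Hence for $\boldsymbol{t}_0\neq 0$ the two-stage map is \emph{not} a first-order approximation of the true projection. For instance, take $R_0=\mathbb{I}_3$, $\boldsymbol{t}_0=\boldsymbol{e}_3$, and perturb $\mathcal{C}$ by $\tfrac{\epsilon}{2}(\boldsymbol{e}_1\boldsymbol{e}_3^{\mathrm T}+\boldsymbol{e}_3\boldsymbol{e}_1^{\mathrm T})$. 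The block-wise output is unchanged, whereas the exact projection rotates by approximately $\epsilon/9$ about $\boldsymbol{e}_1$.
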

{\it\textbf{Proof:}} Thm. \ref{thm:Metric Projection to Manifold L} is proved in App. \ref{F}. \hfill\(\square\)

\section{Experiments}\label{sec:experiment}
\paragraph{Experimental Design and Results}We evaluate LieEDNN on a prototype of planning module for a telescopic manipulator represented by four $\mathrm{SE}(3)$ representative neurons shown in Fig. \ref{fig:arm}. The desired state \(\boldsymbol{\xi}_d\in\mathbb R^{6N}\) is sampled in the Lie algebra coordinate space, and each block is decoded to a local \(\mathrm{SE}(3)\) pose through exponential retraction in App. \ref{L} for robot control. The experiment studies whether the learned autonomous dynamics can drive the network equilibrium to \(\boldsymbol{\xi}_d\), while preserving the adjoint-induced block structure of weights through periodic manifold projection. We compare different projection periods $\mathscr{P}$ and include a no-projection ablation, where the same Euclidean gradient update is used but the manifold projection is disabled. The results report convergence loss Fig. \ref{fig:loss}, accuracy Fig. \ref{fig:accuracy}, final weight structures Fig. \ref{fig:heatmap1}, and the evolution of the learned equilibrium.
\paragraph{Learning Algorithm for LieEDNN}The algorithm Alg. \ref{alg.1} in App. \ref{M} provides the learning process of LieEDNN and the choices of hyper-parameters for the experiments, where the approaches in Sec. \ref{sec:methodology} are implemented. Periodic projection algorithm is essentially a first-order approximation of Riemann gradient \cite{Riemann1,Riemann2}, where descent curves of the loss are perturbed but still oscillating towards convergence, as shown in Fig. \ref{fig:loss}. 

\begin{figure}[h]
\centering
  \begin{subfigure}[t]{0.24\textwidth}
    \centering
    \includegraphics[width=\textwidth]{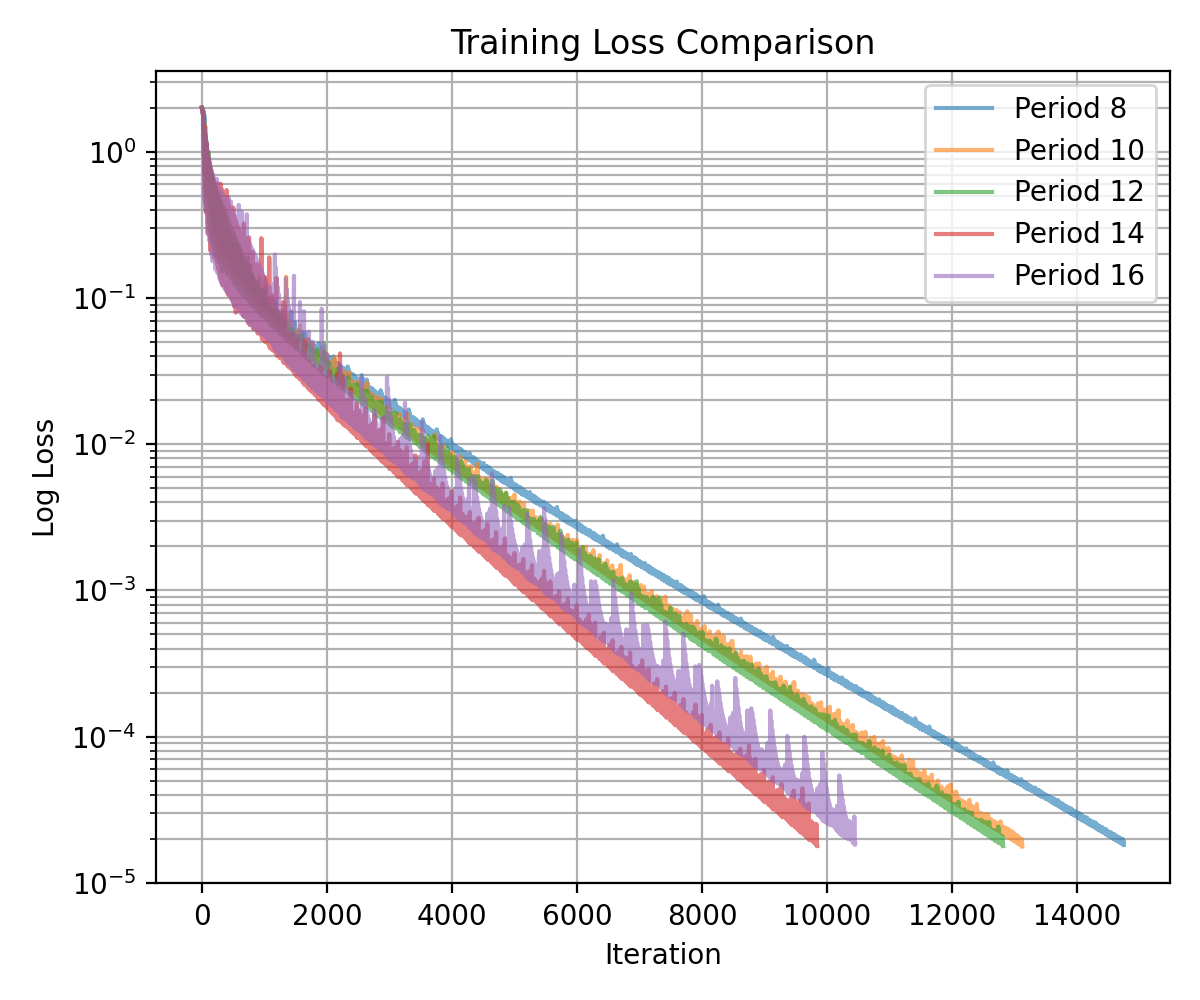}
    \caption{Training Loss}
    \label{fig:loss}
  \end{subfigure}%
  \hfill
  \begin{subfigure}[t]{0.24\textwidth}
    \centering
    \includegraphics[width=\textwidth]{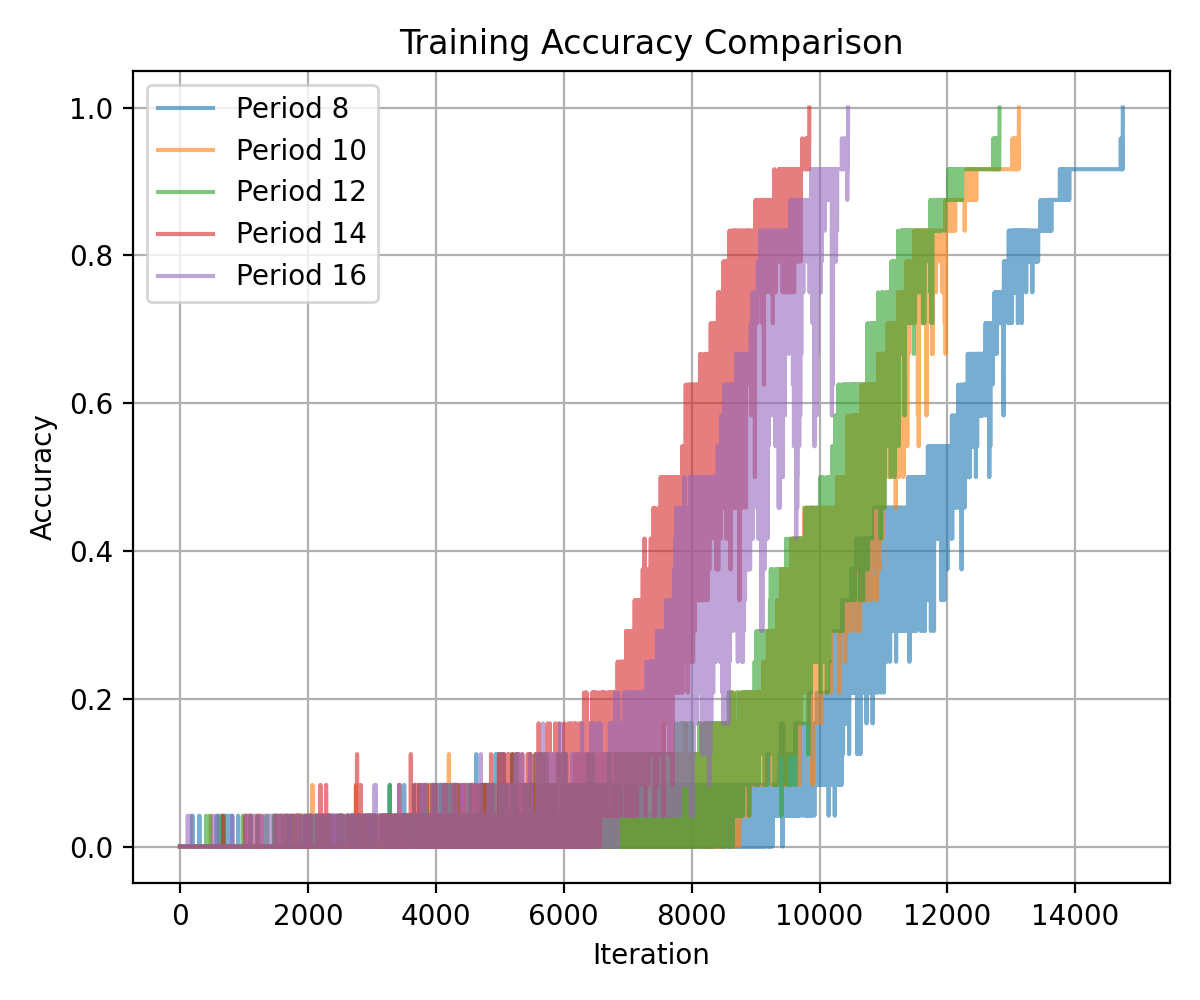}
    \caption{Training Accuracy}
    \label{fig:accuracy}
  \end{subfigure}
  \hfill
  \begin{subfigure}[t]{0.24\textwidth}
    \centering
    \includegraphics[width=\textwidth]{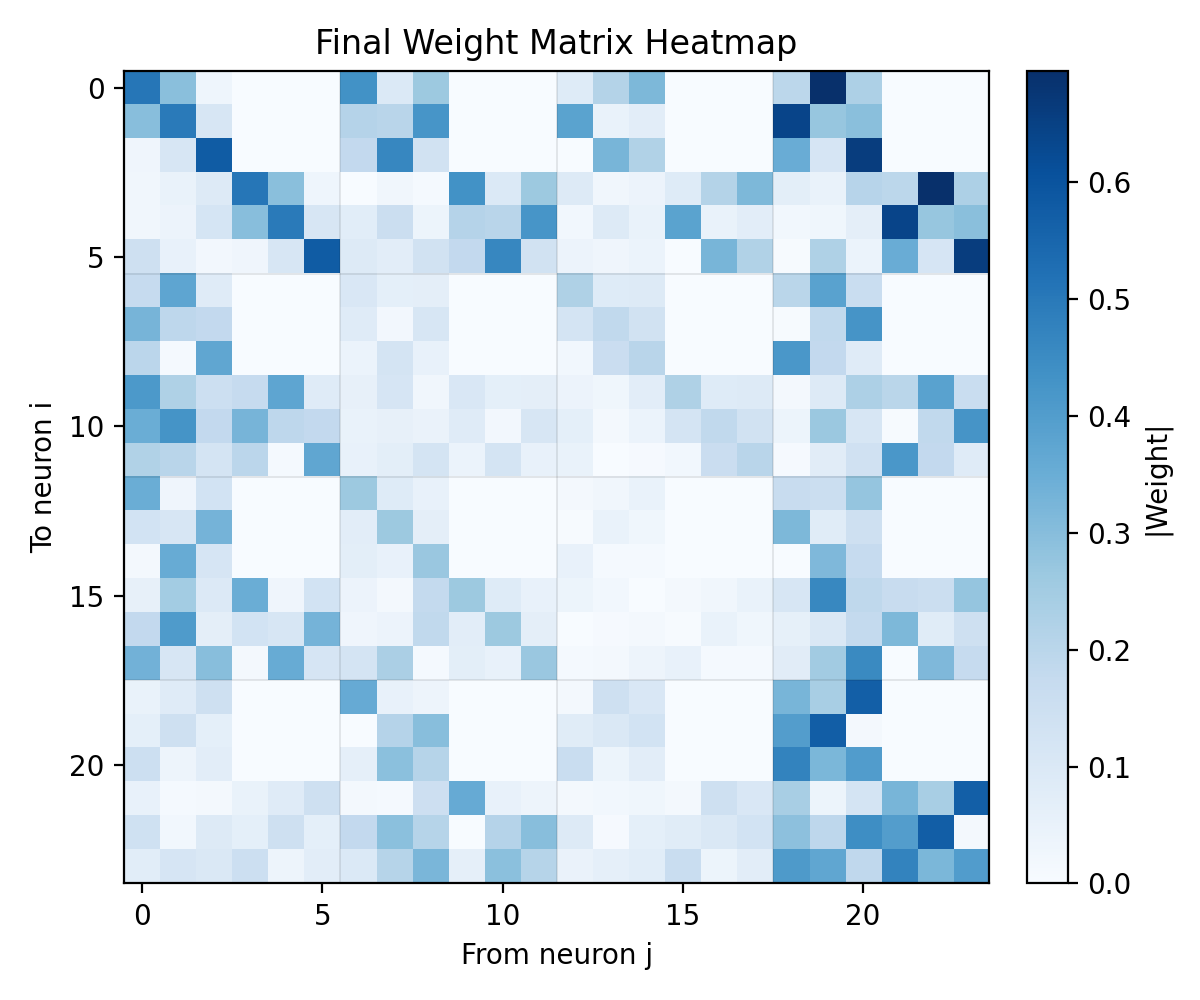}
    \caption{Lie Group Weights}
    \label{fig:heatmap1}
  \end{subfigure}
  \hfill
  \begin{subfigure}[t]{0.24\textwidth}
    \centering
    \includegraphics[width=\textwidth]{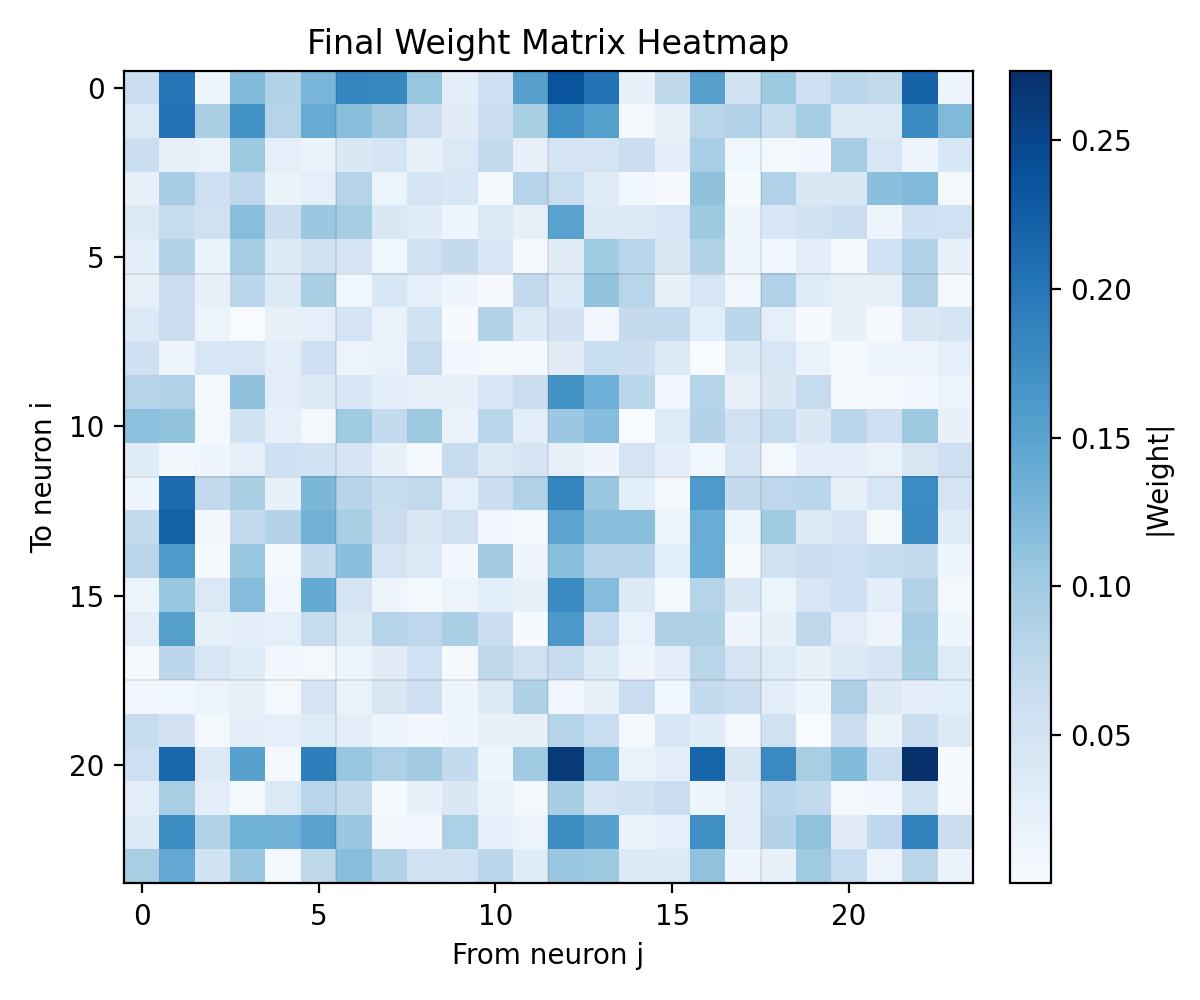}
    \caption{Euclidean Weights}
    \label{fig:heatmap2}
  \end{subfigure}
\label{fig:experiment2}
\caption{(a). Loss curves of projective manifold learning. Periodic projection will cause fluctuation in loss descent. As shown in the plot, larger period $\mathscr{P}\in\mathbb{N^*}$, which means more iterations of steepest gradient step, converges faster, yet the convergences are noticeable perturbative. The range of projection period is $8\le\mathscr{P}\le16$. Out of this interval, too small periods take too much time to train, and too large periods fail the training as the weights seriously deviate from Lie group manifold. (b). With an appropriate range of periods, learning process reaches perfect accuracy where planning trajectory evolves to the target posture precisely. (c). Trained weights not only guarantee the correct output, i.e. the network equilibrium, but also preserve the geometric structure. As illustrated, every $6\times 6$ block of weight matrix is constrained in $\mathrm{SE}(3)$ induced adjoint action, i.e. $\mathcal{L}[\Ad_{\mathrm{SE}(3)}]$, with symmetric blocks of Eq. (\ref{eq:L[Ad]}). The interactions of neurons are therefore equipped with geometric coupling of $\mathrm{SE}(3)$. (d). Ablation of manifold projection. weight matrix loses the block-wise structure. Although target equilibrium is still reached, neural network dynamics loses its geometric and physical meaning, thereby no longer provides geometrically interpretable recurrent coupling.}
\end{figure}

\begin{figure}[h]
    \centering
    %(a)
    \begin{subfigure}{0.48\textwidth}
        \centering
        \includegraphics[width=\linewidth]{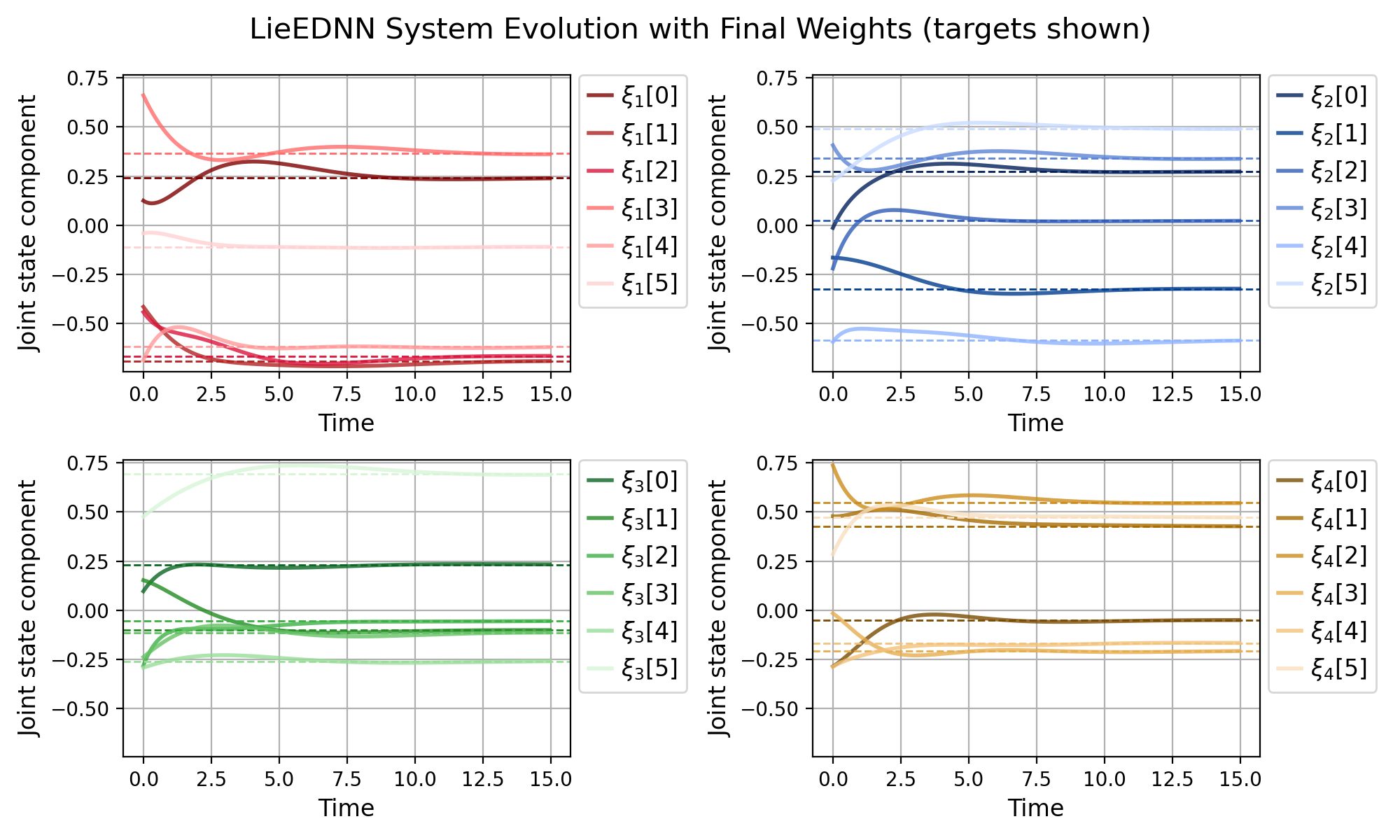}
        \caption{Network Dynamics Trajectory}
        \label{fig:trajectory}
    \end{subfigure}
    \hspace*{0.2cm}
    %(b)
    \begin{subfigure}{0.48\textwidth}
        \centering
        \includegraphics[width=\linewidth]{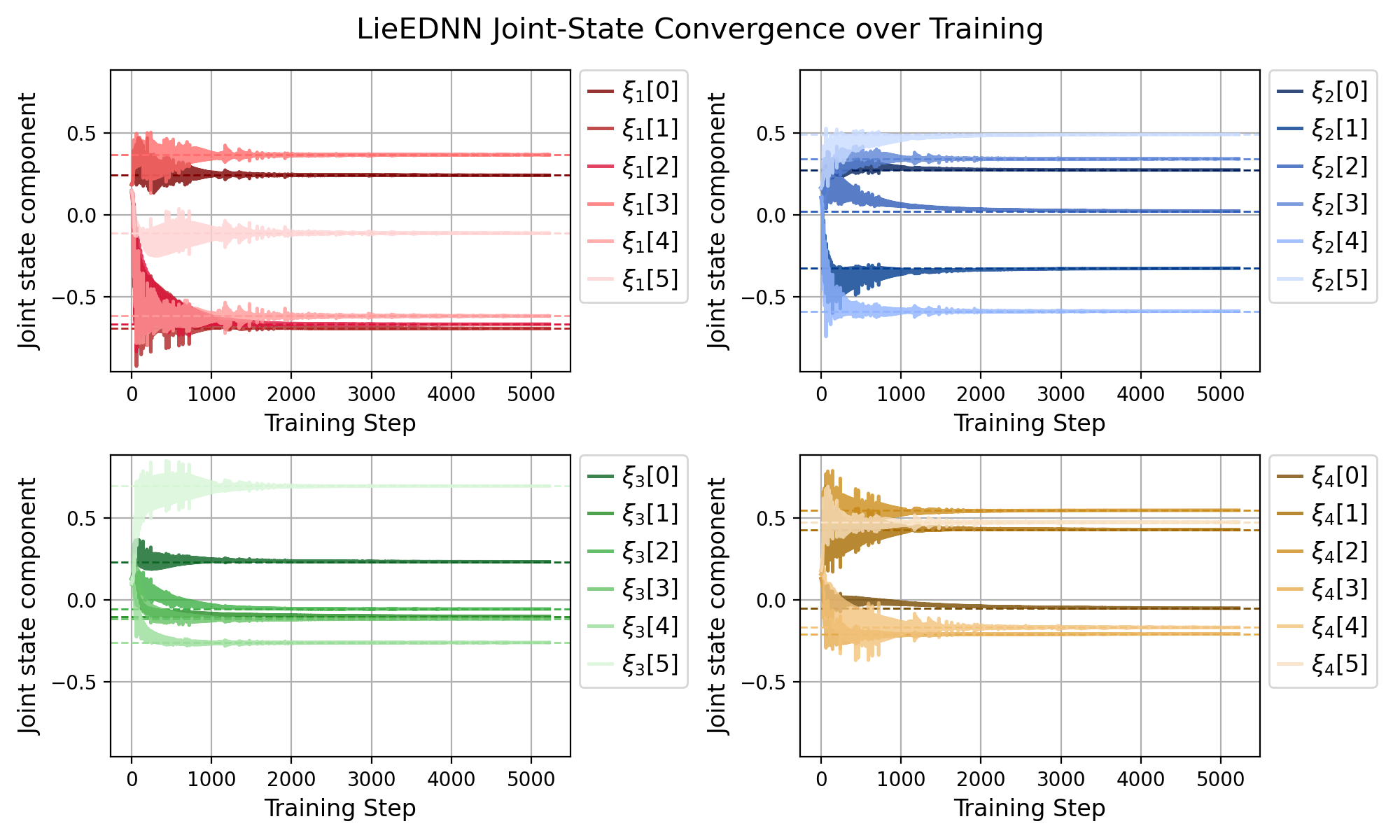}
        \caption{Changes of Equilibrium over Training}
        \label{fig:training}
    \end{subfigure}
    \caption{(a). Evolution trajectory of trained LieEDNN. Network dynamics converge to the target equilibrium $\boldsymbol{\xi}_d$, where $\boldsymbol{\xi}_i$ denote the $i$th representative neuron parameterized in $\mathbb{R}^6$, i.e. $\mathcal{P}(\boldsymbol{\xi}_i),\ \boldsymbol{\xi}_i\in\mathfrak{se}(3)$. By retraction function $\mathrm{Exp}(\boldsymbol{\xi}_i)$, the neural state is interpreted in $\mathrm{SE}(3)$, which corresponding to the posture of a telescopic joint, as in Fig. \ref{fig:arm}. LieEDNN generates a vector field in the parameter space of the manipulator that guarantees the convergence and stability by Thm. \ref{Thm:EandU}. (b). By taking the gradient of the target equilibrium about weights in Sec. \ref{sec:learning}, the equilibrium of the network is changing over training epochs. The dashed lines represent the targets, the same as the trained network converges to in Fig. \ref{fig:trajectory}. Experiments shows that the equilibria converge to the target as designed.}
    \label{fig:experiment1}
\end{figure}

\section{Discussion}
\textbf{Application Perspective}. For manipulator planning, LieEDNN can be positioned as a geometrically structured neural dynamical framework, as shown in Sec. \ref{sec:experiment}. Classical robotic motion planning is commonly studied using sampling-based planners such as RRT-Connect and RRT$^*$ \cite{rrtconnect,rrtstar}, and trajectory optimization approaches such as CHOMP, STOMP, and TrajOpt \cite{chomp,stomp,trajopt}, which search in the configuration space and optimize feasible trajectories under constraints of  collision, joint, velocity, and acceleration, etc. In contrast, LieEDNN learns a continuous autonomous neural dynamical system whose desired manipulator state is represented as an equilibrium, while the transient motion is generated by the learned dynamics itself. This places it to the category of dynamical system based motion generation, such as dynamic movement primitives (DMP) and stable learned dynamical systems (stableDS) \cite{stableDS}, Riemann motion policies (RMP) \cite{rmpflow} and quaternionic Hopfield structured neural planner (QSHNN) \cite{qshnn}. The main distinction is that the recurrent neural coupling is constrained by Lie group induced adjoint representations, so the manipulator state is not treated as an unconstrained Euclidean vector but an algebraic combination. And the framework can be extended to rich representations more than $\mathrm{SE}(3)$. For mixed revolute-prismatic or telescopic manipulators, the \(\mathrm{SE}(3)\) structured blocks provide a principled way to encode rotational and translational coupling, as described in Sec. \ref{sec:methodology}. Thus, LieEDNN provides a geometrically consistent neural motion generator that can be further combined with trajectory constraints such as joint limits and collision avoidance.

\textbf{Comparative Analysis}. Compared with standard feedforward neural networks, LieEDNN is not a static input-output approximator, but a continuous-time recurrent neural dynamical system whose output is represented by the equilibrium of the dynamics. Compared with LSTM \cite{LSTM} and Transformer architectures \cite{transformer}, LieEDNN is not designed primarily for sequence modelling through hidden-state memory or attention. Instead, it focuses on learning stable autonomous dynamics with explicit geometric constraints on recurrent coupling. In this sense, LieEDNN is closer to continuous Hopfield-type neural networks \cite{Hopfield:2007} and neural ODEs \cite{neuralODE}, but it differs from them by imposing Lie group induced block structures on the weight matrix and strict stability guarantees under the sufficient conditions in Thm.~\ref{Thm:EandU} enforced in our experiments. Compared with complex-valued \cite{complex-valued,complex-valued2} and quaternion-valued neural networks \cite{quaternionRecurrent,quaternionConvolution}, the proposed framework is more general: the algebraic structure is not fixed to a specific number field, but can be induced from any chosen matrix Lie group including complex number by Lie group $\mathbb{C}^\times\!\cong\mathbb{R}^+\!\times \!\mathrm{U}(1)\!\cong\!\mathbb{R}\!\times\!\mathrm{SO}(2)$ and quaternion by Lie group $\mathbb{H}^\times\!\cong\mathbb{R}^+\!\times \!\mathrm{S}^3\!\cong\!\mathbb{R}\!\times\!\mathrm{SU}(2)$. Moreover, its main value lies in reliability and interpretability: stability can be rigorously studied through dynamical systems theory, while geometric consistency is preserved by projective manifold learning. This makes LieEDNN suitable for tasks of planning dynamical systems equipped with clear physical and geometric structures rather than only optimizing a prediction driven by large datasets.
% Limitations and Future Work for LieEDNN can be found in App. \ref{J}.

\section{Conclusions}
In this paper, we proposed LieEDNN, a Lie group embedded neural dynamical framework for learning stable continuous-time dynamics under geometric constraints. By transferring neural states from a Lie group \(G\) to its Lie algebra \(\mathfrak{g}\), and representing the adjoint action \(\Ad_G(\cdot)\) through the induced linear mapping \(\mathcal{L}[\Ad_G]\), LieEDNN remains compatible with standard neural network operations while preserving the underlying geometric structure. We also establish asymptotic stability of the equilibrium and derive supervised learning rules for both structured weights and connection strengths through periodic projective manifold learning. Experiments on telescopic manipulator planning problem with Lie group \(\mathrm{SE}(3)\) show that LieEDNN can learn and generate geometrically consistent trajectories in a high-dimensional parameter space. These results suggest a principled route toward neural models for learning structured dynamical systems effectively with geometric interpretability.

\small{
\bibliographystyle{plainnat}
\bibliography{bibliography}
}
%%%%%%%%%%%%%%%%%%%%%%%%%%%%%%%%%%%%%%%%%%%%%%%%%%%%%%%%%%%%
\newpage
\appendix

\section{Proof of Lem. \ref{lem:metric projection}: Metric Projection to Manifold $\mathrm{SO}(3)$}\label{A}
\begin{lemma}[Metric Projection to \(\mathrm{SO}(3)\)]
Given an arbitrary matrix \(\Pi\in\mathbb{R}^{3\times 3}\), the metric projection from \(\Pi\) to \(\mathrm{SO}(3)\) under the Frobenius norm is:
\begin{equation}
\mathrm{Proj}_{\mathrm{SO}(3)}(\Pi)=U\,\mathrm{diag}\bigl(1,1,\det(UV^{\mathrm T})\bigr)V^{\mathrm T}
\end{equation}
where \(\Pi=U\Sigma V^{\mathrm T}\) is the singular value decomposition (SVD) of \(\Pi\), with \(\Sigma=\mathrm{diag}(\sigma_1,\sigma_2,\sigma_3)\) and singular values \(\sigma_1\ge\sigma_2\ge\sigma_3\ge0\). Equivalently, $\mathrm{Proj}_{\mathrm{SO}(3)}(\Pi)=\operatorname*{arg\,min}_{R\in\mathrm{SO}(3)}\|\Pi-R\|_F$.
\end{lemma}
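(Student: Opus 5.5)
We reduce the nearest-point problem to a linear maximization over $\mathrm{SO}(3)$ and then solve that maximization with the SVD, in the style of the classical Kabsch/Umeyama argument.

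\textbf{Reduction to a trace maximization.} For any $R\in\mathrm{SO}(3)$ we have $R^{\mathrm T}R=\mathbb{I}_3$, so
\begin{equation*}
\|\Pi-R\|_F^2=\|\Pi\|_F^2-2\,\mathrm{tr}(R^{\mathrm T}\Pi)+3 .
\end{equation*}
Minimizing over $R\in\mathrm{SO}(3)$ is therefore equivalent to maximizing $\mathrm{tr}(R^{\mathrm T}\Pi)$. A maximizer exists because $\mathrm{SO}(3)$ is compact.

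\textbf{Change of variables.} Substitute $\Pi=U\Sigma V^{\mathrm T}$. Then $\mathrm{tr}(R^{\mathrm T}U\Sigma V^{\mathrm T})=\mathrm{tr}(Z\Sigma)$ with $Z:=V^{\mathrm T}R^{\mathrm T}U$. The matrix $Z$ is orthogonal with $\det Z=\det(UV^{\mathrm T})=:s\in\{\pm1\}$. The map $R\mapsto Z$ is a bijection from $\mathrm{SO}(3)$ onto $\{Z\in\mathrm{O}(3):\det Z=s\}$. The problem becomes
\begin{equation*}
\text{maximize } \sum_{i=1}^3 \sigma_i z_{ii} \quad \text{over } Z\in\mathrm{O}(3),\ \det Z=s .
\end{equation*}

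\textbf{Case $s=1$.} Since $|z_{ii}|\le 1$ and $\sigma_i\ge0$, the objective is at most $\sigma_1+\sigma_2+\sigma_3$. This bound is attained at $Z=\mathbb{I}_3$, which gives $R=UV^{\mathrm T}=U\,\mathrm{diag}(1,1,s)V^{\mathrm T}$.

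\textbf{Case $s=-1$ (the main step).} Here we must show $\sum\sigma_i z_{ii}\le \sigma_1+\sigma_2-\sigma_3$ for every $Z\in\mathrm{O}(3)$ with $\det Z=-1$. Once this holds, the bound is attained at $Z=\mathrm{diag}(1,1,-1)$, which gives the claimed formula.

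To prove the bound, use the fact that the diagonal of a real orthogonal matrix with determinant $-1$ satisfies
\begin{equation*}
z_{11}+z_{22}-z_{33}\le 1,\quad z_{11}-z_{22}+z_{33}\le 1,\quad -z_{11}+z_{22}+z_{33}\le 1 .
\end{equation*}
Together with the box constraints $|z_{ii}|\le1$, these describe the convex hull of the sign vectors with an odd number of $-1$'s. This is the $\det=-1$ half of Horn's theorem on diagonals of rotation matrices.

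A more self-contained route avoids citing Horn. Write $-Z=$ a rotation in $\mathrm{SO}(3)$ and use the axis–angle form $Q=\cos\theta\,\mathbb{I}+(1-\cos\theta)\,\mathbf{n}\mathbf{n}^{\mathrm T}+\sin\theta\,[\mathbf{n}]^\wedge$. Its diagonal is $q_{ii}=\cos\theta+(1-\cos\theta)n_i^2$, and one checks directly that $q_{ii}+q_{jj}-q_{kk}\ge -1$, which is equivalent to the inequalities above.

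\textbf{From the diagonal inequalities to the bound.} The objective is linear in the diagonal $(z_{11},z_{22},z_{33})$. Its maximum over this polytope is attained at a vertex $(\pm1,\pm1,\pm1)$ with an odd number of minus signs. Among those vertices, the ordering $\sigma_1\ge\sigma_2\ge\sigma_3\ge0$ makes $(1,1,-1)$ optimal, with value $\sigma_1+\sigma_2-\sigma_3$.

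\textbf{Obstacle and a remark on uniqueness.} Establishing the diagonal inequalities for the $\det Z=-1$ case is the only nontrivial step, and I expect it to be the main obstacle; I would handle it with the axis–angle computation. When singular values are tied, or $\sigma_3=0$, the maximizer need not be unique. In that case the formula still returns a minimizer, and this is how the $\arg\min$ in the statement should be read.
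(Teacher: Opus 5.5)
\textbf{The $s=-1$ step fails as written.} Your reduction to maximizing $\mathrm{tr}(Z\Sigma)$ over $\{Z\in\mathrm{O}(3):\det Z=s\}$ and your case $s=1$ match the paper's proof. The problem is the step you single out as the main one, $s=-1$.

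The inequalities $z_{ii}+z_{jj}-z_{kk}\le 1$ that you attribute to orthogonal matrices of determinant $-1$ are false for them. Your own maximizer $Z=\mathrm{diag}(1,1,-1)$ gives $z_{11}+z_{22}-z_{33}=3$. The axis--angle check therefore cannot succeed either. For a rotation, $q_{ii}+q_{jj}-q_{kk}=1-2(1-\cos\theta)n_k^2$, which equals $-3$ for $Q=\mathrm{diag}(-1,-1,1)$. What the axis--angle formula actually gives is the opposite bound $q_{ii}+q_{jj}-q_{kk}\le 1$. You have flipped a sign and written down facets of the determinant $+1$ tetrahedron.

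As a result, the region cut out by your inequalities and the box is not the convex hull of the odd-sign vectors. It excludes $(1,1,-1)$ and contains $(1,1,1)$, where the objective equals $\sigma_1+\sigma_2+\sigma_3$. So the vertex argument cannot produce the bound $\sigma_1+\sigma_2-\sigma_3$.

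\textbf{What you need instead.} The missing ingredient is the trace bound $z_{11}+z_{22}+z_{33}\le 1$ for $Z\in\mathrm{O}(3)$ with $\det Z=-1$. It holds because the eigenvalues of $Z$ are $-1$ and $e^{\pm i\theta}$, so $\mathrm{tr}\,Z=-1+2\cos\theta$. Equivalently, with $Q=-Z$, you have $\mathrm{tr}\,Q=1+2\cos\theta\ge -1$. This is exactly what the paper uses, and with it no description of the diagonal polytope is needed. The decomposition
\[
\sum_{i=1}^3\sigma_i z_{ii}=(\sigma_1-\sigma_3)z_{11}+(\sigma_2-\sigma_3)z_{22}+\sigma_3\,\mathrm{tr}\,Z\le\sigma_1+\sigma_2-\sigma_3
\]
uses only $z_{11},z_{22}\le 1$, the ordering $\sigma_1\ge\sigma_2\ge\sigma_3\ge 0$, and the trace bound.

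If you want to keep the Horn-style route, the correct description of determinant $-1$ diagonals is $z_{11}+z_{22}+z_{33}\le 1$ together with $z_{kk}-z_{ii}-z_{jj}\le 1$ for each $k$, where $\{i,j,k\}=\{1,2,3\}$. With that set, your vertex argument goes through.

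Your caveat about non-uniqueness under tied singular values or $\sigma_3=0$ is correct. It is more careful than the paper's ``if and only if'' equality claims.
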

\textit{\textbf{Proof:}}
To find the metric projection of $\Pi$ under Frobenius norm is equivalent to solve the optimization problem: $\operatorname*{arg\,min}_{R\in\mathrm{SO}(3)}\|\Pi-R\|_F$. Using the fact that \(\ \|A\|_F^2=\operatorname{tr}(A^{\mathrm T}A)\) and \(R^{\mathrm T}R=I,\ R\in\mathrm{SO}(3)\), we have:
\begin{align}
\|\Pi-R\|_F^2
&=\operatorname{tr}\!\bigl((\Pi-R)^{\mathrm T}(\Pi-R)\bigr)\\
&=\operatorname{tr}(\Pi^{\mathrm T}\Pi)-\operatorname{tr}(\Pi^{\mathrm T}R)-\operatorname{tr}(R^{\mathrm T}\Pi)+\operatorname{tr}(R^{\mathrm T}R)\\
&=\operatorname{tr}(\Pi^{\mathrm T}\Pi)-2\operatorname{tr}(\Pi^{\mathrm T}R)+3
\end{align}
Since $\operatorname{tr}(\Pi^{\mathrm T}\Pi)$ is a constant, minimizing \(\|\Pi-R\|_F^2\) is equivalent to maximizing \(\operatorname{tr}(\Pi^{\mathrm T}R)\). Let the SVD of $\Pi$ be $
\Pi=U\Sigma V^{\mathrm T}$, and we define $Q:=U^{\mathrm T}RV$.
Then \(Q\in O(3)\), \(\det(Q)=\det(UV^{\mathrm T})=\pm1\). By the cyclic property of trace, the objective quantity now becomes:
\begin{equation}
\operatorname{tr}(\Pi^{\mathrm T}R)=\operatorname{tr}(V\Sigma U^{\mathrm T}R)=\operatorname{tr}(\Sigma U^{\mathrm T}RV)=\operatorname{tr}(\Sigma Q)
\end{equation}
Thus the problem reduces to maximizing:
\begin{equation}
\operatorname{tr}(\Sigma Q)=\sigma_1Q_{11}+\sigma_2Q_{22}+\sigma_3Q_{33}
\end{equation}
If \(\det(Q)=1\), then \(\operatorname{tr}(\Sigma Q)\le \sigma_1+\sigma_2+\sigma_3\), with equality established if and only if \(Q=\mathbb{I}_3\). Hence \(R=UV^{\mathrm T}\). If \(\det(Q)=-1\), then \(Q\) has eigenvalues \(e^{\pm i\theta}\) and \(-1\), so \(\operatorname{tr}(Q)=-1+2\mathrm{cos}\theta\le1\). Therefore,
\begin{align}
\operatorname{tr}(\Sigma Q)&=(\sigma_1-\sigma_3)Q_{11}+(\sigma_2-\sigma_3)Q_{22}+\sigma_3(Q_{11}+Q_{22}+Q_{33})\\
&=(\sigma_1-\sigma_3)Q_{11}+(\sigma_2-\sigma_3)Q_{22}+\sigma_3\operatorname{tr}(Q)\\
&\le \sigma_1+\sigma_2-\sigma_3,
\end{align}
and equality holds if and only if \(Q=\mathrm{diag}(1,1,-1)\). Hence, $R=U\,\mathrm{diag}(1,1,-1)\,V^{\mathrm T}$.\\
Combining both two cases, we have the metric projection of $\Pi$ to be:
\begin{equation}
\mathrm{Proj}_{\mathrm{SO}(3)}(\Pi)=U\,\mathrm{diag}\bigl(1,1,\det(UV^{\mathrm T})\bigr)V^{\mathrm T}
\end{equation}
which is the conclusion we need for Lem. \ref{lem:metric projection}. \hfill$\square$

\section{Proof of Thm. \ref{thm:Induced Linear Mapping as a Lie Group Manifold}: Induced Linear Mapping as a Lie Group Manifold}\label{B}
\begin{theorem}[Induced Linear Mapping as a Lie Group Manifold]$\mathcal{L}{[}Ad_{\mathrm{SE}(3)}{]}$ is topologically closed in $\mathrm{GL}(6,\mathbb{R})$, which forms a Lie group and an underlying manifold defined with matrix representation in Def. \ref{def:Induced Linear Mapping by Adjoint Action}. Further, the group operation is compatible with matrix multiplication.
\end{theorem}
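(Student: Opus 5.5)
The plan is to verify directly that $\mathscr{L}:=\mathcal{L}[Ad_{\mathrm{SE}(3)}]$ from Eq. (\ref{eq:L[Ad]}) is a subgroup of $\GL(6,\R)$ under matrix multiplication, then show it is topologically closed, and finally invoke the closed subgroup theorem, Thm. \ref{thm:closed subgroup theorem}, with $G=\GL(6,\R)$. That theorem then gives the Lie group structure and the embedded smooth manifold structure at once. The compatibility of the group operation with matrix multiplication holds by construction, since we will use ordinary matrix multiplication as the product.

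\emph{Step 1 (membership in $\GL(6,\R)$).} For $M=\left[\begin{smallmatrix} R & O\\ [\boldsymbol{t}]^\wedge R & R\end{smallmatrix}\right]$ the matrix is block lower triangular. Hence $\det M=\det(R)^2=1$, so $\mathscr{L}\subset\GL(6,\R)$.

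\emph{Step 2 (subgroup axioms).} The identity lies in $\mathscr{L}$, taking $R=\mathbb{I}_3$ and $\boldsymbol{t}=0$. For closure under products, multiply two elements with data $(R_1,\boldsymbol{t}_1)$ and $(R_2,\boldsymbol{t}_2)$. The diagonal blocks become $R_1R_2$, which is orthogonal, and the upper right block stays $O_{3\times3}$. The lower left block is
\begin{equation*}
[\boldsymbol{t}_1]^\wedge R_1R_2+R_1[\boldsymbol{t}_2]^\wedge R_2=\bigl([\boldsymbol{t}_1]^\wedge+R_1[\boldsymbol{t}_2]^\wedge R_1^{\mathrm T}\bigr)R_1R_2 .
\end{equation*}
The key observation is that $R_1[\boldsymbol{t}_2]^\wedge R_1^{\mathrm T}$ is again skew-symmetric, because $(R_1 S R_1^{\mathrm T})^{\mathrm T}=-R_1SR_1^{\mathrm T}$ whenever $S^{\mathrm T}=-S$. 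Since $[\cdot]^\wedge$ is a bijection $\R^3\to\mathfrak{so}(3)$ by Def. \ref{def:Skew-symmetric Representation}, the bracket equals $[\boldsymbol{t}_3]^\wedge$ for some $\boldsymbol{t}_3$. Explicitly, $\boldsymbol{t}_3=\boldsymbol{t}_1+\det(R_1)R_1\boldsymbol{t}_2$, though only skew-symmetry is needed. This argument uses only $R^{\mathrm T}=R^{-1}$, so it covers $\det R=-1$ as well.

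For inverses, block inversion gives the candidate
\begin{equation*}
\left[\begin{smallmatrix} R^{\mathrm T} & O\\ -R^{\mathrm T}[\boldsymbol{t}]^\wedge & R^{\mathrm T}\end{smallmatrix}\right].
\end{equation*}
Here $-R^{\mathrm T}[\boldsymbol{t}]^\wedge=\bigl(-R^{\mathrm T}[\boldsymbol{t}]^\wedge R\bigr)R^{\mathrm T}$, and the factor in parentheses is skew-symmetric by the same argument. So the inverse lies in $\mathscr{L}$.

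\emph{Step 3 (closedness).} I would characterize $\mathscr{L}$ intrinsically as the set of $M=\left[\begin{smallmatrix}\mathcal{A}&\mathcal{B}\\ \mathcal{C}&\mathcal{D}\end{smallmatrix}\right]\in\R^{6\times6}$ satisfying four conditions:
\begin{equation*}
\mathcal{B}=O,\qquad \mathcal{A}=\mathcal{D},\qquad \mathcal{A}^{\mathrm T}\mathcal{A}=\mathbb{I}_3,\qquad \mathcal{C}\mathcal{A}^{\mathrm T}+\mathcal{A}\mathcal{C}^{\mathrm T}=O.
\end{equation*}
The last condition says $\mathcal{C}\mathcal{A}^{\mathrm T}$ is skew. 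Given that, $\mathcal{C}=(\mathcal{C}\mathcal{A}^{\mathrm T})\mathcal{A}=[\boldsymbol{t}]^\wedge\mathcal{A}$ for a unique $\boldsymbol{t}$, and conversely every element of $\mathscr{L}$ satisfies all four conditions. Each condition is a polynomial equation in the entries, so $\mathscr{L}$ is the zero set of continuous maps and is closed in $\R^{6\times6}$, hence closed in $\GL(6,\R)$. Alternatively, take a convergent sequence in $\mathscr{L}$. Its rotation parts converge in the compact set $\mathrm{O}(3)$, and the corresponding $\boldsymbol{t}_n$ converge because $[\boldsymbol{t}_n]^\wedge=\mathcal{C}_nR_n^{\mathrm T}$ depends continuously on the entries.

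\emph{Step 4.} Apply Thm. \ref{thm:closed subgroup theorem} to conclude.

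I expect the main obstacle to be the bookkeeping in Step 2 and Step 3. One must show that the lower left block keeps the exact form $[\boldsymbol{t}]^\wedge R$, rather than becoming an arbitrary matrix. Steps 2 and 3 handle this by reducing it to skew-symmetry of $\mathcal{C}R^{\mathrm T}$ together with the bijectivity of $[\cdot]^\wedge$. One must also avoid assuming $\det R=1$, since Eq. (\ref{eq:L[Ad]}) only imposes orthogonality.
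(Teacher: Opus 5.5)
Your proposal is correct and follows the paper's route: closure of $\mathcal{L}[\mathcal{A}d_{\mathrm{SE}(3)}]$ under products and inverses in $\mathrm{GL}(6,\mathbb{R})$, then topological closedness, then Thm.~\ref{thm:closed subgroup theorem}. For closedness the paper uses exactly your alternative sequential argument, $[\boldsymbol{t}_n]^\wedge=\mathcal{C}_nR_n^{\mathrm T}\to\mathcal{C}\mathcal{A}^{\mathrm T}$. Your skew-symmetry bookkeeping is slightly more careful than the paper's: the paper relies on $[R\boldsymbol{u}]^\wedge=R[\boldsymbol{u}]^\wedge R^{\mathrm T}$, which holds only for $\det R=1$, whereas Eq.~(\ref{eq:L[Ad]}) imposes only $R^{\mathrm T}=R^{-1}$.
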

\textit{\textbf{Proof:}}
For arbitrary two elements $H_1,H_2$ in $\mathscr{L}:=\mathcal{L}{[}Ad_{\mathrm{SE}(3)}{]}$, and we calculate their product:
\begin{equation}
H_{1,2}:=
\begin{bmatrix}
R_{1,2}&O_{3\times 3}\\
[t_{1,2}]^\wedge R_{1,2}&R_{1,2}
\end{bmatrix},\quad
H_1\circ H_2=
\begin{bmatrix}
R_1R_2&O_{3\times 3}\\
[t_1]^\wedge R_1R_2+R_1[t_2]^\wedge R_2&R_1R_2
\end{bmatrix}
\end{equation}
Using \([Ru]^\wedge=R[u]^\wedge R^T\), it follows that $R_1[t_2]^\wedge R_2=[R_1t_2]^\wedge R_1R_2$. Hence,
\begin{equation}
H_1\circ H_2=
\begin{bmatrix}
R_1R_2&O_{3\times 3}\\
[t_1+R_1t_2]^\wedge R_1R_2&R_1R_2
\end{bmatrix}
\in \mathscr{L}.
\end{equation}
which means the group operation is closed. And the inverse of $H_1$ satisfying $H_1\circ H_1^{-1}=\mathbb{I}$ exists:
\begin{equation}
H_1^{-1}=\begin{bmatrix}
R_1^{-1}&O_{3\times 3}\\
[-R_1^{-1}t_1]^\wedge R_1^{-1}&R_1^{-1}
\end{bmatrix}\in\mathscr{L}
\end{equation}
Thus \(\mathcal{L}[Ad_{\mathrm{SE}(3)}]\) is a subgroup of \(\mathrm{GL}(6,\mathbb{R})\). Now let a sequence of $H_n$ in $\mathscr{L}$ converges: 
\begin{equation}H_n=
\begin{bmatrix}
R_n&O_{3\times 3}\\
[t_n]^\wedge R_n&R_n
\end{bmatrix}\rightarrow
H_\infty:=
\begin{bmatrix}
\mathcal{A}&O_{3\times 3}\\
\mathcal{C}&\mathcal{A}
\end{bmatrix}
\end{equation}
Since \(R_n\to A\) and \(\mathrm{SO}(3)\) is closed, we have \(A\in \mathrm{SO}(3)\). Moreover, $[t_n]^\wedge=([t_n]^\wedge R_n)R_n^T\to CA^T$. As the skew-symmetric matrices form a closed set, \(CA^T=[t]^\wedge\) for a \(t\in\mathbb{R}^3\). Hence $
C=[t]^\wedge A$. So, $H_\infty\in\mathscr{L}$. Therefore \(\mathcal{L}[Ad_{\mathrm{SE}(3)}]\) is topologically closed in \(\mathrm{GL}(6,\mathbb{R})\). By Thm. \ref{thm:closed subgroup theorem}, the operation closure and topological closure is established, thus it is a matrix Lie group.\hfill$\square$

\section{Proof of Thm. \ref{thm:Linear Representation of Adjoing Action}: Linear Representation of Adjoing Action}\label{C}
\begin{theorem}[Linear Representation of Adjoint Action]For Lie algebra $\mathfrak{se}(3)$ defined in Def. \ref{def:se(3) and SE(3)}, each element is parameterized by $\boldsymbol{\omega}$ and $\boldsymbol{p}$. Denote the vector parameterization of $X\in\mathfrak{se}(3)$ by $P(X):=[\boldsymbol{\boldsymbol{\omega}}\ \boldsymbol{p} ]^{\mathrm{T}}\in\mathbb{R}^6$, then there exists a linear mapping $\mathcal{L}[{\mathcal{A}d_G}]\in\mathcal{L}(\mathbb{R}^6)$ induced by the adjoint action of $\mathrm{SE}(3)$ on $\mathfrak{se}(3)$ of the form in Def. \ref{def:Induced Linear Mapping by Adjoint Action}. Further, we have
\begin{equation}
    P(\mathcal{A}d_g(X))=\mathcal{L}[\mathcal{A}d_g]\circ P(X)
\end{equation}
i.e. the linear operator on parameterized $\mathfrak{se}(3)$ vector is equivalent to parameterizing the adjoint action $\mathcal{A}d_g(\cdot)$ of a fixed $g\in \mathrm{SE}(3)$ on $X\in\mathfrak{se}(3)$.
\label{thm:}
\end{theorem}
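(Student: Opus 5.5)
Write $g\in\mathrm{SE}(3)$ and $X\in\mathfrak{se}(3)$ in block form,
\begin{equation*}
g=\left[\begin{matrix}R&\boldsymbol{t}\\O_{1\times3}&1\end{matrix}\right],\quad g^{-1}=\left[\begin{matrix}R^{\mathrm T}&-R^{\mathrm T}\boldsymbol{t}\\O_{1\times3}&1\end{matrix}\right],\quad X=\left[\begin{matrix}[\boldsymbol{\omega}]^\wedge&\boldsymbol{p}\\O_{1\times3}&0\end{matrix}\right].
\end{equation*}
The plan is to compute $\Ad_g(X)=gXg^{-1}$ directly as a product of $4\times4$ block matrices. We then read off the new parameters $(\boldsymbol{\omega}',\boldsymbol{p}')$ and check that $P(\Ad_g(X))$ is obtained by applying the matrix $\left[\begin{matrix}R&O_{3\times3}\\ [\boldsymbol{t}]^\wedge R&R\end{matrix}\right]$ of Def.~\ref{def:L} to $P(X)=[\boldsymbol{\omega}\ \boldsymbol{p}]^{\mathrm T}$.

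Before computing, we use Thm.~\ref{thm:LieAlgebraAutomorphism} to know in advance that $\Ad_g(X)\in\mathfrak{se}(3)$. This guarantees that the result has the required block form, so $P(\Ad_g(X))$ is well defined and its upper-left block is automatically skew-symmetric.

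The computation itself has two steps.
\begin{itemize}
\item First form $gX=\left[\begin{matrix}R[\boldsymbol{\omega}]^\wedge&R\boldsymbol{p}\\O_{1\times3}&0\end{matrix}\right]$.
\item Then multiply on the right by $g^{-1}$:
\begin{equation*}
gXg^{-1}=\left[\begin{matrix}R[\boldsymbol{\omega}]^\wedge R^{\mathrm T}&-R[\boldsymbol{\omega}]^\wedge R^{\mathrm T}\boldsymbol{t}+R\boldsymbol{p}\\O_{1\times3}&0\end{matrix}\right].
\end{equation*}
\end{itemize}

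The key identity is $R[\boldsymbol{u}]^\wedge R^{\mathrm T}=[R\boldsymbol{u}]^\wedge$ for $R\in\mathrm{SO}(3)$, which was already used in App.~\ref{B}. It holds because $R(\boldsymbol{u}\times R^{\mathrm T}\boldsymbol{v})=R\boldsymbol{u}\times\boldsymbol{v}$, since rotations preserve the cross product; this step requires $\det R=1$. With this identity the rotational block becomes $[R\boldsymbol{\omega}]^\wedge$, so $\boldsymbol{\omega}'=R\boldsymbol{\omega}$. For the translational block, the cross-product form of Def.~\ref{def:Skew-symmetric Representation} gives
\begin{equation*}
-[R\boldsymbol{\omega}]^\wedge\boldsymbol{t}=-(R\boldsymbol{\omega})\times\boldsymbol{t}=\boldsymbol{t}\times R\boldsymbol{\omega}=[\boldsymbol{t}]^\wedge R\boldsymbol{\omega}.
\end{equation*}
Hence $\boldsymbol{p}'=[\boldsymbol{t}]^\wedge R\boldsymbol{\omega}+R\boldsymbol{p}$. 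Stacking $(\boldsymbol{\omega}',\boldsymbol{p}')$ gives exactly the matrix of Def.~\ref{def:L} applied to $P(X)$, which establishes Eq.~(\ref{eq: Linear Representation of Adjoint Action}).

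Linearity of $\Ad_g$ is immediate from bilinearity of matrix multiplication, so the induced map on $\mathbb{R}^6$ is well defined and linear, and we have just identified its matrix. It also follows that $g\mapsto\mathcal{L}[\Ad_g]$ lands in the set of Eq.~(\ref{eq:L[Ad]}). As a consistency check, it is a homomorphism: $\Ad_{gh}=\Ad_g\circ\Ad_h$ by Def.~\ref{def:conjugate mapping}, which agrees with the product formula in App.~\ref{B}.

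The main obstacle is the conjugation identity $R[\boldsymbol{u}]^\wedge R^{\mathrm T}=[R\boldsymbol{u}]^\wedge$ together with the sign bookkeeping in the off-diagonal block. I would first verify the identity via $\boldsymbol{v}\mapsto R(\boldsymbol{u}\times R^{\mathrm T}\boldsymbol{v})$. I would then check the sign of $[\boldsymbol{t}]^\wedge R$ against the convention that the parameter vector is ordered $(\boldsymbol{\omega},\boldsymbol{p})$ rather than $(\boldsymbol{p},\boldsymbol{\omega})$; the latter ordering would put the coupling block in the upper right instead.
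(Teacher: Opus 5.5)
Your proposal is correct and follows essentially the same route as the paper's proof in App.~\ref{C}. Both arguments make the same block computation of $gXg^{-1}$, use the same identities $R[\boldsymbol{\omega}]^\wedge R^{\mathrm T}=[R\boldsymbol{\omega}]^\wedge$ and $-[R\boldsymbol{\omega}]^\wedge\boldsymbol{t}=[\boldsymbol{t}]^\wedge R\boldsymbol{\omega}$, and read off the same matrix of Def.~\ref{def:L}; your cross-product justification of the conjugation identity (correctly noting it needs $\det R=1$) fills in a step the paper only asserts, and your preliminary appeal to Thm.~\ref{thm:LieAlgebraAutomorphism} is harmless but redundant, since the explicit product already shows the $\mathfrak{se}(3)$ block form.
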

\textit{\textbf{Proof:}} Take $
g=
\begin{bmatrix}
R&t\\
0&1
\end{bmatrix}\in \mathrm{SE}(3),\
X=
\begin{bmatrix}
[\omega]^\wedge&p\\
0&0
\end{bmatrix}\in \mathfrak{se}(3).$
Then adjoint action is: \begin{equation}\mathcal{A}d_g(X)=gXg^{-1}=\begin{bmatrix}
R[\omega]^\wedge R^T&R p-R[\omega]^\wedge R^T t\\
0&0
\end{bmatrix}
\end{equation}
by Def. \ref{def:adjoint action}. Using \(R[\omega]^\wedge R^T=[R\omega]^\wedge\) and $-R[\omega]^\wedge R^T t=[t]^\wedge R\omega$, we obtain: 
\begin{equation}
\mathcal{A}d_g(X)=
\begin{bmatrix}
[R\omega]^\wedge&[t]^\wedge R\omega+Rp\\
0&0
\end{bmatrix}
\end{equation} 
Therefore, under \(P(X)=[\omega\ \ p]^T\), we can calculate:
\begin{equation}
P(\mathcal{A}d_g(X))
=
\begin{bmatrix}
R\omega\\
[t]^\wedge R\omega+Rp
\end{bmatrix}
=
\begin{bmatrix}
R&0\\
[t]^\wedge R&R
\end{bmatrix}
P(X)=\mathcal{L}[\mathcal{A}d_g]P(X)
\label{eq:3.9}
\end{equation}
which is just the formula in need as Eq. (\ref{eq: Linear Representation of Adjoint Action}) of the theorem. \hfill$\square$

\section{Linear Representation of Adjoint Action for General Lie Group}\label{D}
Let $G \leq \GL(n,\R)$ be a closed matrix Lie group, and write $\g := T_e G$ for its Lie algebra, a linear subspace of $\gl(n,\R) = \R^{n\times n}$ of dimension $d$. Recall that in Def. \ref{def:adjoint action}, for a fixed $g \in G$, the \emph{adjoint action} of $g$ on $\g$ is then $\Ad_g : \g \rightarrow \g, \ \Ad_g(X) = g X g^{-1}$. Fix once and for all a basis $\mathcal{B} = \{E_1, \dots, E_d\}$ of $\g$ as a vector space, and denote the parameterization mapping as in Thm. \ref{eq: Linear Representation of Adjoint Action} with basis expansion of $\boldsymbol{\xi}\in\mathfrak{g}$:
\begin{equation}
\label{eq:param}
\mathcal{P} : \g \xrightarrow{\,\sim\,} \R^d, \quad \mathcal{P}(\boldsymbol{\xi})=\mathcal{P}(\xi^i E_i ) = (\xi^1, \dots, \xi^d)^{\mathrm T}
\end{equation}
which is a linear isomorphism, since $\mathfrak{g}$ is essentially a vector space of dimension $d$. We have the following theorem to describe the property of adjoint action induced linear mapping:

\begin{theorem}[General Linear Representation of Adjoint Action]
\label{thm:General Linear Representation of Adjoint Action}
For each $g \in G$, there exists a unique linear automorphism $\Lmat[\Ad_g] \in \GL(d,\R)$ satisfying:
\begin{equation}
\label{eq:intertwining}
\Param\bigl(\Ad_g(X)\bigr) = \Lmat[\Ad_g]\, \Param(X), \quad \forall X \in \g.
\end{equation}
Explicitly, the linear representation is defined by:
\begin{equation}
\Lmat[\Ad_g] = \Param \circ \Ad_g \circ \Param^{-1}
\label{eq:LinearRepresentation}
\end{equation}
Furthermore, the map $\Lmat : G \to \GL(d,\R)$ with $g \mapsto \Lmat[\Ad_g]$ is a smooth homomorphism of Lie groups, which associate the group elements to the adjoint action mapping respectively.
\end{theorem}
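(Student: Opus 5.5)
The plan is to build $\Lmat[\Ad_g]$ directly by conjugating $\Ad_g$ with the coordinate isomorphism $\Param$ from Eq. (\ref{eq:param})., and then to check in turn uniqueness, invertibility, the homomorphism property and smoothness.

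\emph{Existence, linearity and uniqueness.} By Thm.~\ref{thm:LieAlgebraAutomorphism}, $\Ad_g$ maps $\g$ into $\g$ bijectively. It is also linear, since $g(aX+bY)g^{-1}=a\,gXg^{-1}+b\,gYg^{-1}$. The composition $\Param\circ\Ad_g\circ\Param^{-1}:\R^d\to\R^d$ is therefore a linear bijection, i.e.\ an element of $\GL(d,\R)$. Substituting $X=\Param^{-1}(\Param(X))$ shows that it satisfies Eq. (\ref{eq:intertwining}). For uniqueness, suppose $M_1,M_2$ both satisfy Eq. (\ref{eq:intertwining}). Then $(M_1-M_2)\Param(X)=0$ for all $X\in\g$. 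Since $\Param$ is surjective onto $\R^d$, we get $M_1=M_2$. Concretely, the $k$th column of $\Lmat[\Ad_g]$ is $\Param(gE_kg^{-1})$.

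\emph{Homomorphism property.} First, $\Ad_{gh}=\Ad_g\circ\Ad_h$, because $ghX(gh)^{-1}=g(hXh^{-1})g^{-1}$; this is the Lie algebra counterpart of $C_g\circ C_h=C_{gh}$ in Def.~\ref{def:conjugate mapping}. Inserting $\Param^{-1}\circ\Param=\mathrm{id}$ in the middle gives
\begin{equation*}
\Lmat[\Ad_{gh}]=\Param\circ\Ad_g\circ\Param^{-1}\circ\Param\circ\Ad_h\circ\Param^{-1}=\Lmat[\Ad_g]\,\Lmat[\Ad_h].
\end{equation*}
Taking $g=h=e$ gives $\Lmat[\Ad_e]=\mathbb{I}_d$. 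Hence $\Lmat$ is a group homomorphism $G\to\GL(d,\R)$.

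\emph{Smoothness (main obstacle).} Extend the basis $\mathcal{B}$ of $\g$ to a basis of $\gl(n,\R)$, and let $\pi:\gl(n,\R)\to\R^d$ be the linear map that keeps the first $d$ coordinates. Then $\pi$ restricts to $\Param$ on $\g$. The entries of $\Lmat[\Ad_g]$ are then $\pi(gE_kg^{-1})$, $k=1,\dots,d$. The map $g\mapsto gE_kg^{-1}$ is smooth on $\GL(n,\R)$, because multiplication is polynomial and inversion is rational (Cramer's rule). Since $\pi$ is linear, the entries of $\Lmat[\Ad_g]$ are smooth functions on $\GL(n,\R)$. The remaining point is that restricting a smooth map to $G$ keeps it smooth. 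This is where the embedded manifold structure from the closed subgroup theorem, Thm.~\ref{thm:closed subgroup theorem}, is needed: the inclusion $G\hookrightarrow\GL(n,\R)$ is smooth, so the restriction $\Lmat$ is smooth as a map into $\R^{d\times d}$. Because $\GL(d,\R)$ is an open submanifold of $\R^{d\times d}$, $\Lmat$ is also smooth as a map into $\GL(d,\R)$. As a consistency check, for $G=\mathrm{SE}(3)$ with $\Param=P$, this construction reproduces Thm.~\ref{thm:Linear Representation of Adjoing Action}.
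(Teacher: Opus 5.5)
Your proof is correct and follows the paper's argument: define $\Lmat[\Ad_g]=\Param\circ\Ad_g\circ\Param^{-1}$, verify the intertwining identity, get uniqueness from the surjectivity of $\Param$, and obtain the homomorphism property by inserting $\Param^{-1}\circ\Param$ into $\Ad_{gh}=\Ad_g\circ\Ad_h$. Where the paper only asserts that smoothness follows from that of $g\mapsto\Ad_g$, you make it precise by extending $\Param$ to a linear map $\pi$ on $\gl(n,\R)$ and restricting the smooth maps $g\mapsto\pi(gE_kg^{-1})$ along the embedding $G\hookrightarrow\GL(n,\R)$. You also check the linearity of $\Ad_g$ explicitly, which Thm.~\ref{thm:LieAlgebraAutomorphism} leaves implicit.
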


{\it\textbf{Proof:}}
{Existence.} By Thm .\ref{thm:LieAlgebraAutomorphism}, $\Ad_g \in \mathrm{Aut}(\g)$. Since $\Param$ is a linear isomorphism, the composition $P\circ\Ad_g\circ\mathcal{P}^{-1}$ lies in $\GL(d,\R)$. $\forall X \in \g$, $\Lmat[\Ad_g]\,\Param(X) = \Param\circ\Ad_g \circ\Param^{-1}\circ\Param(X) = \Param\bigl[\Ad_g(X)\bigr]$, which establishes \eqref{eq:intertwining}.

{Uniqueness.} Let $M \in \GL(d,\R)$ be any linear automorphism satisfying \eqref{eq:intertwining} in place of $\Lmat[\Ad_g]$. Subtracting the two identities yields $\bigl(M - \Lmat[\Ad_g]\bigr)\,\Param(X) \equiv 0 \quad \forall X \in \g$. Because $\Param$ is surjective onto $\R^d$, i.e. $\mathrm{range}\mathcal{P}=\mathbb{R}^d$, this forces $M - \Lmat[\Ad_g] = 0$, hence $M = \Lmat[\Ad_g]$. The existence of linear representation is unique.

{Homomorphism property.} For $g_1, g_2 \in G$, the identity $\Ad_{g_1 * g_2} = \Ad_{g_1} \circ \Ad_{g_2}$ combined with Eq. \ref{eq:LinearRepresentation} gives
\begin{align}
\Lmat[\Ad_{g_1 *g_2}]&= \Param \circ \Ad_{g_1} \circ \Ad_{g_2} \circ \Param^{-1}\\
&= \bigl(\Param \circ \Ad_{g_1} \circ \Param^{-1}\bigr)\circ\bigl(\Param \circ \Ad_{g_2} \circ \Param^{-1}\bigr)
= \Lmat[\Ad_{g_1}]\, \Lmat[\Ad_{g_2}].
\label{eq:homomorphismL}
\end{align}
Thus, the linear representation of adjoint action is a Lie group homomorphism. And the Smoothness of $\Lmat$ follows from that of $g \mapsto \Ad_g$ and the endomorphism on $\mathfrak{g}$ by $\mathcal{P}$ and $\mathcal{P}^{-1}$ as appearing in Eq. \ref{eq:LinearRepresentation}.\hfill\(\square\)

\section{Properties of Induced Linear Mapping for General Lie groups}\label{E}
For a general Lie group $G$ of dimension $d$, the induced linear mapping of adjoint action is defined by:
\begin{equation}\Lmat[\mathcal{A}d_G]:=\bigl\{\left.\Lmat[\Ad_g]\ \right| g \in G\bigr\} \subseteq \mathbb{R}^{d\times d}
\end{equation} 
Firstly, we prove \(\Lind[\Ad_G]\) is a subgroup of \(\GL(d,\R)\). For any \(g_1,g_2\in G\), the adjoint representation satisfies:
\begin{equation}
\Ad_{g_1} \Ad_{g_2}=g_1 g_2 (\cdot) g_2^{-1}g_1^{-1}=(g_1 g_2) (\cdot)(g_1g_2)^{-1}=\Ad_{g_1g_2}
\end{equation}
After choosing a basis of Lie algebra \(\mathfrak g\) represented in $\mathbb{R}^d$, by Eq. (\ref{eq:homomorphismL}) of Thm. \ref{thm:General Linear Representation of Adjoint Action} this gives:
\begin{equation}
\Lmat[\Ad_{g_1}]\Lmat[\Ad_{g_2}]
=
\Lmat[\Ad_{g_1g_2}]
\in \mathscr{L}=\Lmat[\Ad_G].
\end{equation}
Moreover, $\Lmat[\Ad_e]=I_d$, where \(e\in G\) is the identity element, and $\Lmat[\Ad_g]^{-1}=\Lmat[\Ad_{g^{-1}}]$. Hence \(\Lind[\Ad_G]\) is closed under multiplication and inverse, and therefore forms a subgroup of \(\GL(d,\R)\).

Next, define the kernel of the induced linear representation by:
\begin{equation}
\ker\!\Lmat
:=
\{g\in G\mid \Lmat[\Ad_g]=\mathbb{I}_d\}=\left\{g\in G\mid \operatorname{Ad}_g(X)=X,\ \forall X\in\mathfrak g\right\}.
\label{eq:kernalL}
\end{equation}
Since \(\Lmat:G\to\GL(d,\R)\) is a Lie group homomorphism by Thm. \ref{thm:General Linear Representation of Adjoint Action}, \(\ker\!\Lmat\) is a closed normal Lie subgroup of \(G\). Therefore, the quotient \(G/\ker\Lmat\) is a quotient Lie group. Moreover, by the first isomorphism theorem for groups \cite{groupsandsymmetry}, the map
\begin{equation}
\overline{\Lmat}:G/\ker\Lmat\to \Lmat[\Ad_G],
\quad
\overline{\Lmat}(g\ker\Lmat)=\Lmat[\Ad_g],
\end{equation}
is a well-defined group isomorphism. Indeed, if \(g_1\ker\Lmat=g_2\ker\Lmat\), then \(g_2^{-1}g_1\in\ker\Lmat\), and hence
\begin{equation}
\Lmat[\Ad_{g_1}]
=
\Lmat[\Ad_{g_2}]\Lmat[\Ad_{g_2^{-1}g_1}]
=
\Lmat[\Ad_{g_2}].
\end{equation}
Thus \(\overline{\Lmat}\) is well-defined. It is clearly surjective by the definition of \(\Lmat[\Ad_G]\). If $\overline{\Lmat}(g_1\ker\Lmat)=\overline{\Lmat}(g_2\ker\Lmat)$ then $\Lmat[\Ad_{g_2^{-1}g_1}]=\mathbb{I}_d$
so \(g_2^{-1}g_1\in\ker\Lmat\), and therefore \(g_1\ker\Lmat=g_2\ker\Lmat\). Hence \(\overline{\Lmat}\) is injective. Consequently,
\begin{equation}
\Lmat[\Ad_G]\cong G/\ker\Lmat
\end{equation}
as a quotient Lie group. Therefore, for a general Lie group \(G\), the induced adjoint matrix set \(\Lmat[\Ad_G]\) is naturally a Lie group through the quotient structure \(G/\ker\Lmat\). If, in addition, \(\Lmat[\Ad_G]\) is topologically closed in \(\GL(d,\R)\), then it is also an embedded matrix Lie subgroup of \(\GL(d,\R)\) by the closed subgroup theorem in Thm. \ref{thm:closed subgroup theorem}.

The kernel consists of the group elements acting trivially on the Lie algebra, as shown in Eq. (\ref{eq:kernalL}). For a connected matrix Lie group, these are precisely the central elements invisible to the adjoint action. Hence \(\mathcal{L}[\operatorname{Ad}_G]\) naturally identifies elements differing by \(\ker\mathcal{L}\), and carries the quotient Lie group structure \(G/\ker\mathcal{L}\). For example, for \(G=\mathrm{SU}(2)\), the center \(\{\pm I\}\) lies in the kernel and \(\mathrm{SU}(2)/\{\pm I\}\cong \mathrm{SO}(3)\). More generally, compact groups such as \(\mathrm{SU}(n)\), \(\mathrm{Sp}(n)\), and \(\mathrm{SO}(n)\) \cite{LieTheory} have closed adjoint images by Thm.~\ref{thm:compact-adjoint-image}, so \(\mathcal{L}[\operatorname{Ad}_G]\) is also an embedded matrix Lie group. However, this is not a necessary condition. In particular $\mathrm{SE}(3)$ is not compact but still equipped with adjoint induced mapping as matrix Lie group as shown in Thm. \ref{thm:Induced Linear Mapping as a Lie Group Manifold}. 
\begin{theorem}[Compactness as Sufficient Condition]
\label{thm:compact-adjoint-image}
If \(G\) is compact, then \(\Lmat[\Ad_G]\) is a compact, hence closed, subgroup of \(\GL(d,\R)\). Consequently, \(\Lmat[\Ad_G]\) is an embedded matrix Lie subgroup of \(\GL(d,\R)\).
\end{theorem}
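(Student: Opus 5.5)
The plan is to realize $\Lmat[\Ad_G]$ as the image of a compact set under a continuous map, and then invoke Thm.~\ref{thm:closed subgroup theorem}. The key input is Thm.~\ref{thm:General Linear Representation of Adjoint Action}: the map $\Lmat:G\to\GL(d,\R)$, $g\mapsto\Lmat[\Ad_g]$, is a smooth, hence continuous, homomorphism of Lie groups. Consequently $\Lmat[\Ad_G]=\Lmat(G)$ is the continuous image of the compact space $G$. It is therefore compact in $\GL(d,\R)$, and also compact as a subset of $\R^{d\times d}$, since the topology of $\GL(d,\R)$ is the subspace topology.

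Next I show closedness in the ambient space. Both $\R^{d\times d}$ and $\GL(d,\R)$ are Hausdorff metric spaces, and a compact subset of a Hausdorff space is closed. Hence $\Lmat[\Ad_G]$ is topologically closed in $\GL(d,\R)$. If a more elementary route is wanted, I would argue with sequences. Take $\Lmat[\Ad_{g_n}]\to A$ in $\GL(d,\R)$. By compactness of $G$, pass to a subsequence with $g_{n_k}\to g\in G$. Continuity of $\Lmat$ then gives $A=\Lmat[\Ad_g]\in\Lmat[\Ad_G]$.

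The subgroup property has already been established in App.~\ref{E}, via closure under products and inverses using Eq.~(\ref{eq:homomorphismL}), $\Lmat[\Ad_e]=\mathbb{I}_d$ and $\Lmat[\Ad_g]^{-1}=\Lmat[\Ad_{g^{-1}}]$. So $\Lmat[\Ad_G]$ is a closed subgroup of the Lie group $\GL(d,\R)$. By the closed subgroup theorem, Thm.~\ref{thm:closed subgroup theorem}, it is an embedded matrix Lie subgroup of $\GL(d,\R)$.

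The only step that needs real care is the continuity of $\Lmat$, since everything else is general topology. Thm.~\ref{thm:General Linear Representation of Adjoint Action} supplies it. If one wanted to check it directly, note that in the fixed basis $\mathcal{B}$ the entries of $\Lmat[\Ad_g]$ are the coordinates $\Param(gE_jg^{-1})$. These depend continuously on $g$ because matrix multiplication and inversion are continuous on $G\subseteq\GL(n,\R)$, and $\Param$ is a linear map on the finite-dimensional space $\g$.
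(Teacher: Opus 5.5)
Your proposal is correct and follows essentially the same route as the paper. Like the paper, you use continuity of $\Lmat$ from Thm.~\ref{thm:General Linear Representation of Adjoint Action}, then compactness of the image $\Lmat(G)$, then closedness because compact subsets of the Hausdorff space $\GL(d,\R)$ are closed, and finally the closed subgroup theorem. Your sequential argument, your direct coordinate check of continuity, and your explicit appeal to the subgroup property from App.~\ref{E} are correct extras that the paper leaves implicit.
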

{\it\textbf{Proof:}}
By Thm.~\ref{thm:General Linear Representation of Adjoint Action}, the map $\Lmat:G\to \GL(d,\R),\ g\mapsto \Lmat[\Ad_g]$ is smooth, hence continuous. Since \(G\) is compact, its image $\Lmat[\Ad_G]=\Lmat(G)$ is compact in \(\GL(d,\R)\). Since \(\GL(d,\R)\) is Hausdorff, every compact subset is closed. Therefore \(\Lmat[\Ad_G]\) is a topologically closed subgroup of \(\GL(d,\R)\). By the closed subgroup theorem in Thm.~\ref{thm:closed subgroup theorem}, \(\Lmat[\Ad_G]\) is an embedded matrix Lie subgroup of \(\GL(d,\R)\). \hfill\(\square\)

\section{Metric Projection to Manifold \(\mathscr{L}:=\mathcal{L}{[}Ad_{\mathrm{SE}(3)}{]}\) Block-wisely}\label{F}
\begin{theorem}[Metric Projection to Manifold $\mathscr{L}:=\mathcal{L}{[}Ad_{\mathrm{SE}(3)}{]}$ Block-wisely] Given an arbitrary matrix $\Theta\in\mathbb{R}^{6\times 6}$ with $3\times 3$ block division $\mathcal{A},\mathcal{B},\mathcal{C},\mathcal{D}\in\mathbb{R}^{3\times 3}$, then the block-wise projection from $\Theta$ to $\mathscr{L}$ under the metric of Frobenius norm is given by the following formula:
\begin{align}
    \mathrm{Proj}_\mathscr{L}(\Theta)=\left[\begin{matrix}
    \mathrm{Proj}_{\mathrm{SO}(3)}(\mathcal{M})&O_{3\times 3}\\[0.2cm]
    [\boldsymbol{t_\Theta}]^\wedge\mathrm{Proj}_{\mathrm{SO}(3)}(\mathcal{M})&\mathrm{Proj}_{\mathrm{SO}(3)}(\mathcal{M})
    \end{matrix}\right]\in\mathscr{L}
    \label{eq:metric projection App}
\end{align}
where we denote the arithmetic average of $\mathcal{A},\ \mathcal{B}$ and the skew-symmetry part of $\mathcal{C}R^\mathrm{T}$ by:
\begin{equation}
    \Theta:=\left[\begin{matrix}
    \mathcal{A}&\mathcal{B}\\[0.1cm]
    \mathcal{C}&\mathcal{D}
    \end{matrix}\right],\quad \mathcal{M}:=\frac{\mathcal{A}+\mathcal{D}}{2},\quad [\boldsymbol{t}_\Theta]^\wedge:=\frac{\mathcal{C}R^\mathrm{T}-R\ \!\mathcal{C^\mathrm{T}}}{2},\quad R:=\mathrm{Proj}_{\mathrm{SO}(3)}(\mathcal{M}).
    \label{eq:metric projection2 App}
\end{equation}
\label{thm:Metric Projection to Manifold L App}
\end{theorem}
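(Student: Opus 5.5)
The plan is to write out the squared Frobenius distance from $\Theta$ to a generic element of $\mathscr{L}$ and split it block by block. Take
$H=\begin{bmatrix}R&O_{3\times3}\\ [\boldsymbol{t}]^\wedge R&R\end{bmatrix}$ with $R\in\mathrm{SO}(3)$ and $\boldsymbol{t}\in\mathbb{R}^3$. Then
\begin{equation*}
\|\Theta-H\|_F^2=\|\mathcal{A}-R\|_F^2+\|\mathcal{D}-R\|_F^2+\|\mathcal{B}\|_F^2+\|\mathcal{C}-[\boldsymbol{t}]^\wedge R\|_F^2 .
\end{equation*}
The term $\|\mathcal{B}\|_F^2$ is constant, which accounts for the zero upper-right block. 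The diagonal blocks and the lower-left block are then handled separately, following the two-stage (block-wise) structure that the statement encodes through $R:=\mathrm{Proj}_{\mathrm{SO}(3)}(\mathcal{M})$.

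\emph{Step 1 (diagonal blocks).} I will use the parallelogram-type identity
\begin{equation*}
\|\mathcal{A}-R\|_F^2+\|\mathcal{D}-R\|_F^2=2\Bigl\|\tfrac{\mathcal{A}+\mathcal{D}}{2}-R\Bigr\|_F^2+\tfrac12\|\mathcal{A}-\mathcal{D}\|_F^2 ,
\end{equation*}
obtained by expanding both sides with $\|X\|_F^2=\operatorname{tr}(X^{\mathrm T}X)$. The second term does not depend on $R$, so minimizing the diagonal contribution over $\mathrm{SO}(3)$ is exactly the problem $\arg\min_{R\in\mathrm{SO}(3)}\|\mathcal{M}-R\|_F$. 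Lemma~\ref{lem:metric projection} solves this and gives $R=\mathrm{Proj}_{\mathrm{SO}(3)}(\mathcal{M})$.

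\emph{Step 2 (lower-left block, with $R$ fixed).} Right multiplication by an orthogonal matrix preserves the Frobenius norm, so
\begin{equation*}
\|\mathcal{C}-[\boldsymbol{t}]^\wedge R\|_F=\|\mathcal{C}R^{\mathrm T}-[\boldsymbol{t}]^\wedge\|_F .
\end{equation*}
Under the trace inner product, $\mathbb{R}^{3\times3}$ splits orthogonally into symmetric and skew-symmetric matrices. By Def.~\ref{def:Skew-symmetric Representation}, $[\cdot]^\wedge$ is a linear bijection onto $\mathfrak{so}(3)$. The minimizer is therefore the orthogonal projection of $\mathcal{C}R^{\mathrm T}$ onto $\mathfrak{so}(3)$, namely $\tfrac12(\mathcal{C}R^{\mathrm T}-R\mathcal{C}^{\mathrm T})=[\boldsymbol{t}_\Theta]^\wedge$. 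Its residual is $\|\mathrm{sym}(\mathcal{C}R^{\mathrm T})\|_F^2$.

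\emph{Step 3 (assembly).} I will substitute $R$ and $\boldsymbol{t}_\Theta$ into $H$ to obtain the matrix in Eq.~(\ref{eq:metric projection App}). Membership in $\mathscr{L}$ follows directly from Def.~\ref{def:L}, since $R\in\mathrm{SO}(3)$ and $\boldsymbol{t}_\Theta\in\mathbb{R}^3$.

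\emph{Main obstacle.} The coupling between the two stages is where care is needed. The residual $\|\mathrm{sym}(\mathcal{C}R^{\mathrm T})\|_F^2$ from Step~2 still depends on $R$, so a fully joint minimization over $(R,\boldsymbol{t})$ would in general not return $\mathrm{Proj}_{\mathrm{SO}(3)}(\mathcal{M})$. I will therefore state explicitly that ``block-wise'' means the sequential scheme: $R$ is the Frobenius projection determined by the rotation blocks $\mathcal{A},\mathcal{D}$, and $\boldsymbol{t}$ is the exact Frobenius minimizer conditional on that $R$.

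I will then justify that this is the exact joint projection in the regime the algorithm uses. The first remark is that the projection is applied periodically to weights that are near $\mathscr{L}$. There $\mathcal{C}R^{\mathrm T}$ is close to skew-symmetric, so its symmetric part is small and the coupling term is of higher order. The second remark is that when $\mathcal{C}R^{\mathrm T}$ is already skew for the chosen $R$, the coupling term vanishes and the formula is exact. These two remarks keep the claim consistent with the first-order (Riemannian-gradient-approximation) interpretation the paper adopts.
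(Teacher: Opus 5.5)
Your Steps 1--3 are correct and use the same sequential, block-wise reading as the paper: $R$ comes from the mean of $\mathcal{A}$ and $\mathcal{D}$, $\mathcal{B}$ is sent to $O_{3\times3}$, and $[\boldsymbol{t}_\Theta]^\wedge$ is the skew part of $\mathcal{C}R^{\mathrm T}$. They are also tighter than the paper's argument. The paper only asserts these block choices and checks that the result lies in $\mathscr{L}$. Your parallelogram identity shows that $\mathrm{Proj}_{\mathrm{SO}(3)}(\mathcal{M})$ exactly minimizes $\|\mathcal{A}-R\|_F^2+\|\mathcal{D}-R\|_F^2$. Orthogonal invariance plus the symmetric/skew splitting shows that $\boldsymbol{t}_\Theta$ is the exact minimizer given $R$. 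Your reduction of the joint problem to $2\|\mathcal{M}-R\|_F^2+\|\mathrm{sym}(\mathcal{C}R^{\mathrm T})\|_F^2$ plus constants is right, and so is your conclusion that the formula is generally not the nearest point. The paper only hints at this by calling the map ``not straight and strict'' as a metric projection.

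What fails is your planned argument that the formula is the exact joint projection for $\Theta$ near $\mathscr{L}$. Suppose $\Theta$ is at distance $\varepsilon$ from $\mathscr{L}$ with $\|\boldsymbol{t}\|=O(1)$. The residual $\|\mathrm{sym}(\mathcal{C}R^{\mathrm T})\|_F^2$ is then $O(\varepsilon^2)$, which is the same order as the squared distance itself, not higher. Its derivative in $R$ at $\mathrm{Proj}_{\mathrm{SO}(3)}(\mathcal{M})$ is $O(\varepsilon\|\boldsymbol{t}\|)$, while $2\|\mathcal{M}-R\|_F^2$ has $O(1)$ curvature. So the optimal rotation shifts at first order.

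A concrete example: let $\boldsymbol{e}_k$ be the standard basis of $\mathbb{R}^3$, and take $\mathcal{A}=\mathcal{D}=\mathbb{I}_3$, $\mathcal{B}=O_{3\times3}$, $\mathcal{C}=[\boldsymbol{e}_3]^\wedge+\varepsilon(\boldsymbol{e}_1\boldsymbol{e}_3^{\mathrm T}+\boldsymbol{e}_3\boldsymbol{e}_1^{\mathrm T})$. The formula returns $R=\mathbb{I}_3$ and $\boldsymbol{t}_\Theta=\boldsymbol{e}_3$, at squared distance $2\varepsilon^2$. By contrast, $R=e^{\delta[\boldsymbol{e}_1]^\wedge}$ with $\delta=2\varepsilon/9$, together with its conditionally optimal $\boldsymbol{t}$, gives $\tfrac{16}{9}\varepsilon^2+O(\varepsilon^4)$. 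Hence for $\boldsymbol{t}\neq0$ the block-wise map is neither the nearest-point map nor its first-order approximation. It fixes $\mathscr{L}$, but its linearization there is an oblique rather than Frobenius-orthogonal projection onto the tangent space.

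Your second remark is valid: if $\mathrm{sym}(\mathcal{C}R^{\mathrm T})=0$ at $R=\mathrm{Proj}_{\mathrm{SO}(3)}(\mathcal{M})$, that $R$ minimizes both terms of the reduced objective, so the formula is exact. Keep that remark and drop the first one. Then state the result as the sequential conditional minimization, which is all the paper's own proof actually establishes.
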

{\it\textbf{Proof:}} Note that the definition of projection to $\mathcal{L}{[}Ad_{\mathrm{SE}(3)}{]}$ is not straight and strict as the metric projection to $\mathrm{SO}(3)$ in Lem. \ref{lem:metric projection}, but a block-wise projection. This is based on the block structure of $\mathcal{L}{[}Ad_{\mathrm{SE}(3)}{]}$ as seen in Def. \ref{def:L}, where for each structure block, we do the projection of from block at corresponding position and combine a whole manifold projection $\mathrm{Proj}_\mathscr{L}(\Theta)$. This definition leads to the following theorem for projective manifold learning to preserve the geometric consistency of LieEDNN.

The block-wise projection given in Def. \ref{def:L} is a well-defined block-wise projection mapping to the manifold $\mathcal{L}{[}Ad_{\mathrm{SE}(3)}{]}$, and the constraint on the manifold preserves the consistency of the adjoint Lie group action $\mathcal{A}d_{\mathrm{SE}(3)}$ on the Lie algebra $\mathfrak{se}(3)$. To clarify this, firstly, we are looking for a projection mapping $\mathcal{A}$ and $\mathcal{D}$ to the same matrix in $\mathrm{SO}(3)$. As the convention in Eq. (\ref{eq:metric projection}) and Eq. (\ref{eq:metric projection2}), this is realized by a metric projection from the arithmetic mean $\frac{\mathcal{A}+\mathcal{D}}{2}$ to $\mathrm{Proj}_{\mathrm{SO}(3)}(\frac{\mathcal{A}+\mathcal{D}}{2})$. By Lem. \ref{lem:metric projection}, this is a metric projection, i.e. the projection gives a strict solution to the optimization problem: $\mathrm{Proj}_{\mathrm{SO}(3)}(\frac{\mathcal{A}+\mathcal{D}}{2})={\displaystyle \mathrm{arg}\min_{R\in\mathrm{SO}}}\|\frac{\mathcal{A}+\mathcal{D}}{2}-R\|_F:=R\ \!$. Therefore, we adopt it as the projective constraint for $\mathcal{A}$ and $\mathcal{D}$, and also for the orthogonal factor in $\mathcal{C}$.

As for $\mathcal{B}$, directly set the block to be zero matrix $O_{3\times 3}$ could preserve the manifold structure. The only problem left is how to find a well defined skew-symmetry representation $[\boldsymbol{t}_\Theta]^\wedge$ so that we could give a formula to calculate the projective constraint of $\mathcal{C}$, denoted by $\mathcal{C}^*$. Start from $\mathcal{C}^*=[\boldsymbol{t}_\Theta]^\wedge R$
since $R^{-1}=R^{\mathrm{T}}$, we have $[\boldsymbol{t}_\Theta]^\wedge=\mathcal{C^*}R^\mathrm{T}$. By Def. \ref{def:Skew-symmetric Representation}, $[\boldsymbol{t}_\Theta]^\wedge$ is skew-symmetric, we thereby take the skew-symmetric part of $\mathcal{C}R^\mathrm{T}$ to calculate: $    [\boldsymbol{t}_\Theta]^\wedge:=\frac{\mathcal{C}R^\mathrm{T}-(\ \!\mathcal{C}R^\mathrm{T})^\mathrm{T}}{2}=\frac{\mathcal{C}R^\mathrm{T}-R\ \!\mathcal{C^\mathrm{T}}}{2}$.\hfill$\square$

\section{Equilibrium Condition and Asymptotic Stability}\label{G}
\begin{theorem}[Equilibrium Condition and Asymptotic Stability]\label{Thm:EandU2}For the differential dynamical system of LieEDNN in Eq. (\ref{eq:LieEDNN3}), with $n=6N$, the existence and uniqueness of equilibrium are sufficient to be proven by following inequality on weight parameters, $L_i$ is the Lipschitz constant of activation $\phi_i(\cdot)$.
\begin{equation}
\sum_{i=1}^{n}L_i|w_{ij}|<\frac{\gamma}{\mu}
\label{eq:weight constraint 1 2}
\end{equation}
The equilibrium that is asymptotically stable can be established by the following inequality. $\mu$ and $\gamma$ are constants consistent with former notations. Eq. (\ref{eq:weight constraint 1}) and Eq. (\ref{eq:weight constraint 2}) can be satisfied by constraint on norm $\|W\|_1$ at the normalization process at learning stage.
\begin{equation}
    \sum_{i=1}^n(|w_{ji}|+|w_{ij}|)<\frac{2\gamma}{\mu}
    \label{eq:weight constraint 2 2}
\end{equation}
\end{theorem}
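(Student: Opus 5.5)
Existence and uniqueness of the equilibrium will come from a contraction argument; asymptotic stability will come from a Lyapunov function built on the error coordinates. Write the system as $\dot{\boldsymbol{\xi}}=-\gamma\boldsymbol{\xi}+\mu W\phi(\boldsymbol{\xi})+\mu\boldsymbol{b}$ with $n=6N$. An equilibrium is exactly a fixed point of
\[
T(\boldsymbol{\xi}):=\tfrac{\mu}{\gamma}\bigl(W\phi(\boldsymbol{\xi})+\boldsymbol{b}\bigr)
\]
on $\R^n$.

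\textbf{Existence and uniqueness.} I will work with the $\ell^1$ norm. For $\boldsymbol{x},\boldsymbol{y}\in\R^n$,
\[
\|T(\boldsymbol{x})-T(\boldsymbol{y})\|_1\le\tfrac{\mu}{\gamma}\sum_i\sum_j|w_{ij}|L_j|x_j-y_j| .
\]
After exchanging the order of summation, each $|x_j-y_j|$ carries the column factor $\sum_i L_j|w_{ij}|$. Condition \eqref{eq:weight constraint 1 2}, read with the Lipschitz constant attached to the column index $j$ (for $\tanh$ we have $L_j=1$, so the indexing is harmless), bounds this column sum by $\rho<\gamma/\mu$ uniformly in $j$. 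Hence $T$ is a contraction with constant $\mu\rho/\gamma<1$. The Banach fixed point theorem on the complete space $(\R^n,\|\cdot\|_1)$ then gives a unique equilibrium $\boldsymbol{\xi}^*$.

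\textbf{Stability.} Shift coordinates by setting $\boldsymbol{z}=\boldsymbol{\xi}-\boldsymbol{\xi}^*$ and $g_j(z_j)=\phi(z_j+\xi_j^*)-\phi(\xi_j^*)$. This gives
\[
\dot{\boldsymbol{z}}=-\gamma\boldsymbol{z}+\mu Wg(\boldsymbol{z}),\qquad |g_j(z_j)|\le L_j|z_j|\le|z_j| \text{ for } \tanh .
\]
I take the quadratic Lyapunov function $V=\tfrac12\|\boldsymbol{z}\|_2^2$, so $\dot V=-\gamma\|\boldsymbol{z}\|^2+\mu\sum_{i,j}z_iw_{ij}g_j(z_j)$. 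The cross term is bounded using $|z_i||z_j|\le\tfrac12(z_i^2+z_j^2)$:
\[
\sum_{i,j}|w_{ij}||z_i||z_j|\le\tfrac12\sum_i z_i^2\sum_j|w_{ij}|+\tfrac12\sum_j z_j^2\sum_i|w_{ij}|=\tfrac12\sum_k z_k^2\sum_i\bigl(|w_{ki}|+|w_{ik}|\bigr).
\]
Therefore
\[
\dot V\le-\sum_k\Bigl(\gamma-\tfrac{\mu}{2}\sum_i\bigl(|w_{ki}|+|w_{ik}|\bigr)\Bigr)z_k^2 .
\]
Each coefficient is strictly positive by \eqref{eq:weight constraint 2 2}. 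So $\dot V\le-c\|\boldsymbol{z}\|^2$ for some $c>0$, and Lyapunov's theorem yields global asymptotic (indeed exponential) stability of $\boldsymbol{\xi}^*$.

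\textbf{Norm remark and expected obstacle.} Imposing $\|W\|_1<\gamma/\mu$ (the maximum column sum) guarantees \eqref{eq:weight constraint 1 2} when $L_j\le1$. Adding $\|W\|_\infty<\gamma/\mu$ (the maximum row sum) makes each row-plus-column sum less than $2\gamma/\mu$, which is \eqref{eq:weight constraint 2 2}. The main obstacle I anticipate is not analytic but one of bookkeeping: matching the index convention in \eqref{eq:weight constraint 1 2} (whose sum runs over $i$ with a free $j$) to the column-sum contraction estimate, and handling general Lipschitz constants $L_j\ne1$ in the stability step. For general $L_j$ I would first try rescaling the Young inequality as $|z_i|L_j|z_j|\le\tfrac12(z_i^2+L_j^2z_j^2)$. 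If that fails, I would fall back to the weighted $\ell^1$ Lyapunov function $V=\sum_k|z_k|$, whose Dini derivative is controlled directly by the column sums.
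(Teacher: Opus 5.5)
Your argument is correct. It is essentially the one the paper gives in App.~\ref{I}: a Banach contraction in $\ell^1$, followed by $V=\tfrac12\|\boldsymbol{\xi}-\boldsymbol{\xi}^*\|_2^2$ with the row-plus-column Young bound.

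The proof in App.~\ref{G} of this exact statement is organized differently.
Existence there comes from Brouwer's theorem on a ball of radius $\rho$. That uses only $|\phi|\le M$ and no weight condition.
Uniqueness comes from a separate contradiction argument resting on the same $\ell^1$ estimate you use.
The Lyapunov step is the same Young-inequality computation. However, it is run on $\boldsymbol{x}$ itself with the bias term dropped, so as written it only covers an equilibrium at the origin with $\boldsymbol{b}=0$. Your shift $\boldsymbol{z}=\boldsymbol{\xi}-\boldsymbol{\xi}^*$ is exactly what is needed for nonzero bias.
The trade-off: your Banach route gives existence and uniqueness in one step without needing $\phi$ bounded, while the Brouwer route gives existence even when \eqref{eq:weight constraint 1 2} fails.

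On your bookkeeping worries:

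\emph{Indexing of $L_i$.} The literal row-indexed condition $\sum_i L_i|w_{ij}|<\gamma/\mu$ is also sufficient. It makes $\boldsymbol{u}\mapsto\phi\bigl(\tfrac{\mu}{\gamma}(W\boldsymbol{u}+\boldsymbol{b})\bigr)$ an $\ell^1$ contraction. Its unique fixed point $\boldsymbol{u}^*$ corresponds to the unique equilibrium $\tfrac{\mu}{\gamma}(W\boldsymbol{u}^*+\boldsymbol{b})$.

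\emph{General $L_j$.} No rescaling of Young's inequality can recover \eqref{eq:weight constraint 2 2} as stated. Take $n=1$, $\phi=L\tanh$ with $L>1$, $b=0$, and $\gamma/(\mu L)<w<\gamma/\mu$. Then \eqref{eq:weight constraint 2 2} holds, but the origin is an unstable equilibrium. So the quadratic argument genuinely needs $L_j\le1$.

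\emph{Your fallback is the better tool.} For $V=\sum_k|z_k|$, the upper Dini derivative is at most
\[
\sum_j\Bigl(-\gamma+\mu L_j\sum_k|w_{kj}|\Bigr)|z_j|.
\]
Hence the column-indexed form of \eqref{eq:weight constraint 1 2} alone already gives global exponential stability for arbitrary Lipschitz constants. In that setting, \eqref{eq:weight constraint 2 2} becomes unnecessary.
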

{\it\textbf{Proof:}} (i) \textbf{Existence of Equilibrium} Firstly, the activation functions selected are usually bounded. For this reason, we define a constant upper bound for all of the activation functions appearing in the below defined as $\phi\leq M$. We do not strictly demand that the activation function be uniformly continuous, but satisfying the Lipschitz condition. Out of a similar reason, we allocate each activation function a Lipschitz constant with respect to the subscript of notions as:
$|\phi_i(x)-\phi_i(y)|\le L_i|x-y|$.

To prove the existence of at least one critical point of the dynamic system in Eq. (\ref{eq:DNN}), which is to say when we replace the variable of the differential with zero, and by definition, every solution point of the variable functions $v_i$ counts. Here is to say the below equation system is solvable.
\begin{align}
    v_i = \frac{\mu}{\gamma}\left[ \sum_{j=1}^{n}w_{ij}\phi_j(v_j)+b_i\right]
\end{align}
established for $x\in\mathbb{R}$, $i=1,2,\dots n$. We denote the linear combination of symbols as matrix $A=\{\frac{\mu}{\gamma}w_{ij}\}_{n\times n}$ and bias or external current $\boldsymbol{\zeta}=\{\frac{\mu}{\gamma}b_i\}_{n\times 1}$. And vector multiple value function $\phi_i:\mathbb{R}\to\mathbb{R}$, define $\mathcal{F}:=\{ f|f:\mathbb{R}\to\mathbb{R}, \ f\ is\ Lipschitz\ continuous\}$, then $\boldsymbol{\phi}\in\mathcal{F}^n\otimes\mathbb{R}^n$ is to be: $\boldsymbol{\phi}(\boldsymbol{v})=\{\phi_i(v_i)\}_{n\times1}$. The system governed by Eq. (\ref{eq:DNN}) can then be rewritten in matrix form:
\begin{align}
    \boldsymbol{v}&=A\boldsymbol{\phi}(\boldsymbol{v})+\boldsymbol{\zeta}:=\boldsymbol{F}(\boldsymbol{v})
\end{align}
if we regard the right-hand side as a mapping $\boldsymbol{F}$ from $\mathbb{R}^n$ to $\mathbb{R}^n$, then the solution $\boldsymbol{v}$ can be treated as the fixed point of the mapping, allowing the analytic method in the proof.
\begin{align}
||\boldsymbol{F}(\boldsymbol{v})||^2&=\sum_{i=1}^{n} \left[\frac{\mu}{\gamma} \Bigg( \sum_{j=1}^{n}w_{ij}\phi_j(v_j)+b_i\Bigg)
\right]^2\\[0.1cm]
&\le\sum_{i=1}^{n} \frac{\mu^2}{\gamma^2}\Bigg[\left(\sum_{j=1}^{n}|w_{ij}|M\right)+|I_i|
\Bigg]^2\\[0.1cm]
&:=\rho^2
\end{align}
From $\rho$ we form a bounded convex set $\Omega=\{\boldsymbol{v}:||\boldsymbol{v}||\le\rho\}$, whereby Brouwer fixed point theorem applies to the proposition. Since $\boldsymbol{F}$ is a continuous mapping, there exists a $\boldsymbol{v^*}\in\Omega$ satisfying $\boldsymbol{F}(\boldsymbol{v^*})=\boldsymbol{v^*}$, which implies the existence of the critical point in the system governed by Eq. (\ref{eq:LieEDNN3}) and Eq. (\ref{eq:DNN}).
\hfill\(\square\)

{\it\textbf{Proof:}} (ii) \textbf{Uniqueness of Equilibrium} Through reduction to absurdity, we firstly suppose there exist two different critical points $\boldsymbol{v}^*,\ \boldsymbol{u}^*$. Consider $L_1$ norm of $v-u$ by:
\begin{align}
    \|\boldsymbol{u},\boldsymbol{v}\|_1&=\sum_{i=1}^{n}|u_i-v_i|=\sum_{i=1}^n\left|\frac{\mu}{\gamma} \sum_{j=1}^{n}w_{ij}[\phi_j(u_j)-\phi_j(v_j)]\right|\le\sum_{i,j=1}^n\frac{\mu}{\gamma}|w_{ij}| L_i|u_j-v_j|\\&=\sum_{i=1}^n\sum_{j=1}^{n}\frac{\mu}{\gamma}L_i|w_{ij}||u_j-v_j|=\sum_{j=1}^n\left(\sum_{i=1}^{n}\frac{\mu}{\gamma}L_i|w_{ij}|\right)|u_j-v_j|<\sum_{j=1}^{n}|u_i-v_i|
\label{equ23}
\end{align}
This inequality leads to a contradiction when the weights are restricted by 
\begin{align}
    \sum_{i=1}^{n}\frac{\mu}{\gamma}L_i|w_{ij}|<1
    \label{m2}
\end{align}
With the above constraints on weights, simply put, the sum of weights cannot be too large; then the network operation has a certain single convergent point for bounded input. The essential conclusion provides us with a theoretical base for complex network behaviour design and learning rules, which have not been clearly stated and summarized in the previous articles. The research on Hopfield neural network is relatively thorough, regardless of the stability or convergent pattern design. We transfer these basic theories and extend them to the construction of quaternion-valued network, in combination with knowledge of metric space and functional analysis. For simplification, when we use the inequality \ref{m2} for deeper deduction, we will suppose the coefficients in the front of weights to be a unit constant. And for the practical network operation, this condition will be satisfied during the procedure of weights normalization, where we set the norm of the weight matrix to be a constant and the significant information will not be lost.\hfill\(\square\)

{\it\textbf{Proof:}} (iii) \textbf{Asymptotic Stability} To judge the stability, we will be more specific here since there is possibility for the system to diverge to infinity, whereas merely being stable will not meet the whole requirement. The value of the output is expected to attach to a single point which, strictly speaking, is a critical point to be asymptotically stable, making the undesired situation such as chaos disappear.
\begin{align}
V(t)=\frac{1}{2\gamma}\sum_{j=1}^nx_j^2
\label{eq:acdd}
\end{align}
By Brouwer's fixed point theorem, we conclude the existence of a critical point, meaning \ref{s} established for any $j$ from 1 to n. By calculating the difference between the sum of the $L2$ metric of two critical points, we conclude the uniqueness of the critical point, where:
\begin{align}
    \dot{\boldsymbol{x}}=F_j(t,\boldsymbol{x}) = 0
    \label{s}
\end{align}
Suppose $Ki$ is the Lipschitz constant of the function $\phi_i(\boldsymbol{x})$, as the model of HNN is an autonomous system, we have $F_j(t,\boldsymbol{x})=F_j(\boldsymbol{x})$ for every $j=1,2,...,n$. Notice that:
\begin{align}
\frac{\text{d}V(t)}{\text{d}t}&=\frac{1}{\gamma}\sum_{j=1}^nx_j\dot x_j=\frac{1}{\gamma}\sum_{j=1}^nx_j\left[-\gamma x_j+\mu\sum_{i=1}^nw_{ji}\phi_i(x_i)\right]\\[0.1cm]
&\leq\sum_{j=1}^n\left(-{x_j}^2+\frac{\mu}{\gamma}\sum_{i=1}^n|w_{ji}|L_ix_ix_j\right)\\[0.1cm]
&\leq\sum_{j=1}^n\left[-{x_j}^2+\frac{\mu}{2\gamma}\sum_{i=1}^n|w_{ji}|L_i({x_i}^2+{x_j}^2)\right]\\[0.1cm]
&= \sum_{j=1}^n\left[-1+\frac{\mu}{2\gamma}\sum_{i=1}^n(|w_{ji}|+|w_{ij}|)\right]{x_j}^2\\[0.15cm]
&\leq 0
\label{B3}
\end{align}
By Lyapunov stability theory, the zero solution of the system is asymptotically stable such that the critical point of the original system is stable when the constraint of weights 
\begin{align}
    \frac{\mu}{2\gamma }\sum_{i=1}^n(|w_{ji}|+|w_{ij}|)<1
    \label{m1}
\end{align}
applies. Throughout the whole procedure, we notice that the activating function does not need to possess smoothness or continuity, but satisfy the Lipschitz condition.\hfill\(\square\)

\section{Smoothness of Trajectories Solved by Neural Dynamics}\label{H}
\begin{theorem}[Smoothness and Curvature Bound of LieEDNN Trajectories]
\label{thm:smoothness-curvature}
Consider the LieEDNN dynamics in Eq. (\ref{eq:LieEDNN3}): $\dot{\boldsymbol{\xi}}=-\gamma\boldsymbol{\xi}+\mu W\phi(\boldsymbol{\xi})+\mu\boldsymbol{b}$, where \(\boldsymbol{\xi}(t)\in\mathbb{R}^{6N}\), \(W\in\mathbb{R}^{6N\times 6N}\), and \(\phi\) acts component-wisely. Suppose \(\phi\in C^1\), \(\|J_\phi(\boldsymbol{\xi})\|_\infty\le L_\phi\), and the trajectory satisfies \(\|\boldsymbol{\xi}(t)\|_\infty\le R_\xi\) on the considered time interval. Then \(\boldsymbol{\xi}(t)\in C^2\), with convention $\|\boldsymbol{\xi}\|_\infty:=\xi_{\max}$ and $\|W\|_\infty:=\displaystyle\max_{1\le i\le6N}\Sigma_{j=1}^{6N}|w_{ij}|$, we have:
\begin{equation}
\|\ddot{\boldsymbol{\xi}}(t)\|_\infty
\le
\left(\gamma+\mu L_\phi\|W\|_\infty\right)
\left(
\gamma R_\xi+\mu\|W\|_\infty\|\phi(\boldsymbol{\xi}(t))\|_\infty+\mu\|\boldsymbol{b}\|_\infty
\right)
\end{equation}
In particular, if \(\phi=\tanh\), then \(L_\phi\le1\) and \(\|\phi(\boldsymbol{\xi}(t))\|_\infty\le1\), we have:
\begin{equation}
\|\ddot{\boldsymbol{\xi}}(t)\|_\infty
\le
\left(\gamma+\mu\|W\|_\infty\right)
\left(
\gamma R_\xi+\mu\|W\|_\infty+\mu\|\boldsymbol{b}\|_\infty
\right)
\end{equation}
Moreover, for each component curve \(t\mapsto \xi_i(t)\), its scalar curvature of $i$th entry of $\boldsymbol{\xi}$ satisfies:
\begin{equation}
\kappa_i(t)
:=
\frac{|\ddot{\xi}_i(t)|}{\left(1+|\dot{\xi}_i(t)|^2\right)^{3/2}}
\le
\|\ddot{\boldsymbol{\xi}}(t)\|_\infty<2\gamma(\gamma R_\xi+\gamma+\mu b_{\max}).
\end{equation}
\end{theorem}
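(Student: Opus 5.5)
The plan is to differentiate the vector field along the trajectory and bound the result with the induced $\infty$-norm. First comes regularity. The right-hand side $F(\boldsymbol{\xi}):=-\gamma\boldsymbol{\xi}+\mu W\phi(\boldsymbol{\xi})+\mu\boldsymbol{b}$ is $C^1$ when $\phi\in C^1$, and it is globally Lipschitz because $\|J_\phi\|_\infty\le L_\phi$. So Picard--Lindel\"of gives a unique $C^1$ solution on the interval. Since $\dot{\boldsymbol{\xi}}(t)=F(\boldsymbol{\xi}(t))$ is a composition of $C^1$ maps, $\dot{\boldsymbol{\xi}}$ is itself $C^1$, hence $\boldsymbol{\xi}\in C^2$. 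The chain rule then gives the second derivative explicitly:
\begin{equation*}
\ddot{\boldsymbol{\xi}}(t)=J_F(\boldsymbol{\xi}(t))\,\dot{\boldsymbol{\xi}}(t)=\bigl(-\gamma\mathbb{I}_{6N}+\mu W J_\phi(\boldsymbol{\xi}(t))\bigr)\dot{\boldsymbol{\xi}}(t).
\end{equation*}
Here $J_\phi$ is diagonal because $\phi$ acts component-wise.

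Next I bound the two factors separately, using submultiplicativity of the induced $\infty$-norm. For the matrix factor, the triangle inequality gives $\|-\gamma\mathbb{I}+\mu WJ_\phi\|_\infty\le\gamma+\mu\|W\|_\infty\|J_\phi\|_\infty\le\gamma+\mu L_\phi\|W\|_\infty$. For the velocity factor, I plug the equation of motion back in: $\|\dot{\boldsymbol{\xi}}\|_\infty\le\gamma\|\boldsymbol{\xi}\|_\infty+\mu\|W\|_\infty\|\phi(\boldsymbol{\xi})\|_\infty+\mu\|\boldsymbol{b}\|_\infty$, and then use $\|\boldsymbol{\xi}(t)\|_\infty\le R_\xi$. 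Multiplying the two bounds gives the first displayed inequality. The $\tanh$ case follows by substituting $L_\phi=\sup|\phi'|=1$ and $|\tanh|\le1$.

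For the curvature, the denominator satisfies $(1+|\dot\xi_i|^2)^{3/2}\ge1$. Therefore $\kappa_i\le|\ddot\xi_i|\le\|\ddot{\boldsymbol{\xi}}\|_\infty$, and the $\tanh$ bound applies directly.

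The main obstacle is the final strict inequality $<2\gamma(\gamma R_\xi+\gamma+\mu b_{\max})$, because it does not follow from the stated hypotheses alone. It requires the normalization $\|W\|_\infty<\gamma/\mu$ enforced during learning, which Thm.~\ref{Thm:EandU} gives as a sufficient condition for Eq.~(\ref{eq:weight constraint 2}). I would state this assumption explicitly and identify $b_{\max}=\|\boldsymbol{b}\|_\infty$. Under that assumption, $\mu\|W\|_\infty<\gamma$, so the first factor is strictly less than $2\gamma$ and the second is strictly less than $\gamma R_\xi+\gamma+\mu b_{\max}$. Both factors are positive, so the product bound is strict.

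A side point I would also remark on: if one wants $R_\xi$ to be uniform in time rather than an assumption, it can be taken from the Lyapunov argument of App.~\ref{G}, part (iii). The same decay estimate also shows $\dot{\boldsymbol{\xi}}\to0$, so the curvature bound is tight only during the transient.
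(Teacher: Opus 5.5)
Your proposal is correct and follows the paper's proof essentially step for step. Both arguments write $\ddot{\boldsymbol{\xi}}=J_F(\boldsymbol{\xi})\dot{\boldsymbol{\xi}}$ with $J_F=-\gamma\mathbb{I}_{6N}+\mu WJ_\phi$, bound the two factors separately in the induced $\infty$-norm, use $(1+|\dot\xi_i|^2)^{3/2}\ge1$ for the curvature, and import the normalization $\|W\|_\infty<\gamma/\mu$ from Thm.~\ref{Thm:EandU} for the final strict inequality, exactly as the paper does. Your explicit flagging of that extra hypothesis, and of the Picard--Lindel\"of step for regularity, only makes precise what the paper leaves implicit.
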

\textit{\textbf{Proof:}} Define $F(\boldsymbol{\xi}):=-\gamma\boldsymbol{\xi}+\mu W\phi(\boldsymbol{\xi})+\mu\boldsymbol{b}$. Since \(\phi\in C^1\), the vector field \(F\) is \(C^1\). Therefore the solution satisfies \(\boldsymbol{\xi}(t)\in C^2\), denote $\ddot{\boldsymbol{\xi}}(t)=
J_F(\boldsymbol{\xi}(t))\dot{\boldsymbol{\xi}}(t)$, where,
\begin{equation}
J_F(\boldsymbol{\xi})
=\frac{\partial \ddot{\boldsymbol{\xi}}}{\partial \dot{\boldsymbol{\xi}}}=
-\gamma \mathbb{I}_{6N}
+
\mu WJ_\phi(\boldsymbol{\xi}).
\label{eq:1}
\end{equation}
Thus $\|J_F(\boldsymbol{\xi}(t))\|_\infty
\le
\gamma+\mu\|W\|_\infty\|J_\phi(\boldsymbol{\xi}(t))\|_\infty
\le
\gamma+\mu L_\phi\|W\|_\infty$.
Also,
\begin{equation}
\|\dot{\boldsymbol{\xi}}(t)\|_\infty
=
\|F(\boldsymbol{\xi}(t))\|_\infty
\le
\gamma R_\xi
+\mu\|W\|_\infty\|\phi(\boldsymbol{\xi}(t))\|_\infty
+\mu\|\boldsymbol{b}\|_\infty.
\label{eq:2}
\end{equation}
Combining the two inequalities in Eq. (\ref{eq:2}) and Eq. (\ref{eq:1}) gives
\begin{equation}
\|\ddot{\boldsymbol{\xi}}(t)\|_\infty
\le
\left(\gamma+\mu L_\phi\|W\|_\infty\right)
\left(
\gamma R_\xi+\mu\|W\|_\infty\|\phi(\boldsymbol{\xi}(t))\|_\infty+\mu\|\boldsymbol{b}\|_\infty
\right)
\label{eq:3}
\end{equation}
For \(\phi=\tanh\), we have \(L_\phi\le1\) and \(\|\phi(\boldsymbol{\xi}(t))\|_\infty\le1\), giving the stated special case.

By $(1+|\dot{\xi}_i(t)|^2)^{3/2}\ge1$, we have the curvature of $\xi_i(t)$ is bounded for $i=1,2,...,6N$ by:
\begin{equation}
\kappa_i(t)=\frac{|\ddot{\xi}_i(t)|}{\left(1+|\dot{\xi}_i(t)|^2\right)^{3/2}}\le|\ddot{\xi}_i(t)|\le\|\ddot{\boldsymbol{\xi}}(t)\|_\infty.
\end{equation}
With the bound of weight matrix norm $\|W\|_\infty<\gamma/\mu$ by Thm. \ref{Thm:EandU}, combine Eq. (\ref{eq:3}), we have further:
\begin{equation}
    \kappa_i(t)\le\|\ddot{\boldsymbol{\xi}}(t)\|_\infty<2\gamma(\gamma R_\xi+\gamma+\mu b_{\max})
\end{equation}
where $b_{\max}$ is the maximum of bias, in experiment, $b_i\equiv1/8$. Thus the theorem is proved.\hfill\(\square\)

\section{Sufficient Condition of Norm Constraints for Stable Global Equilibrium}\label{I}
\begin{theorem}[Sufficient Condition for Equilibrium Existence, Uniqueness and Asymptotic Stability]
\label{Thm:EandU3}
Consider the LieEDNN dynamics of Eq. (\ref{eq:LieEDNN3}), i.e. $
\dot{\boldsymbol{\xi}}
=
-\gamma\boldsymbol{\xi}
+\mu W\phi(\boldsymbol{\xi})
+\mu\boldsymbol b,
\ m=6N$. Each hyperbolic activation \(\phi_i\) is Lipschitz continuous with Lipschitz constant \(L_i\), we define $L_{\max}:=\max_{1\le i\le m}L_i=1$. By Thm. \ref{Thm:EandU} If:
\begin{equation}
\max_{1\le j\le m}\sum_{i=1}^{m}L_i|w_{ij}|<\frac{\gamma}{\mu},
\label{eq:weight constraint a}
\end{equation}
then the system has a unique equilibrium. A sufficient norm condition for this is:
\begin{equation}
\|W\|_1<\frac{\gamma}{\mu}.
\end{equation}
Moreover, if \(L_i\le 1\) and by Thm. \ref{Thm:EandU}, with condition on weight matrix:
\begin{equation}
\max_{1\le j\le m}\sum_{i=1}^{m}\bigl(|w_{ji}|+|w_{ij}|\bigr)<\frac{2\gamma}{\mu},
\label{eq:weight constraint b}
\end{equation}
the equilibrium is asymptotically stable. A sufficient norm condition for this is:
\begin{equation}
\|W\|_1+\|W\|_\infty<\frac{2\gamma}{\mu}.
\end{equation}
Further if $\|W\|_1, \|W\|_\infty<\gamma/\mu$, then equilibrium existence, uniqueness and asymptotic stability are established where we use conditions in Eq. (\ref{eq:weight constraint a}) and Eq. (\ref{eq:weight constraint b}).
\end{theorem}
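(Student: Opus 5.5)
The plan is to prove each of the three parts directly, using the norm bounds $\|W\|_1=\max_j\sum_i|w_{ij}|$ (maximum column sum) and $\|W\|_\infty=\max_i\sum_j|w_{ij}|$ (maximum row sum).

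\textbf{Existence and uniqueness.} Rather than invoking the Brouwer argument of Thm.~\ref{Thm:EandU2}, we show directly that $\boldsymbol{F}(\boldsymbol{v})=\frac{\mu}{\gamma}\bigl(W\phi(\boldsymbol{v})+\boldsymbol{b}\bigr)$ is a contraction in the $\ell_1$ norm. For $\boldsymbol{u},\boldsymbol{v}\in\mathbb{R}^m$,
\begin{equation*}
\|\boldsymbol{F}(\boldsymbol{u})-\boldsymbol{F}(\boldsymbol{v})\|_1\le\frac{\mu}{\gamma}\sum_{j}\Bigl(\sum_i|w_{ij}|\Bigr)L_j|u_j-v_j|.
\end{equation*}
The Lipschitz constant appearing here is that of $\phi_j$, since $w_{ij}$ multiplies $\phi_j(v_j)$. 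Because all $L_i$ are bounded by $L_{\max}=1$, the column-sum quantity in Eq.~(\ref{eq:weight constraint a}) dominates this factor. So Eq.~(\ref{eq:weight constraint a}) gives a contraction factor $q<1$. Banach's fixed point theorem then yields existence and uniqueness of the equilibrium at once, which is stronger than the existence-plus-contradiction route. The norm condition follows because $\max_j\sum_iL_i|w_{ij}|\le L_{\max}\|W\|_1=\|W\|_1$, so $\|W\|_1<\gamma/\mu$ suffices.

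\textbf{Asymptotic stability.} Shift coordinates to the equilibrium, setting $\boldsymbol{y}=\boldsymbol{\xi}-\boldsymbol{\xi}^*$. This gives
\begin{equation*}
\dot{\boldsymbol{y}}=-\gamma\boldsymbol{y}+\mu W\bigl(\phi(\boldsymbol{y}+\boldsymbol{\xi}^*)-\phi(\boldsymbol{\xi}^*)\bigr).
\end{equation*}
The increment $\psi_i(y_i)$ satisfies $|\psi_i(y_i)|\le L_i|y_i|\le|y_i|$, and we take the Lyapunov function $V=\frac{1}{2\gamma}\|\boldsymbol{y}\|_2^2$. This fixes a gap in App.~\ref{G}, which used $\boldsymbol{x}$ rather than the deviation from $\boldsymbol{\xi}^*$ and dropped the bias. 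Using $|y_jy_i|\le\frac12(y_i^2+y_j^2)$ and exchanging summation indices, we get
\begin{equation*}
\dot V\le\sum_j\Bigl[-1+\frac{\mu}{2\gamma}\sum_i(|w_{ji}|+|w_{ij}|)\Bigr]y_j^2.
\end{equation*}
Under Eq.~(\ref{eq:weight constraint b}) every bracket is at most $-c<0$, so $\dot V\le -c\|\boldsymbol{y}\|^2$. This gives global asymptotic, indeed exponential, stability. Since $\sum_i|w_{ji}|\le\|W\|_\infty$ and $\sum_i|w_{ij}|\le\|W\|_1$, the condition $\|W\|_1+\|W\|_\infty<2\gamma/\mu$ suffices.

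\textbf{Combined statement.} If $\|W\|_1<\gamma/\mu$ and $\|W\|_\infty<\gamma/\mu$, both sufficient conditions above hold, and the three conclusions follow.

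The main obstacle is keeping the bookkeeping honest in two places. In the uniqueness step, the column sums must carry the correct Lipschitz constant, which is handled by $L_{\max}=1$. In the stability step, the Lyapunov analysis must be done around $\boldsymbol{\xi}^*$ rather than the origin, so that the bias term cancels.
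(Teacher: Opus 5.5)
Your route is the paper's. You use Banach's fixed point theorem for $T(\boldsymbol{\xi})=\frac{\mu}{\gamma}\bigl(W\phi(\boldsymbol{\xi})+\boldsymbol b\bigr)$ in $\ell_1$, then a quadratic Lyapunov function in the error $\boldsymbol{\xi}-\boldsymbol{\xi}^*$, then bound column and row sums by $\|W\|_1$ and $\|W\|_\infty$. Your stability step is correct. It uses the pairwise bound $|y_iy_j|\le\frac12(y_i^2+y_j^2)$, which is the same estimate as the paper's symmetrization $|\boldsymbol e|^{\mathrm T}\frac{|W|+|W|^{\mathrm T}}{2}|\boldsymbol e|$ bounded by its maximal row sum, written coordinatewise.

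The gap is in the contraction step. You rightly note that the Lipschitz constant attaches to the column index, so the honest $\ell_1$ factor is $\frac{\mu}{\gamma}\max_j L_j\sum_i|w_{ij}|$. But your claim that the quantity $\max_j\sum_i L_i|w_{ij}|$ in Eq.~(\ref{eq:weight constraint a}) dominates it is false when the $L_i$ differ.

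Counterexample: take $m=2$, $\gamma=\mu=1$, $\phi_1=\tanh$ ($L_1=1$), $\phi_2=\epsilon\tanh$ ($L_2=\epsilon$), and let $w_{21}=K$ be the only nonzero entry of $W$. Then Eq.~(\ref{eq:weight constraint a}) reads $\epsilon K<1$. However, $\|T(\boldsymbol u)-T(\boldsymbol v)\|_1=K|\tanh u_1-\tanh v_1|\approx K|u_1-v_1|$ near the origin. So for $K=10$, $\epsilon=0.05$, the condition holds but $T$ is not an $\ell_1$-contraction. The paper's displayed inequality has the same index slip, so it cannot be cited to cover this.

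The repair is to change the norm, not the condition. With $\|\boldsymbol x\|_L:=\sum_i L_i|x_i|$,
\begin{equation*}
\|T(\boldsymbol u)-T(\boldsymbol v)\|_L\le\frac{\mu}{\gamma}\sum_j\Bigl(\sum_i L_i|w_{ij}|\Bigr)L_j|u_j-v_j|\le\frac{\mu}{\gamma}\Bigl(\max_j\sum_i L_i|w_{ij}|\Bigr)\|\boldsymbol u-\boldsymbol v\|_L .
\end{equation*}
So Eq.~(\ref{eq:weight constraint a}) is exactly a contraction factor below $1$ in this weighted norm, and Banach applies. If some $L_i=0$, replace it by $\max(L_i,\delta)$ for small $\delta>0$; strictness of the inequality survives.

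Alternatively, assume all $L_i=1$ (pure $\tanh$, as in the experiments), in which case the two quantities coincide. Your norm condition $\|W\|_1<\gamma/\mu$ also stands. Route it directly through $\max_j L_j\sum_i|w_{ij}|\le\|W\|_1$, or through the weighted norm, rather than through the false domination claim.
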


\textit{\textbf{Proof:}}
The equilibrium equation is $\boldsymbol{\xi}
=
\frac{\mu}{\gamma}W\phi(\boldsymbol{\xi})
+
\frac{\mu}{\gamma}\boldsymbol b$. Define:
\begin{equation}
T(\boldsymbol{\xi})
:=
\frac{\mu}{\gamma}W\phi(\boldsymbol{\xi})
+
\frac{\mu}{\gamma}\boldsymbol b
\end{equation}
For any \(\boldsymbol{\xi},\boldsymbol{\eta}\in\mathbb R^m\), by Eq. (\ref{eq:weight constraint a}) we have:
\begin{equation}
\|T(\boldsymbol{\xi})-T(\boldsymbol{\eta})\|_1
\le
\frac{\mu}{\gamma}
\max_{1\le j\le m}
\sum_{i=1}^{m}L_i|w_{ij}|
\,
\|\boldsymbol{\xi}-\boldsymbol{\eta}\|_1<\infty
\end{equation}
which makes \(T\) a contraction. By Banach's fixed point theorem, the equilibrium exists and is unique by the proof of Thm \ref{Thm:EandU2}. Since $L_{\max}\le1$ and
\begin{equation}
\max_j\sum_i L_i|w_{ij}|
\le
L_{\max}\|W\|_1
\end{equation}
thus the norm condition \(\|W\|_1<\gamma/\mu\) is sufficient.

Let \(\boldsymbol{\xi}^*\) be the equilibrium and set $\boldsymbol e(t):=\boldsymbol{\xi}(t)-\boldsymbol{\xi}^*$
Then
\begin{equation}
\dot{\boldsymbol e}
=
-\gamma\boldsymbol e
+
\mu W\bigl[\phi(\boldsymbol{\xi})-\phi(\boldsymbol{\xi}^*)\bigr].
\end{equation}
Take $V(\boldsymbol e):=\frac12\|\boldsymbol e\|_2^2$, using \(L_i\le1\), we obtain
\begin{equation}
\dot V
=
-\gamma\|\boldsymbol e\|_2^2
+
\mu \boldsymbol e^{\mathrm T}W\bigl[\phi(\boldsymbol{\xi})-\phi(\boldsymbol{\xi}^*)\bigr]
\le
-\gamma\|\boldsymbol e\|_2^2
+
\mu |\boldsymbol e|^{\mathrm T}|W||\boldsymbol e|.
\end{equation}
Since $
|\boldsymbol e|^{\mathrm T}|W||\boldsymbol e|
=
|\boldsymbol e|^{\mathrm T}\frac{|W|+|W|^{\mathrm T}}{2}|\boldsymbol e|$, with inequality $
|\boldsymbol e|^{\mathrm T}|W||\boldsymbol e|
\le
\frac12
\max_j\sum_i\bigl(|w_{ji}|+|w_{ij}|\bigr)
\|\boldsymbol e\|_2^2$. Therefore we have under Eq.~\eqref{eq:weight constraint 2}, the derivative of generalized energy function:
\begin{equation}
\dot V
\le
-\left[
\gamma
-
\frac{\mu}{2}
\max_j\sum_i\bigl(|w_{ji}|+|w_{ij}|\bigr)
\right]
\|\boldsymbol e\|_2^2
<0
\end{equation}
for \(\boldsymbol e\ne0\). Hence the equilibrium is asymptotically stable by Lyapunov theorem \cite{Lyapunov,LYAPUNOVtheory}. Finally, by definition of matrix norms $\|W\|_\infty:=\displaystyle\max_{1\le i\le6N}\Sigma_{j=1}^{6N}|w_{ij}|$ and $\|W\|_1:=\displaystyle\max_{1\le j\le6N}\Sigma_{i=1}^{6N}|w_{ij}|$:
\begin{equation}
\max_j\sum_i\bigl(|w_{ji}|+|w_{ij}|\bigr)
\le
\|W\|_1+\|W\|_\infty,
\end{equation}
so \(\|W\|_1+\|W\|_\infty<2\gamma/\mu\) is a sufficient norm condition for the establishments of equilibrium existence, uniqueness and asymptotic stability, so does $\|W\|_1, \|W\|_\infty<\gamma/\mu$ by Eq. (\ref{eq:weight constraint a}) and q. (\ref{eq:weight constraint b}).
\hfill\(\square\)

\section{Limitations and Future Works}\label{J}
\paragraph{Limitations} (i) The current LieEDNN framework is evaluated in a prototype telescopic manipulator planning task with $\mathrm{SE}(3)$ and $\mathfrak{se}(3)$. Although the experiments show that the learned dynamics can converge to the desired equilibrium and preserve the Lie group induced block structure under periodic projection, the setting is still limited in practical complexity. The present implementation does not yet include collision avoidance, joint limits, and safety constraints, which are necessary for the actual robotic applications. (ii) The stability requirement is given by sufficient conditions on the weight matrix, but how this constraints together with manifold projection will influence the convergence and stability of training in loss function optimization is not rigorously analyzed. (iii) Periodic metric projection preserves the geometric structure empirically, but it may introduce oscillations during training and requires careful selection of the projection periods. How does this approach perform in comparison with the Riemann gradient is not fully explored.
\paragraph{Future Works} Aiming to address the limitations mentioned above, the future research contents involve: (i) Impose practical robotic constraints into the LieEDNN framework, including collision avoidance, joint limits, and safety of trajectory generation for realistic robotic manipulation tasks. (ii) Rigorous theoretical analysis of the interaction between stability constraints, manifold projection, and optimization dynamics, particularly the convergence properties of the learning process under constrained Lie group structured parameter spaces. (iii) Investigation of alternative geometry preserving optimization approaches, such as Riemann gradient based methods, together with systematic comparison against periodic projection strategies in terms of convergence, stability, and computational efficiency. (iv) Extension and implementation of the proposed framework on Lie groups beyond $\mathrm{SE}(3)$, including more general matrix Lie groups arising in robotics, control systems, and geometric learning problems.
\section{Demonstration of Representative Blocks for Lie Group $\mathrm{SE}(3)$ Embedding}\label{K}
For system defined in Sec. \ref{sec:architecture}, i.e. network evolution of LieEDNN, we have the following two equivalent forms:
\begin{align}
&\frac{\text{d}{}}{\text{d}t}{\xi}_i=-\gamma{\xi}_i+\mu \sum_jw_{ij}\phi({\xi_i})+\mu{b}_i,\quad \xi_i\in\mathbb{R},\quad i=1,2,\dots6N\\
    \frac{\text{d}{}}{\text{d}t}{\boldsymbol{\xi}}_i&=-\gamma \boldsymbol{\xi}_i+\mu\sum_j \alpha_{ij}\mathcal{L}[\mathcal{A}d_{\theta_{ij}}]\phi(\boldsymbol{\xi_j})+\mu \boldsymbol{b_i},\quad \boldsymbol{\xi}_i\in\mathbb{R}^6\quad i=1,2,...,N
    \label{eq:Lie}
\end{align}
And the explicit expression of weight matrix has three equivalent forms as below:
\begin{align}
    W&=\left[\begin{matrix}
    {w}_{1,1}&{w}_{1,2}&\dots&{w}_{1,6N}\\
    {w}_{2,1}&{w}_{2,2}&\dots&{w}_{2,6n}\\
    \vdots&\vdots&\ddots&\vdots\\
    {w}_{6N,1}&{w}_{n2}&\dots&{w}_{6N,6N}
    \end{matrix}\right]=\left[\begin{matrix}
    W_{1,1}&W_{1,2}&\dots&W_{1,N}\\
    W_{2,1}&W_{2,2}&\dots&W_{2,N}\\
    \vdots&\vdots&\ddots&\vdots\\
    W_{N,1}&W_{N,2}&\dots&W_{N,N}
    \end{matrix}\right]\\[0.2cm]
    &=\left[\begin{matrix}
    \alpha_{1,1}\mathcal{L}[\mathcal{A}d_{\theta_{1,1}}]&\alpha_{1,2}\mathcal{L}[\mathcal{A}d_{\theta_{1,2}}]&\dots&\alpha_{1.N}\mathcal{L}[\mathcal{A}d_{\theta_{1,N}}]\\
    \alpha_{2,2}\mathcal{L}[\mathcal{A}d_{\theta_{2,2}}]&\alpha_{2,2}\mathcal{L}[\mathcal{A}d_{\theta_{2,2}}]&\dots&\alpha_{2,N}\mathcal{L}[\mathcal{A}d_{\theta_{2,N}}]\\
    \vdots&\vdots&\ddots&\vdots\\
    \alpha_{N,1}\mathcal{L}[\mathcal{A}d_{\theta_{N,1}}]&\alpha_{N,2}\mathcal{L}[\mathcal{A}d_{\theta_{N,2}}]&\dots&\alpha_{N,N}\mathcal{L}[\mathcal{A}d_{\theta_{N,N}}]
    \end{matrix}\right]
\label{compatible form}
\end{align}

\section{Decoding the Motion Trajectory for Manipulator by Manifold Retraction}\label{L}

The neural network dynamics are not decoded directly as joint states, but as a trajectory on the Lie algebra $\mathfrak{se}(3)$. Let $\xi(t)$ denote the network state at time $t$, and let $
\xi(t)=
\begin{bmatrix}
\omega(t),
v(t)
\end{bmatrix}^{\mathrm{T}}
\in\mathbb{R}^{6}$
be the decoded twist coordinate, where \(v(t)\in\mathbb{R}^{3}\) is the translational component, and \(\omega(t)\in\mathbb{R}^{3}\) is the rotational component. The corresponding Lie algebra element is defined by:
\begin{equation}
\widehat{\xi}(t)
=
\begin{bmatrix}
\omega^\wedge(t) & v(t)\\
O_{1\times 3} & 0
\end{bmatrix}
\in\mathfrak{se}(3)
\end{equation}
where $\mathcal{P}(\widehat{\xi}(t))=\xi(t)$. The manipulator pose is then obtained by retracting this algebra element onto the Lie group through the exponential map:
\begin{equation}
T(t)=
\mathrm{Exp}(\widehat{\xi}(t))
\in \mathrm{SE}(3).
\end{equation}
Writing \(\theta(t)=\|\omega(t)\|\) to be the rotation angle, the closed-form expression of exponential retraction is:
\begin{equation}
T(t)
=
\begin{bmatrix}
R(t) & p(t)\\
O_{1\times 3} & 1
\end{bmatrix},
\quad
R(t)=\mathrm{Exp}(\omega^\wedge(t)),
\quad
p(t)=J[\omega(t)]v(t)
\end{equation}
where \(J[\omega(t)]\) is the left Jacobian of \(\mathrm{SO}(3)\), given by
\begin{equation}
J(\omega)
=
\mathbb{I}_{3}
+
\frac{1-\cos\theta}{\theta^{2}}{\omega}^\wedge
+
\frac{\theta-\sin\theta}{\theta^{3}}[{\omega}^\wedge]^{2},
\quad
\theta=\|\omega\|
\end{equation}
For small \(\theta\), the same expression is evaluated by its Taylor expansion to avoid numerical singularity:
\begin{equation}
J(\omega)
=
\mathbb{I}_{3}
+\frac{1}{2}{\omega}^\wedge
+\frac{1}{6}[{\omega}^\wedge]^2
+O(\theta^{3})
\end{equation}
Therefore, the continuous-time neural dynamics generate a smooth curve \(\xi(t)\in\mathfrak{se}(3)\), while the exponential retraction produces a physically valid rigid-body motion \(T(t)\in\mathrm{SE}(3)\). For a telescopic manipulator, the translational part \(p(t)\) gives the decoded joint displacement, while the rotational part \(R(t)\) gives the decoded joint orientation. Since \(T(t)\) always lies in \(\mathrm{SE}(3)\), the decoded trajectory preserves the rigid-body constraint \(R(t)^{\mathrm{T}}R(t)=\mathbb{I}_{3}\) and \(\det [R(t)]=1\), instead of relying on an unconstrained Euclidean output followed by post hoc normalization. This follows the standard formulation of rigid-body motion and the exponential mapping from \(\mathfrak{se}(3)\) to \(\mathrm{SE}(3)\) used in geometric robotics \cite{Robot2,stateestimate,SE3tutorial}.

\section{Experimental Verification of LieEDNN}\label{M}
\paragraph{Learning Algorithm}We provide the algorithm for the learning process of LieEDNN on random targets using in experiments of Sec. \ref{sec:experiment} of main texts. All presetting and hyperparameters are also provided in the initialization part of Alg. \ref{alg.1}. The dataset of test targets are randomly generated to avoid statistical impact. \textbf{Compute Resources for Experiment} All experiments reported in this paper were conducted on a local personal computer (PC) laptop using CPU only. The experiments were run on a Mac laptop equipped with an Apple M1 chip and 8 GB unified memory. The reported experiments consist of several runs over different projection periods and ablation when manifold projection is canceled in training. The full set of experiments can be reproduced on a standard personal laptop with comparable CPU and memory resources. A complete training process takes three to five minutes without GPU acceleration. No additional large scale preliminary experiments requiring substantially larger compute resources for the reported results.\\[0.1cm]
\begin{algorithm}[H]
\caption{Projective Manifold Learning of LieEDNN}
\KwIn{Target $\boldsymbol{\xi}_d\in\mathbb{R}^{6N}$, obeying ${\xi}_d^i\thicksim\mathcal{U}(-0.8,0.8)$. Number of neurons$\ =N$.}
\KwOut{Trained weight matrix $W\in\mathbb{R}^{6N\times6N}$. Loss and accuracy curves about epochs.}
\textbf{Initialization:} \justifying Range of adaptive learning rates $\eta\in[0.002,0.2]$. Projection period $\mathscr{P}=10$.  Error tolerance $\tau=10^{-5}$. Bias $b=0.125\sim0.2$. Maximum epochs $T_{\max}=25000$. Network parameters $\mu=\gamma=1$. Lipschitz constant of activation $L_i=1$. Weights initialization $w_{ij}\sim\mathcal{U}(-1,1)$. Weights normalization through $W = W/(\varepsilon+\|W\|_1)$, where $\varepsilon=10^{-12}\ll 1$.
\BlankLine
\For(\tcp*[f]{Training Iteration Loop}){$k=1$ \KwTo $T_{\max}$}{
    $\frac{\mathrm{d}}{\mathrm{d}t}\boldsymbol{\xi}=-\gamma\boldsymbol{\xi}+ \mu\,{W}{\phi}(\boldsymbol{\xi})+\mu\boldsymbol{b}$\tcp*[f]{Solve by Runge-Kutta45 Method}\\[0.1cm]
    \hspace*{-0.15cm}$\boldsymbol{\delta} \leftarrow \boldsymbol{\xi}^*-\boldsymbol{\xi}_d$\tcp*[f]{Difference to Target and the Loss}\\ \hspace*{-0.15cm}$E\leftarrow\|\boldsymbol{\xi}^*-\boldsymbol{\xi}_d\|_2$\\
    \hspace*{-0.15cm}\For{$i=1,\dots,6N$}{
    \For{$j=1,\dots,6N$}{
    \vspace*{0.1cm}
    $S=\mathbb{I}_{6N}-\frac{\mu}{\gamma}W\cdot J_{\phi}(\boldsymbol{\xi^*})$\tcp*[f]{Calculate Sensitivity Matrix}\\
     \hspace*{-0.15cm}$w_{ij}^{(k+1)} \leftarrow w_{ij}^{(k)}-\eta\frac{\mu\phi(\xi_j^*)}{\gamma E}\boldsymbol{\delta}^{\mathrm{T}}S^{-1}\boldsymbol{e}_i$\tcp*[f]{Gradient Descent Eq.(\ref{eq:gradient})}
    }
  }
  \If(\tcp*[f]{Periodic Manifold Projection}){$k\equiv0\ (mod\ \mathscr{P})$}{
  \For{$i=1,\dots,N$}{
    \For{$j=1,\dots,N$}{
  $\mathcal{L}[\Ad_{\theta_{ij}}]\leftarrow\mathrm{Proj}_\mathscr{L}(W_{ij}/\alpha_{ij})$\tcp*[f]{Block-wise projection Eq.(\ref{eq:metric projection})}\\\hspace*{-0.15cm}$W_{ij}=\alpha_{ij}\mathcal{L}[\Ad_{\theta_{ij}}]$\tcp*[f]{Reconstruct Representation Eq.(\ref{eq:LieEDNN2})}
        }
    }  
  }
  \If(\tcp*[f]{Training Stop Criteria}){$\|\boldsymbol{\delta}\|_{\infty}<\tau$ $\boldsymbol{\mathrm{and}}$ $\mathscr{P|}k$ $\boldsymbol{\mathrm{and}}$ Accuracy=1.0}{\textbf{break}\;
    }
}
\label{alg.1}
\end{algorithm} 
\section{Group Equivariance of LieEDNN}\label{N}
\paragraph{Sufficient Condition for Group Equivariance} This appendix discusses a sufficient modification under which LieEDNN becomes equivariant with respect to a Lie group action. Let $G$ be matrix Lie group adjoint action, let $\mathfrak{g}$ be its Lie algebra. It should be noted that a component-wise scalar activation, such as applying hyperbolic tangent \(\tanh(\cdot)\) to each coordinate of \(X\), is generally not equivariant under this action. In general,
\begin{equation}
\tanh(g\circ X\circ g^{-1}) \neq g\circ\tanh(X)\circ g^{-1}
\end{equation}
Therefore, the standard component-wise activation does not by itself guarantee Lie group equivariance. A sufficient way to obtain equivariance is to replace the component-wise activation by equivariant function. For every \(g\in G\), and adjoint action $\mathcal{A}d_g(\cdot):=g(\cdot)g^{-1}$. One has the following equation as a condition.
\begin{equation}
\phi(\mathcal{A}d_g\circ X)
=
\phi(g\circ X\circ g^{-1})
=
g\circ\phi(X)\circ g^{-1}
=
\mathcal{A}d_g\circ\phi(X).
\end{equation}
Thus, \(\phi\) is equivariant under the adjoint action. Before we give a activation satisfying this condition, we firstly elaborate why the dynamics of LieEDNN is equivariant under $\mathcal{A}d_g(\cdot)$. Consider the LieEDNN dynamics
\begin{equation}
\frac{\mathrm{d}}{\mathrm{d}t}\boldsymbol{\xi}_i=-\gamma\boldsymbol{\xi}_i+\mu\sum_{j}\alpha_{ij}\mathcal{L}[\mathcal{A}d_{\theta_{ij}}]\phi(\boldsymbol{\xi}_j)+\mu \boldsymbol{b}_i,\quad i=1,2,...,N
\end{equation}
With Lie group action on the Lie group weights, the weights under new transformed references are
\begin{align}
    \theta_{ij}\rightsquigarrow\tilde{\theta}_{ij}:=&h\tilde{\theta}_{ij} h^{-1}\ \Longrightarrow\ \mathcal{L}[\mathcal{A}d_{\tilde{\theta}_{ij}}]=\mathcal{L}[\mathcal{A}d_h]\mathcal{L}[\mathcal{A}d_{{\theta}_{ij}}]\mathcal{L}[\mathcal{A}d_{h^{-1}}]
\end{align}
and Lie group action on biases, the biases under new transformed references are
\begin{align}
\boldsymbol{b}_i\rightsquigarrow \tilde{\boldsymbol{b}}_i:=h\circ\boldsymbol{b}_i\circ h^{-1}=\mathcal{L}[\mathcal{A}d_h]\boldsymbol{b}_i
\end{align}
The dynamics under Lie group action is expressed by
\begin{align}
    \frac{\mathrm{d}}{\mathrm{d}t}\tilde{\boldsymbol{\xi}}_i&=-\gamma\tilde{\boldsymbol{\xi}}_i+\mu\sum_{j}\alpha_{ij}\mathcal{L}[\mathcal{A}d_{\tilde{\theta}_{ij}}]\phi(\tilde{\boldsymbol{\xi}}_j)+\mu \tilde{\boldsymbol{b}}_i\\
    &=-\gamma\mathcal{L}[\mathcal{A}d_h]\boldsymbol{\xi}_i+\mu\sum_{j}\alpha_{ij}\mathcal{L}[\mathcal{A}d_{h\theta_{ij}h^{-1}}]\phi(\mathcal{L}[\mathcal{A}d_h]\boldsymbol{\xi}_j)+\mu \mathcal{L}[\mathcal{A}d_h]\boldsymbol{b}_i\\
    &=-\gamma\mathcal{L}[\mathcal{A}d_h]\boldsymbol{\xi}_i+\mu\mathcal{L}[\mathcal{A}d_h]\sum_{j}\alpha_{ij}\mathcal{L}[\mathcal{A}d_{{\theta}_{ij}}]\phi(\boldsymbol{\xi}_j)+\mu \mathcal{L}[\mathcal{A}d_h]\boldsymbol{b}_i\\
    &=\mathcal{L}[\mathcal{A}d_h]{\large(}\!-\!\gamma\boldsymbol{\xi}_i+\mu\sum_{j}\alpha_{ij}\mathcal{L}[\mathcal{A}d_{{\theta}_{ij}}]\phi(\boldsymbol{\xi}_j)+\mu\boldsymbol{b}_i{\large)}\\
    &=\mathcal{L}[\mathcal{A}d_h]\frac{\mathrm{d}}{\mathrm{d}t}{\boldsymbol{\xi}}_i
\end{align}
With the same action on the initialization
\begin{equation}
    \tilde{\boldsymbol{\xi}}_i(0):=\mathcal{L}[\mathcal{A}d_h]{\boldsymbol{\xi}}_i(0)
\end{equation}
we have
\begin{equation}
    \tilde{\boldsymbol{\xi}}_i(t)=\tilde{\boldsymbol{\xi}}_i(0)+\int_0^t\frac{\mathrm{d}}{\mathrm{d}t}\tilde{\boldsymbol{\xi}}_i(\tau)\mathrm{d}\tau=\mathcal{L}[\mathcal{A}d_h]\left\{{\boldsymbol{\xi}}_i(0)+\int_0^t\frac{\mathrm{d}}{\mathrm{d}t}{\boldsymbol{\xi}}_i(\tau)\mathrm{d}\tau\right\}=\mathcal{L}[\mathcal{A}d_h]{\boldsymbol{\xi}}_i(t)
\end{equation}
A satisfactory choice of activation for $\mathfrak{se}(3)$ is:
\begin{equation}
    \Phi(X)=\frac{\mathrm{tanh}(\|\boldsymbol{\omega}\|)}{\|\boldsymbol{\omega}\|+\epsilon}X
\end{equation}
For general matrix Lie group and Lie algebra, consider:
\begin{align}
    \text{tr}\left[\left(\mathcal{A}d_g\circ X\right)^2\right]&=\text{tr}\left[\left(g\circ \!X\!\circ g^{-1}\right)^2\right]\\
    &=\text{tr}\left[\left(g\circ \!X\!\circ g^{-1}g\circ \!X\!\circ g^{-1}\right)\right]\\
    &=\text{tr}\left[\left(g\circ \!X^2\!\circ g^{-1}\right)\right]\\
    &=\text{tr}\left[\left(X^2\!\circ g^{-1}g\right)\right]\\
    &=\text{tr}[X^2]
\end{align}
Thus $\text{tr}(X^2)$ is a equivariant function of $X$ under adjoint action $\mathcal{A}d_g$, where $g\in G$ is a fixed Lie group element. Notice that for $X\in \mathfrak{se}(3)$, there is
\begin{equation}
X=\left[\begin{matrix}
[\boldsymbol{\omega}]^\wedge & \boldsymbol{v}\\
O_{1\times 3} & 0 
\end{matrix}\right],\quad X^2=\left[\begin{matrix}
{[\boldsymbol{\omega}]^\wedge}^2 & [\boldsymbol{\omega}]^\wedge\boldsymbol{v}\\
O_{1\times 3} & 0 
\end{matrix}\right]
\end{equation}
and
\begin{equation}
\text{tr}(X^2)=\text{tr}({[\boldsymbol{\omega}]^\wedge}^2)=\text{tr} \left( \left[ \begin{matrix} - \omega _{3}^{2} - \omega _{2}^{2} & \omega _{1} \omega _{2} & \omega _{3} \omega _{1} \\ \omega _{1} \omega _{2} & - \omega _{3}^{2} - \omega _{1}^{2} & \omega _{3} \omega _{2} \\ \omega _{3} \omega _{1} & \omega _{2} \omega _{3} & - \omega _{2}^{2} - \omega _{1}^{2} \end{matrix} \right] \right)=-2\|\boldsymbol{\omega}\|^2
\end{equation}
Thus the proposed function $\Phi$ is a function of $\text{tr}(X^2)$ therefore is also equivariant under Lie group adjoint action. By the following relation, the sufficient condition of Lie group equivariance for LieEDNN is satisfied. 
\begin{equation}
    \Phi(X)=\frac{\mathrm{tanh}\left(-\sqrt{\text{tr}(X^2)}/2\right)}{-\sqrt{\text{tr}(X^2)}/2+\epsilon}X=\Phi\left(\text{tr}(X^2)\right),\quad \Phi(\mathcal{A}d_g\circ X)=\mathcal{A}d_g\circ\Phi(X) 
\end{equation}

This result is significant because equivariance encodes the symmetry of the underlying geometric space directly into the neural dynamics. Instead of learning equivalent behaviours separately under different coordinate frames, the model preserves the transformation law by construction. This is consistent with the principle of group equivariant neural networks, where symmetry-aware architectures exploit known group structure to improve generalization and reduce redundant learning \cite{cohen2016group}. It is also aligned with the geometric deep learning perspective, where equivariance provides a general mechanism for incorporating geometric priors into neural architectures \cite{bronstein2021geometric}. For robotic motion planning on Lie groups, such as \(\mathrm{SE}(3)\), adjoint equivariance is particularly meaningful because rigid-body motions and changes of coordinate frames are naturally described by Lie group actions. Hence, an equivariant LieEDNN variant can improve geometric consistency, interpretability, and structural reliability of the learned dynamics. Related Liegroup equivariant architectures, such as LieConv, further support the value of imposing Lie group equivariance when learning on continuous geometric and dynamical data \cite{finzi2020generalizing}.
%%%%%%%%%%%%%%%%%%%%%%%%%%%%%%%%%%%%%%%%%%%%%%%%%%%%%%%%%%%%

% \newpage
% \input{checklist.tex}

\end{document}